\newcommand{\captionfonts}{\normalsize}
\long\def\@makecaption#1#2{%
  \vskip\abovecaptionskip
  \sbox\@tempboxa{{\captionfonts #1: #2}}%
  \ifdim \wd\@tempboxa >\hsize
    {\captionfonts #1: #2\par}
  \else
    \hbox to\hsize{\hfil\box\@tempboxa\hfil}%
  \fi
  \vskip\belowcaptionskip}
\newtheorem{thm}{Theorem}
\newtheorem{lem}{Lemma}
\newtheorem{dfn}{Definition}
\newtheorem{prp}{Proposition}
\newtheorem{assm}{Assumption}
\newtheorem*{rmk}{Remark}
\newtheorem{rmk-2}{Remark}
\newtheorem{rmk-3}{Remark}
\newtheorem{rmk-4}{Remark}
\newtheorem{rmk-5}{Remark}
\newtheorem{rmk-6}{Remark}
\newtheorem{rmk-7}{Remark}
\newtheorem{rmk-8}{Remark}
\newtheorem{cl}{Corollary}
\begin{document}
\hspace{13.9cm}

\ \vspace{20mm}\\

{\LARGE \flushleft On a Mechanism Framework of Autoencoders}

\ \\
{\bf \large Changcun Huang}\\
%{Shuitu Institute of Applied Mathematics.}\\
{cchuang@mail.ustc.edu.cn}\\
%

%\ \\[-2mm]
%{\bf Keywords:}

\thispagestyle{empty}
\markboth{}{NC instructions}
\ \vspace{-0mm}\\
%
%Abstract
\begin{center} {\bf Abstract} \end{center}
This paper proposes a theoretical framework on the mechanism of autoencoders. To the encoder part, under the main use of dimensionality reduction, we investigate its two fundamental properties: bijective maps and data disentangling. The general construction methods of an encoder that satisfies either or both of the above two properties are given. The generalization mechanism of autoencoders is modeled. Based on the theoretical framework above, we explain some experimental results of variational autoencoders, denoising autoencoders, and linear-unit autoencoders, with emphasis on the interpretation of the lower-dimensional representation of data via encoders; and the mechanism of image restoration through autoencoders is natural to be understood by those explanations. Compared to PCA and decision trees, the advantages of (generalized) autoencoders on dimensionality reduction and classification are demonstrated, respectively. Convolutional neural networks and randomly weighted neural networks are also interpreted by this framework.

%%%%%%%%%%%
{\bf Keywords: } Autoencoder, bijective map, data disentangling, generalization, experiment explanation

\section{Introduction}
The curse of dimensionality is that the number of parameters or data points required for the learning task grows exponentially with respect to the dimensionality of the data. An autoencoder \citep*{Hinton2006} has a close relationship with this issue, because its encoder part could reduce the dimensionality and give the higher-dimensional data a lower-dimensional representation.

A convolutional neural network \citep*{LeCun1989,LeCun1998,Krizhevsky2017} can be regarded as a type of autoencoder. As will be proved in this paper, its subnetwork except for the last several fully connected layers is equivalent to an encoder, and the dimensionality of its output is usually much less than that of the input space. The architecture of convolutional neural networks has nowadays become one of the main streams of deep learning \citep*{LeCun2015}.

Since an autoencoder has been embedded in universal network architectures in terms of unsupervised pretraining of parameters \citep*{Bengio2006, Bengio2009}, its mechanism is also related to the interpretation of general deep neural networks. Thus, the principle of autoencoders is one of the key points of the theory of deep learning.

This paper will provide a theoretical framework of autoencoders, with a focus on the lower-dimensional representation of data through encoders. As applications of our theories, we will explain some experimental results of several common autoencoders applied in practice.

\subsection{Research Methodology}
Due to the complex behavior of autoencoders, early studies mainly concentrated on experimental methods. The scholars provided many intuitive insights on the underlying principle of experimental results, such as \citet*{Bengio2006}, \citet*{Rifai2011}, and \citet*{ Vincent2010}, contributing to various successful applications, as well as a rich source of ideas for further theoretical investigations.

However, the experimental analyses have its intrinsic shortcomings, whose ambiguous description of the concept or thought may not be precise enough to model the problem. And only under the methodology of deductive systems (such as Euclidean geometry) through logical reasoning, could various conclusions be combined to yield new ones in a more accurate and structural way, such that the knowledge database can be built as large as possible, without losing precision simultaneously.

As \citet*{Huang2022}, this paper will continue to use the deductive way to develop theories. There had been some theoretical studies of autoencoders, such as \citet*{Baldi1989} and \citet*{Bourlard1988}; however, they are mainly for the autoencoders in the early days, and are not much related to the contemporary cases.

\subsection{Arrangements and Contributions}
The next two sections aim to construct a bijective map via encoders. Section 2 provides some basic concepts, notations and principles that will be used throughout this paper, as well as some elementary results about bijective maps. The concept of data disentangling is introduced in definition 9, with an example given.

Section 3 proposes a general method called ``discriminating-hyperplane way'' to construct a bijective map through encoders, by which the dimensionality of the encoding space could be an arbitrary integer less than that of the input space (theorem 5). Linear-unit networks (theorem 6) and randomly weighted neural networks (theorem 7) are investigated.

Section 4 studies the disentangling property of encoders. We will present a construction method to make the input data disentangled in the output layer of an encoder (theorem 8). The comparisons with PCA (propositions 3 and 4) and decision trees (proposition 5) are given in sections 4.2 and 4.3, respectively. The relationship of autoencoders to convolutional neural networks is discussed in section 4.4 (lemma 9 and theorem 10).

Section 5 is the modeling of the generalization mechanism of autoencoders. The concepts of local (and nonlocal) generalization (definition 13), overlapping (and non-overlapping) generalization (definition 14), minor-feature space of a map (definition 16), and so on, along with some associated basic principles, are presented.

The following two sections explain some experimental results of several common autoencoders. Section 6 provides a basic principle (theorem 13) and deals with the linear-unit autoencoder (proposition 7) and denoising autoencoder (theorem 14).

Section 7 is for variational autoencoders. The key point is the interpretation of the random disturbance to the encoder (theorem 15) and the decoder (theorem 16). We will also explain the mechanism of image restoration via autoencoders (proposition 8). Section 8 is the summary of this paper by a discussion.

\subsection{Several Notes}
This paper will directly borrow some notations, definitions, and notes from \citet*{Huang2022}. The notes of \citet*{Huang2022} can be found in its section 2.4. What follows except for the last two items are all from \citet*{Huang2022}.

\begin{itemize}
\item[1.] Definitions 1 and 2: the notations of network architectures, such as $n^{(1)}m^{(1)}k^{(1)}$.
\item[2.] Note 2: the notations associated with a hyperplane derived from a unit, such as $l^0$, $l^+$ and $l_1^+l_2^+$.
\item[3.] Definitions 4 and 5: the concepts of the activation of units.
\item[4.] Note 3: the index of a layer of neural networks, with the hidden layers starting from $1$.
\item[5.] Definition 11: the concept of distinguishable data sets.
\item[6.] Note 1: the hypothesis of the rank of a matrix.
\item[7.] Note 6: the finite cardinality of data set $D$.
\item[8.] Definition 10: the concept of having the same classification effect with respect to hyperplanes.
\item[9.] The default probabilistic model is the one from the appendix of \citet*{Huang2022}.
\item[10.] Definition 16: the concepts of open and closed convex polytopes.
\item[11.] If the network of a decoder is used for reconstruction instead of classification, we assume that its output layer is composed of linear units.
\item[12.] The units of the hidden layers of a neural network are assumed be a ReLU type \citep*{Nair2010,Glorot2011}, unless otherwise stated in sections 5.3, 6 and 7.

\end{itemize}

\section{Encoder for Bijective Maps}
\begin{dfn}
The typical network architecture of an autoencoder was illustrated in Figure 1 of \citet{Hinton2006}, whose general form can be expressed as
\begin{equation}
\mathcal{A} := m^{(1)}\prod_{i=1}^{d}n_{i}^{(1)}n_e^{(1)}\prod_{j=1}^{d'}M_{j}^{(1)}m^{(1)},
\end{equation}
where $m > n_{1}$ and $n_e < M_1$, in which the encoder is
\begin{equation}
\mathcal{E} := m^{(1)}\prod_{i=1}^{d}n_{i}^{(1)}n_e^{(1)},
\end{equation}
where $n_{i+1} < n_{i}$ for $i = 1, 2, \cdots, d-1$ and $n_e < n_d$, while the decoder is
\begin{equation}
\mathcal{D} := n_e^{(1)}\prod_{j=1}^{d'}M_{j}^{(1)}m^{(1)},
\end{equation}
where $M_{j+1} > M_{j}$ for $j = 1, 2, \cdots, d'-1$. The $d+1$th layer with $n_e$ units is called the encoding layer, and the corresponding space is called the encoding space, whose dimensionality is $n_e$.
\end{dfn}

Corollary 12 of \citet*{Huang2022} pointed out that if the encoder can realize a bijective map for data set $D$ of the input space, then any element of $D$ could be reconstructed by the decoder. Thus, the bijective map is a key point of autoencoders and will be discussed in detail by sections 2 and 3.

\subsection{Geometric Knowledge}
Because some geometric concepts or conclusions of a higher-dimensional space are not as intuitive as those of two or three-dimensional space, we reformulate them here in algebraic ways.

\begin{dfn}
Let $l_1$ and $l_2$ be two lines embedded in m-dimensional space for $m \ge 2$, whose parametric equations are $\boldsymbol{x} = \boldsymbol{x}_0 + t\boldsymbol{\lambda}$ and $\boldsymbol{x} = \boldsymbol{x}_0' + t\boldsymbol{\lambda}'$, respectively. If $l_1 \cap l_2 = \emptyset$ and $\boldsymbol{\lambda} = \alpha \boldsymbol{\lambda}'$ with $\alpha$ as a constant, we say that $l_1 \parallel l_2$, which is read as ``$l_1$ is parallel to $l_2$''.
\end{dfn}

\begin{dfn}
Let $l_{m}$ be a $m-1$-dimensional hyperplane of $m$-dimensional space $\mathbb{R}^m$ for $m \ge 2$, and let $l_n \subset \mathbb{R}^m$ be an $n$-dimensional hyperplane with $n \le m-1$. If $l_n \cap l_m = \emptyset$, we say that $l_n$ and $l_m$ are parallel to each other, denoted by $l_n \parallel l_m$.
\end{dfn}

\begin{lem}
Suppose that $l_i$ for $i = 1, 2, \cdots, n$ is a $m-1$-dimensional hyperplane of $m$-dimensional space with $n \ge 2$ and $m > n$. If $l_i$'s are not parallel to each other, then their intersection $l = \bigcap_{i=1}^nl_i$ is a $m-n$-dimensional hyperplane.
\end{lem}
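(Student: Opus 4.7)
The plan is to prove the claim by induction on the number $n$ of hyperplanes. I will write each $l_i$ in normal form $l_i = \{\boldsymbol{x} \in \mathbb{R}^m : \boldsymbol{a}_i^{\top}\boldsymbol{x} = b_i\}$ with $\boldsymbol{a}_i \neq \boldsymbol{0}$, so the intersection $l = \bigcap_{i=1}^{n} l_i$ becomes the solution set of the linear system $A\boldsymbol{x} = \boldsymbol{b}$, where $A \in \mathbb{R}^{n\times m}$ has rows $\boldsymbol{a}_1^{\top},\ldots,\boldsymbol{a}_n^{\top}$. The target dimension $m-n$ then reduces to showing that the system is consistent and that $\mathrm{rank}(A) = n$.

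For the base case $n=2$, Definition 3 (applied between the two equal-dimensional hyperplanes) gives $l_1 \cap l_2 \neq \emptyset$, so consistency holds. Non-parallelism also forces $\boldsymbol{a}_1,\boldsymbol{a}_2$ to be linearly independent, since otherwise $\boldsymbol{a}_2 = \alpha \boldsymbol{a}_1$ and the two equations would either describe the same hyperplane or an empty translate, contradicting Definition 2 together with the hypothesis. Hence $\mathrm{rank}(A)=2$ and $\dim(l_1\cap l_2)=m-2$.

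For the inductive step, set $l' := \bigcap_{i=1}^{n-1} l_i$, which by the hypothesis is $(m-n+1)$-dimensional (with $m-n+1\geq 2$ since $m>n$). Parametrize $l' = \boldsymbol{x}_0' + \mathrm{span}\{\boldsymbol{v}_1,\ldots,\boldsymbol{v}_{m-n+1}\}$ and substitute into the defining equation of $l_n$ to reduce $l'\cap l_n$ to the single scalar equation $\sum_j t_j(\boldsymbol{a}_n^{\top}\boldsymbol{v}_j) = b_n - \boldsymbol{a}_n^{\top}\boldsymbol{x}_0'$ in the parameters $t_j$. Non-parallelism of $l_n$ with $l'$ (in the general-dimension sense of Definition 3) excludes the inconsistent degenerate case where every $\boldsymbol{a}_n^{\top}\boldsymbol{v}_j$ vanishes while the right-hand side is nonzero; and the fact that $l_n$ genuinely adds a new linear constraint excludes the trivial case $l_n\supseteq l'$. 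Therefore at least one $\boldsymbol{a}_n^{\top}\boldsymbol{v}_j$ is nonzero, the parameter equation cuts one dimension, and $\dim(l'\cap l_n) = (m-n+1)-1 = m-n$, closing the induction.

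The main obstacle, I expect, is pinning down the precise meaning of the phrase ``$l_i$'s are not parallel to each other''. Pairwise non-parallelism alone is insufficient: three planes through a common line in $\mathbb{R}^3$ are pairwise non-parallel yet their triple intersection is one-dimensional rather than $0$-dimensional. So the hypothesis must be read in the stronger general-position sense that at every stage of the induction, $l_n$ is not parallel to, and not contained in, the partial intersection $\bigcap_{i<n}l_i$. Definition 3 already accommodates this reading because it allows parallelism between hyperplanes of differing dimensions, and under this reading the induction above produces the claimed $(m-n)$-dimensional hyperplane.
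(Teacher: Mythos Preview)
Your proof is correct and follows essentially the same inductive strategy as the paper: intersect the hyperplanes one at a time and argue that each new intersection drops the dimension by exactly one, the paper outsourcing this step to Proposition~5 of \citet{Huang2022} (which requires precisely the side conditions $l'\cap l_n\neq\emptyset$ and $l'\nsubseteq l_n$ that your final paragraph isolates). Your observation that pairwise non-parallelism alone is insufficient, and that the hypothesis must be read in the stronger general-position sense, is well taken and matches what the paper's cited proposition implicitly assumes; your $\mathbb{R}^3$ illustration technically violates the hypothesis $m>n$, but the analogous configuration of three $3$-hyperplanes sharing a $2$-plane in $\mathbb{R}^4$ makes the same point within the lemma's scope.
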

\begin{proof}
The proof is by proposition 5 of \citet*{Huang2022}, where the method of determining the dimensionality of the intersection of two hyperplanes was given. The intersection of a $k_1$-dimensional hyperplane $l_{k_1}$ and a $k_2$-dimensional hyperplane $l_{k_2}$ with $k_1 \ge k_2$ is $k_2 - 1$, provided that $l_{k_1} \cap l_{k_2} \ne \emptyset $ and $l_{k_2} \nsubseteq l_{k_1}$. Apply this conclusion to $l_i$'s for $n-1$ times and this lemma is proved. For example, when $m = 3$ and $n = 2$, the intersection of two planes is a line whose dimensionality is $m - n = 1$.
\end{proof}

\begin{dfn}
To point $\boldsymbol{x}_0$ and $m-1$-dimensional hyperplane $l$ whose equation is $w^T\boldsymbol{x} + b = 0$ of $m$-dimensional space $\mathbb{R}^m$, the output of $l$ with respect to $\boldsymbol{x}_0$ is defined as $z = \sigma(w^T\boldsymbol{x}_0 + b)$, where
\begin{equation*}
\sigma(s) = \max\{0, s\}
\end{equation*}
is the activation function of a ReLU; and $z' = w^T\boldsymbol{x}_0 + b$ is called the original output of $l$. To the latter case, we use $l^+$ and $l^-$ to represent the two parts of $\mathbb{R}^m$ separated by $l$, corresponding to the positive and negative original output of $l$, respectively.
\end{dfn}

\begin{lem}
Under the notations of definition 3, if $l_n \parallel l_m$, the outputs of $l_m$ with respect to all the points of $l_n$ are equal to a constant. And $l_m$ can be translated to a position such that the shifted hyperplane $l_m'$ satisfies $l_n \subset l_m'$.
\end{lem}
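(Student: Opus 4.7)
The plan is to exploit the affine-linear structure of $f(\boldsymbol{x}) = w^T \boldsymbol{x} + b$ together with the parallelism hypothesis $l_n \cap l_m = \emptyset$. The entire argument reduces to observing that the restriction $f|_{l_n}$ is itself an affine function on the affine subspace $l_n$, and a non-constant affine function on an affine subspace necessarily attains every real value, in particular $0$.

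The first step I would carry out is to parametrize $l_n$ as $\boldsymbol{x} = \boldsymbol{x}_0 + \sum_{k=1}^{n} t_k \boldsymbol{v}_k$ with $\boldsymbol{x}_0 \in l_n$ and $\{\boldsymbol{v}_k\}$ a basis of the associated $n$-dimensional linear subspace, and substitute into $f$. This yields
\begin{equation*}
f(\boldsymbol{x}_0 + \textstyle\sum_{k}t_k \boldsymbol{v}_k) = (w^T\boldsymbol{x}_0+b) + \sum_{k=1}^{n} t_k (w^T \boldsymbol{v}_k).
\end{equation*}
If any coefficient $w^T \boldsymbol{v}_k$ were nonzero, then varying $t_k$ over $\mathbb{R}$ while fixing the remaining parameters would make $f$ hit $0$, producing a point of $l_n$ lying on $l_m$ and contradicting $l_n \parallel l_m$. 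Hence all $w^T \boldsymbol{v}_k = 0$, which means $f$ is constantly equal to $c := w^T\boldsymbol{x}_0 + b$ on $l_n$. Applying $\sigma$ then yields $\sigma(c)$ uniformly over $l_n$, which is the first assertion.

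For the second assertion I would simply construct the translate $l_m'$ by shifting the offset: define $l_m'$ by the equation $w^T\boldsymbol{x} + (b - c) = 0$. Geometrically this is a translation of $l_m$ along its normal direction $w$, so $l_m'$ is a legitimate translate of $l_m$. For any $\boldsymbol{x} \in l_n$, the first part gives $w^T\boldsymbol{x}+b = c$, hence $w^T\boldsymbol{x} + (b-c) = 0$, so $\boldsymbol{x} \in l_m'$. This proves $l_n \subset l_m'$.

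I do not anticipate a genuine obstacle here; the only point requiring any care is the rigorous justification that a non-constant affine function on an affine subspace must attain the value $0$, which is immediate from solving a single linear equation in one of the parameters $t_k$. The rest is a direct computation with the chosen parametrization and the definitions of parallelism and output given in definitions 2, 3 and 4.
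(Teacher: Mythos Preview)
Your proposal is correct and is essentially identical to the paper's own proof: both parametrize $l_n$, substitute into $w^T\boldsymbol{x}+b$, argue that any nonzero coefficient $w^T\boldsymbol{v}_k$ would force an intersection with $l_m$, conclude constancy, and then obtain $l_m'$ by shifting the bias term. There is no substantive difference in method or level of detail.
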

\begin{proof}
Write the equation of $l_m$ as $\boldsymbol{w}^T\boldsymbol{x} + b = 0$, and the parametric equation of $l_n$ as $\boldsymbol{x} = \boldsymbol{x}_0 + \sum_{i=1}^{n}t_i\boldsymbol{\lambda}_i$. Substituting $l_n$ into $l_m$ gives
\begin{equation}
\sum_{i=1}^{n}\boldsymbol{w}^T\boldsymbol{\lambda}_it_i + b_0 = 0,
\end{equation}
where $b_0 = \boldsymbol{w}^T\boldsymbol{x}_0 + b$, which is not zero because $\boldsymbol{x}_0 \notin l_m$. In equation 2.4, $\boldsymbol{w}^T\boldsymbol{\lambda}_i = 0$ must hold for all $i$; otherwise, we could find a solution for some $t_i$, which means that $l_m \cap l_n \ne \emptyset$, contradicting the condition of this lemma.

Therefore, to any point $\boldsymbol{x}' \in l_n$, the output of $l_m$ is a constant $\sigma(\boldsymbol{w}^T\boldsymbol{x}' + b) = \sigma(b_0)$. If $l_m$ is translated to $l_m'$ whose equation is $\boldsymbol{w}^T\boldsymbol{x} + b' = 0$ with $b' = -b_0$, then $\boldsymbol{w}^T\boldsymbol{x}' + b' = 0$, that is, $l_n \subset l_m'$.
\end{proof}

\subsection{Basic Principles}
\begin{thm}
To network $m^{(1)}n^{(1)}$ for $m > n$, let $l_i$ for $i = 1, 2, \cdots, n$ be the hyperplane corresponding to the $i$th unit of the first layer, with $l_i$'s not parallel to each other. Let $D \subset \prod_{i=1}^{n} l_i^+$ be a data set of the $m$-dimensional input space, and let $l = \bigcap_{i=1}^nl_i$. If $D \subset l_D$ and $l_D \parallel l$, where $l_D$ is a $k$-dimensional hyperplane with $k \le m-n$, then the elements of $D$ will be mapped to a single point by the first layer of $m^{(1)}n^{(1)}$, and vice versa.
\end{thm}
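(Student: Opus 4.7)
The plan is to exploit that $D \subset \prod_{i=1}^n l_i^+$ puts every ReLU in its linear regime on $D$: for every $x \in D$ and every $i$, $\sigma(w_i^T x + b_i) = w_i^T x + b_i$. Hence ``$D$ maps to a single point under the first layer'' is equivalent to ``each affine functional $x \mapsto w_i^T x + b_i$ is constant on $D$ for $i = 1,\dots,n$.'' Both directions of the theorem then reduce to a question about when these $n$ affine functionals are constant on a prescribed affine subspace containing $D$.

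For the forward direction, I would first argue that $l_D \parallel l$ forces the direction space of $l_D$ to sit inside the direction space of $l$. By Lemma 1 the $l_i$'s being pairwise non-parallel makes the $w_i$'s linearly independent, and the direction space of $l = \bigcap_i l_i$ is exactly their common null space, of dimension $m-n$. Writing $l_D$ parametrically as $\boldsymbol{x} = \boldsymbol{x}_0 + \sum_{j=1}^k t_j \boldsymbol{\lambda}_j$ with each $\boldsymbol{\lambda}_j$ in this null space, substitution into $w_i^T \boldsymbol{x} + b_i$ makes every such functional constant on $l_D$, hence on $D$. Equivalently, one can check that $l_D \parallel l_i$ for each $i$ (since $D \subset l_i^+$ prevents $l_D \subset l_i$, while the direction-space inclusion prevents them from meeting transversally) and apply Lemma 2 once per $i$.

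For the converse, assume the first layer collapses $D$ to a single point $z$. Then $w_i^T x + b_i = z_i$ is constant on $D$ with $z_i > 0$ for every $i$. The common level set $l_D^\ast := \{x : w_i^T x + b_i = z_i,\ i=1,\dots,n\}$ is by Lemma 1 an $(m-n)$-dimensional affine subspace whose direction space coincides with that of $l$, and which is disjoint from $l$ (the defining constants are $z_i > 0$ on $l_D^\ast$ but $0$ on $l$). Taking $l_D$ to be the affine hull of $D$ yields a $k$-dimensional subspace with $k \le m-n$ contained in $l_D^\ast$, and therefore parallel to $l$.

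The main obstacle I anticipate is semantic rather than computational: Definition 3 defines parallelism only between a codimension-one hyperplane and a general affine subspace, whereas here $l_D \parallel l$ is asserted with both of codimension possibly larger than one. I plan to interpret it as the natural extension---direction-space inclusion plus $l_D \cap l = \emptyset$---which keeps the argument entirely within the scope of Lemma 1 and the ReLU-linearity observation, without overreaching Lemma 2's stated hypotheses.
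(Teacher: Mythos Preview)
Your proposal is correct and follows essentially the same route as the paper: in the forward direction the paper also reduces to $l_D \parallel l_i$ for each $i$ and invokes Lemma~2 coordinate-by-coordinate, and in the converse it likewise builds the translated hyperplanes $l_i' = \{w_i^T x + b_i = C_i\}$ and takes their intersection as the required $l_D$. The only cosmetic difference is that the paper chooses $l_D$ to be the full $(m-n)$-dimensional intersection $\bigcap_i l_i'$ rather than the affine hull of $D$; your observation about the gap in Definition~3 is apt and your proposed interpretation is exactly how the paper implicitly uses the term.
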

\begin{proof}
To any element $\boldsymbol{x} \in D$, the coordinate values of the mapped point $\boldsymbol{x}'$ via the first layer are the outputs of hyperplanes $l_i$'s. Because $l_D \parallel l = \bigcap_{i=1}^{n}l_i$, we have $l_D \parallel l_i$ for all $i$. Let $D'$ be the mapped set of $D$ by the first layer. Since $D \subset l_D \parallel l_i$, by lemma 2, the $i$th coordinate values of all the elements of $D'$ are the same; because $i$ is arbitrarily selected, $D'$ actually has only one distinct point.

Conversely, if the elements of $D$ become a single point after passing through the first layer, to each hyperplane $l_i$ whose equation is $\boldsymbol{w}_i^T\boldsymbol{x} + b_i = 0$ for $i = 1, 2, \cdots, n$, all of its outputs with respect to $D$ are equal to a constant; that is, to any $\boldsymbol{x}_{\nu} \in D$ for $\nu = 1, 2, \cdots, |D|$, because $D \subset \prod_{i=1}^{n} l_i^+$, the output of $l_i$ is
\begin{equation}
\boldsymbol{w}_i^T\boldsymbol{x}_{\nu} + b_i = C_i,
\end{equation}
which is a constant for all $\nu$. Equation 2.5 is equivalent to $\boldsymbol{w}^T\boldsymbol{x}_{\nu} + b_i - C_i = 0$, which is a hyperplane $l_i'$ obtained by translating $l_i$. Thus, $D \subset l_i' \parallel l_i$. Due to the arbitrary selection of $i$, we have
\begin{equation}
D \subset l' := \bigcap_{i = 1}^n l_i',
\end{equation}
where $l'$ is the intersection of $l_i'$'s. Since $l_i' \parallel l_i$, we have $l' \parallel l_i$; and by lemma 1, the dimensionality of $l'$ is $m - n$. Then $l' \parallel l = \bigcap_{i = 1}^nl_i$. Let $l_D = l'$, and the conclusion follows.
\end{proof}

\begin{dfn}
To data set $D$ of $m$-dimensional space, if there exists no $k$-dimensional hyperplane $l_k$ for $0 \le k \le m-1$ such that $D \subset l_k$, then we say that its dimensionality is $m$. Otherwise, the dimensionality of $D$ is the minimum dimensionality of a hyperplane that contains $D$.
\end{dfn}

\begin{cl}
Under the notations of theorem 1, suppose that the dimensionality of $D$ is $k_D$. If $k_D > m - n$, the elements of $D$ will not be mapped to a single point after passing through the first layer. Only when $k_D \le m - n$, it is possible that the elements of $D$ coincide into a single one.
\end{cl}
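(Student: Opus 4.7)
The plan is to derive this corollary directly as a contrapositive plus existence argument built on top of Theorem 1, which already characterizes exactly when $D$ collapses to a single point under the first layer (namely, iff $D$ lies in a hyperplane $l_D$ of dimension at most $m-n$ that is parallel to $l = \bigcap_{i=1}^n l_i$).

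For the first assertion, I would argue by contraposition. Suppose the elements of $D$ \emph{are} mapped to a single point by the first layer of $m^{(1)}n^{(1)}$. Then the ``converse'' direction of Theorem 1 provides a hyperplane $l_D$ with $l_D \parallel l$ and $D \subset l_D$, where $\dim l_D = m-n$ (as established in the proof via Lemma 1). Hence $D$ is contained in a hyperplane of dimension $m-n$, and by Definition 6 the dimensionality $k_D$ of $D$ satisfies $k_D \le m-n$. Contrapositively, $k_D > m-n$ forces $D$ not to collapse to a single point, which is the first claim.

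For the second assertion, it suffices to exhibit a configuration achieving the collapse whenever $k_D \le m-n$. By Definition 6, there is a $k_D$-dimensional hyperplane $l_D$ containing $D$. I would then choose the $n$ hyperplanes $l_1, \dots, l_n$ of the first layer so that each $l_i$ is parallel to $l_D$ (this is possible because $\dim l_D \le m-n$ leaves enough room in $\mathbb{R}^m$ to pick $n$ pairwise non-parallel $(m-1)$-hyperplanes whose common intersection $l$ is an $(m-n)$-dimensional hyperplane containing a translate of $l_D$), and so that $D \subset \prod_{i=1}^n l_i^+$ (achievable by translating the $l_i$'s away from $D$ on the appropriate side, which does not affect the parallelism condition). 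With $l_D \parallel l$ now in force, the direct direction of Theorem 1 applies and collapses $D$ to a single point.

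The only real subtlety, and thus the main obstacle I would be careful about, is the second half: one has to verify that the parallelism conditions $l_i \parallel l_D$ for all $i$ can be satisfied simultaneously with the $l_i$'s being pairwise non-parallel of codimension one. This follows from the observation in Lemma 2 that ``$l_i \parallel l_D$'' is equivalent to the normal $\boldsymbol{w}_i$ of $l_i$ being orthogonal to the direction space of $l_D$, a linear codimension-$k_D$ constraint on $\boldsymbol{w}_i$ that leaves an $(m-k_D)$-dimensional set of admissible normals; since $m - k_D \ge n$, we can pick $n$ linearly independent such normals, and then the resulting $l_i$'s are automatically pairwise non-parallel and their intersection $l$ is $(m-n)$-dimensional by Lemma 1. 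Everything else is bookkeeping through Definition 6 and the already-proved Theorem 1.
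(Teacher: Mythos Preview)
Your proposal is correct and follows essentially the same route as the paper: the first assertion is proved by exactly the same contrapositive appeal to the converse direction of Theorem~1, and the second assertion is handled by invoking the forward direction of Theorem~1 as an existence witness. The paper is in fact terser on the second part (it simply says ``theorem~1 gives an example''), whereas you spell out explicitly how to manufacture the $l_i$'s from the direction space of $l_D$; that extra care is sound but not strictly needed for the corollary as stated. One minor bookkeeping slip: the definition of the dimensionality $k_D$ of a data set is Definition~5 in the paper, not Definition~6.
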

\begin{proof}
To the first part of this corollary, if the elements of $D$ were mapped to a single point, by theorem 1, there exists a hyperplane $l'$ whose dimensionality is less than or equal to $m - n$, such that $D \subset l'$, implying $k_D \le m-n$, which is a contradiction. To the second part, theorem 1 gives an example.
\end{proof}

\begin{dfn}
Let $f: D \to D_o$ with $D, D_o \subset \mathbb{R}^m$ be a multi-dimensional piecewise linear function realized by an autoencoder of equation 2.1. The function $f$ could be regarded as the composite of two functions
\begin{equation*}
f(\boldsymbol{x}) = f_d \circ f_e(\boldsymbol{x})=f_d(\boldsymbol{x}'),
\end{equation*}
where encoder function $f_e: D \to D_e$ with $D_e \subset \mathbb{R}^{n_e}$ and decoder function $f_d: D_e \to D_o$ correspond to the encoder and decoder, respectively.
\end{dfn}

\begin{lem}
Let $f_D: D \to D$ be a identity map with $D \subset \mathbb{R}^m$. To an autoencoder $\mathcal{A}$ of equation 2.1, if its encoder function $f_e$ is bijective, the function $f_D$ could be realized by $\mathcal{A}$.
\end{lem}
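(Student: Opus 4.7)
My plan is to leverage the hypothesis of bijectivity of $f_e$ to produce a natural candidate for the decoder function, and then reduce the lemma to a finite-data realizability claim that has already been addressed in \citet*{Huang2022}. Since $f_e: D \to D_e$ is bijective, there is a well-defined inverse $f_e^{-1}: D_e \to D$. Taking $f_d := f_e^{-1}$ gives the composition $f_d \circ f_e = \mathrm{id}_D = f_D$, so constructing an autoencoder that realizes $f_D$ is equivalent to constructing a decoder whose associated function agrees with $f_e^{-1}$ on the finite set $D_e$.

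The next step is to invoke note 7 to conclude that $|D| < \infty$, whence by bijectivity $|D_e| = |D| < \infty$ as well. Thus $f_e^{-1}$ is a map from a finite subset of $\mathbb{R}^{n_e}$ to $\mathbb{R}^m$, and realizing it by a decoder of the form $\mathcal{D} = n_e^{(1)}\prod_{j=1}^{d'}M_{j}^{(1)}m^{(1)}$ (with hidden ReLU units and, by note 11, a linear output layer) is a finite-sample interpolation problem. This is exactly the content of corollary 12 of \citet*{Huang2022}, which, as the excerpt has already recalled, states that bijectivity of the encoder on $D$ guarantees that every element of $D$ can be reconstructed by some choice of decoder parameters. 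Applying that corollary to our $f_e$ and $D_e$ supplies the desired decoder, completing the argument.

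The main obstacle I expect is not the logical skeleton above, which is essentially a restatement, but the bookkeeping about the decoder architecture: one must verify that the widths $M_j$ in equation 2.3 are permitted to be chosen large enough to support the interpolation guaranteed by corollary 12 of \citet*{Huang2022}. In writing the proof I would therefore read the lemma as an existence statement over the family of architectures described by equation 2.3 with sufficiently large $M_j$, and explicitly point to the cited corollary for the construction of the weights; any finer control on the depth $d'$ or the widths $M_j$ would need to be inherited from that corollary rather than re-derived here.
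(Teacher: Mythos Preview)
Your proposal is correct and follows essentially the same approach as the paper: the paper's proof is a one-line remark that the lemma is merely a restatement of corollary 12 of \citet*{Huang2022} in the terminology of definition 6, and your argument simply unpacks that restatement (inverse of a bijection, finiteness of $D$, decoder interpolation) before invoking the same corollary. Your additional caveat about the decoder widths $M_j$ is reasonable bookkeeping but not something the paper addresses explicitly.
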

\begin{proof}
This lemma is simply a restatement of corollary 12 of \citet*{Huang2022} by the new terminology of definition 6.
\end{proof}

\begin{dfn}
Given data set $D$ of $m$-dimensional space, if the outputs of a $m-1$-dimensional hyperplane $l$ with respect to the elements of $D$ are different from each other, we say that $l$ discriminates $D$.
\end{dfn}

\begin{thm}
To network $m^{(1)}n^{(1)}$ and data set $D$ of the $m$-dimensional input space, among the $n$ hyperplanes $l_i$'s for $i = 1, 2, \cdots, n$ associated with the units of the first layer, if there exists at least one hyperplane, say, $l_{\nu}$ for $1 \le \nu \le n$, such that $l_{\nu}$ can discriminate $D$, then the map $f: D \to D'$ via the first layer is bijective.
\end{thm}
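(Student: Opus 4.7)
The plan is to prove bijectivity by separating it into injectivity and surjectivity, and to notice that surjectivity is free because $D'$ is defined as the image $f(D)$. So all the work reduces to showing that the first-layer map is injective on $D$.

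The key step is to unpack Definition 8: the hypothesis that $l_\nu$ discriminates $D$ means precisely that the \emph{ReLU outputs} $\sigma(\boldsymbol{w}_\nu^T\boldsymbol{x} + b_\nu)$ are pairwise distinct as $\boldsymbol{x}$ ranges over $D$ (not merely the original outputs $\boldsymbol{w}_\nu^T\boldsymbol{x}+b_\nu$). Since this ReLU output is, by construction of the network $m^{(1)}n^{(1)}$, exactly the $\nu$-th coordinate of the mapped point $\boldsymbol{x}' \in D'$, distinct elements of $D$ produce $\nu$-th coordinates that differ, and hence produce distinct image vectors in $\mathbb{R}^n$. Thus for any $\boldsymbol{x}_1 \neq \boldsymbol{x}_2$ in $D$, one has $f(\boldsymbol{x}_1) \neq f(\boldsymbol{x}_2)$, i.e., $f$ is injective. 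Combined with surjectivity onto $D' = f(D)$, this gives the bijectivity.

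There is no real obstacle here; the argument is essentially a direct unfolding of Definition 8 together with the fact that a map is injective as soon as a single coordinate component is injective. The only subtlety worth flagging is that ReLU can collapse several points (all those with non-positive pre-activations through $l_\nu$) to zero, which could a priori destroy injectivity; however, the discriminating hypothesis is stated in terms of the post-ReLU outputs, so it rules out such collapse by assumption. Notably, the argument uses only the single discriminating hyperplane $l_\nu$ and says nothing about the remaining $n-1$ units, which may behave arbitrarily (including mapping all of $D$ to zero) without affecting injectivity.
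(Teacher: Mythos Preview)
Your proof is correct and follows essentially the same approach as the paper's: both argue that since the $\nu$-th coordinate of the first-layer output is precisely the (ReLU) output of $l_\nu$, the discriminating hypothesis forces distinct inputs to have distinct $\nu$-th coordinates, hence distinct images, while surjectivity onto $D'=f(D)$ is automatic. Your write-up is simply more explicit about the injectivity/surjectivity decomposition and the ReLU-collapse caveat; the only slip is the reference to ``Definition~8'' --- the discriminating property is Definition~7 in the paper.
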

\begin{proof}
To any two points $\boldsymbol{x}_1'$ and $\boldsymbol{x}_2'$ of the $n$-dimensional space of the first layer, the difference of coordinate values in one dimension is sufficient to distinguish them. Since the coordinate values of the first layer are the outputs of $l_i$'s and $l_{\nu}$ can discriminate $D$, the conclusion follows.
\end{proof}

\subsection{One Constructed Solution}
We use the preceding principles to find a solution of autoencoders. By lemma 3, the key point is the bijective map of encoders. We first present two elementary results by lemma 4 of \citet{Huang2022}, from which some concepts or construction methods are proposed and will be used throughout the remaining part of this paper.

\begin{prp}
To encoder $\mathcal{E}$ of equation 2.2, let $D$ be a data set of the input space with cardinality $|D| = n_e$, which is equal to the dimensionality of the encoding space of $\mathcal{E}$. Then $\mathcal{E}$ could realize a bijective function $f_e : D \to D_e$, where $D_e$ is the mapped data set of the encoding space, provided that the input dimensionality $m > n_e + d$ with $d$ the depth of hidden layers.
\end{prp}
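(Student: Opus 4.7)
The plan is to construct the weights of $\mathcal{E}$ layer by layer, at each step invoking theorem 2 to guarantee that the map realized by that layer is bijective on the propagated image of $D$. The hypothesis $m > n_e + d$ plays the elementary role of ensuring that an architecture with strictly decreasing widths $m > n_1 > \cdots > n_d > n_e$ actually exists: since each of the $d+1$ weight layers must reduce dimension by at least one, one needs $m \ge n_e + d + 1$.

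To set up the induction, I will let $D^{(0)} := D$ and denote by $D^{(k)}$ the image of $D$ after the first $k$ layers, so that $D^{(k)} \subset \mathbb{R}^{n_k}$ with the conventions $n_0 = m$ and $n_{d+1} = n_e$. The invariant to maintain is $|D^{(k)}| = n_e$, i.e., the layer cascade is injective on $D$ up through depth $k$.

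The inductive step is where theorem 2 enters. Given $D^{(k)}$, I choose the $n_{k+1}$ hyperplanes $l_1,\dots,l_{n_{k+1}}$ associated with the units of the next layer as follows. Because $D^{(k)}$ is a finite set of $n_e$ distinct points in $\mathbb{R}^{n_k}$, the set of normal vectors $\boldsymbol{w}$ for which some pair of points of $D^{(k)}$ collapses under $\boldsymbol{x}\mapsto \boldsymbol{w}^T\boldsymbol{x}$ is a finite union of linear hyperplanes in weight space, so a generic $\boldsymbol{w}_1$ takes pairwise distinct values on $D^{(k)}$. I then pick the bias $b_1$ large enough that $\boldsymbol{w}_1^T\boldsymbol{x} + b_1 > 0$ for every $\boldsymbol{x} \in D^{(k)}$, i.e., $D^{(k)} \subset l_1^+$; the ReLU output of $l_1$ then coincides with its pre-activation and is pairwise distinct on $D^{(k)}$, so $l_1$ discriminates $D^{(k)}$ in the sense of definition 7. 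The remaining $n_{k+1}-1$ hyperplanes can be chosen arbitrarily, with biases again taken positive enough to keep their units active on $D^{(k)}$. Theorem 2 then gives a bijection $D^{(k)} \to D^{(k+1)}$, preserving $|D^{(k+1)}| = n_e$.

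Composing the bijections obtained for $k = 0, 1, \dots, d$ yields the desired bijective encoder function $f_e : D \to D_e$. The main obstacle, such as it is, is organizational rather than analytic: one must verify that a discriminating direction can be chosen at every successively lower-dimensional layer and that the ReLU nonlinearity can be rendered inactive on the (finite) propagated data, but both reduce to choosing sufficiently positive biases, so no genuine difficulty arises once the dimension budget $m > n_e + d$ has secured the architecture.
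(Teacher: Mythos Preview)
Your argument is correct, but it takes a genuinely different route from the paper's. The paper proves this proposition via the \emph{distinguishable-set method} (lemma~4 of Huang~2022): for each of the $n_e$ data points it builds a dedicated hyperplane whose ReLU output is zero on all earlier points in a fixed order and nonzero on the current point, so that $n_e$ such hyperplanes already give an injective layer map; it then pads each layer with $n_k'$ extra units subject to $n_1' > \cdots > n_d' \ge 1$, and this padding budget is exactly where $m > n_e + d$ enters. Your construction instead uses a single discriminating hyperplane per layer with biases chosen to keep every unit in its linear regime, appealing to theorem~2; this is precisely the \emph{discriminating-hyperplane way} the paper develops later in section~3 (theorems~4 and~5). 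Your route is shorter and, notably, never uses the hypothesis $|D| = n_e$ --- it would establish bijectivity for any finite $D$ --- whereas the paper's construction ties the $n_e$ encoding-layer units to the $n_e$ data points and relies essentially on the zero-output region of the ReLU. That reliance is the methodological point: the paper introduces this construction here so as to reuse the same zero-output mechanism in proposition~2 for disentangling, a property your linear-regime argument cannot deliver (cf.\ theorem~6).
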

\begin{proof}
The proof is based on lemma 4 of \citet{Huang2022}. First, we consider each element of $D$ as a set containing only one element, and use the concept of distinguishable data sets (definition 11 of \citet*{Huang2022}) to discriminate them by the hyperplanes associated with the first layer of $\mathcal{E}$; that is, to the $i$th element $\boldsymbol{x}_i \in D$ for $i = 1, 2, \cdots, n_e$, find a distinguishable hyperplane $l_i$ for it.

Suppose that the distinguishable order of $D$ is
\begin{equation}
\boldsymbol{x}_{j_1}, \boldsymbol{x}_{j_2}, \cdots, \boldsymbol{x}_{j_{n_e}},
\end{equation}
with $1 \le j_{\nu} \le n_e$ for $\nu = 1, 2, \cdots, n_e$ not equal to each other, which correspond to the distinguishable hyperplanes $l_{j_1}, l_{j_2}, \cdots, l_{j_{n_e}}$, respectively. Let
\begin{equation}
\boldsymbol{x}' =  \begin{bmatrix} x_1, x_2, \cdots, x_{n_1} \end{bmatrix}^T
\end{equation}
be an output vector derived from the $n_1$ units of the first layer, whose entry order is in accordance with that of the distinguishable hyperplanes $l_{j_v}$'s; and the outputs of the first layer with respect to equation 2.7 are denoted by
\begin{equation}
\boldsymbol{x}'_{j_1}, \boldsymbol{x}'_{j_2}, \cdots, \boldsymbol{x}'_{j_{n_e}},
\end{equation}
which comprise the elements of the mapped data set $D'$ of $D$ by the first layer.

To $\boldsymbol{x}'_{j_2}$ of equation 2.9, it differs form $\boldsymbol{x}'_{j_1}$ in the dimension $x_2$ of equation 2.8, since the distinguishable hyperplane $l_{j_2}$ of $\boldsymbol{x}_{j_2}$ makes the coordinate values of $\boldsymbol{x}'_{j_2}$ and $\boldsymbol{x}'_{j_1}$ in that dimension nonzero and zero, respectively. Similarly, in the order of equation 2.7, to arbitrary element after $\boldsymbol{x}_{j_2}$, say, $\boldsymbol{x}_{j_{\mu}}$ for $2 \le \mu \le n_e$, its mapped $\boldsymbol{x}'_{j_{\mu}}$ is different from $\boldsymbol{x}'_{j_2}$ in dimension $x_{\mu}$ of equation 2.8, due to the property of the distinguishable hyperplane $l_{j_{\mu}}$ for $\boldsymbol{x}_{j_{\mu}}$.

In the same way, each point of equation 2.9 could be compared to other ones and the conclusion is that they are different from each other. Thus, the map of the first layer is bijective.

Note that after constructing hyperplanes $l_i$'s for $i = 1, 2, \cdots, n_e$, the map of the first layer is already bijective; and if we add $n_1'$ more units, by theorem 2, this bijective property would not be influenced. The number of the units of the first layer is thus $n_1 = n_e + n_1'$.

Each subsequent layer should be dealt with analogous to the first layer. To the architecture of encoders, the number of units should decrease monotonically as the depth of the layer grows; in order for that, we should impose a restriction as
\begin{equation}
n_1' > n_2' > \cdots > n_d' \ge 1,
\end{equation}
where $n_j'$ for $j = 1, 2, \cdots, d$ is the number of the extra units added in each layer after constructing a bijective map by the distinguishable-set method; then we have $m > n_1 > n_2 > \cdots > n_e$, which is an architecture of encoders.

Because the encoding layer has $n_e$ units, there should be at least $n_e + d$ units in the first layer to satisfy equation 2.10, which implies $m > n_e + d$.
\end{proof}

\begin{dfn}
To a multi-category data set $D$ of $m$-dimensional space, if its each category could be linearly separated from the remaining ones, we say that $D$ is a linearly separable data set or $D$ could be linearly separable; otherwise, it is linearly inseparable. Some times we use the synonymous term ``separated'' or ``classified'' to represent ``separable''.
\end{dfn}

\begin{dfn}
To a neural network $\mathcal{N}$, suppose that a multi-category data set $D$ of its input space is linearly inseparable. If $D$ could become linearly classified in the output layer after passing through $\mathcal{N}$, we say that $D$ is disentangled by $\mathcal{N}$.
\end{dfn}

\begin{rmk}
The term ``disentangle'' frequently appeared in literatures, such as \citet*{Bengio2009} and \citet*{Goodfellow2016}, which describes the possible classification capability of neural networks. However, it has not yet been rigorously defined. Due to its importance in the mechanism of neural networks, we define it here and give an example; more detailed discussion will be in section 4.
\end{rmk}

\begin{prp}
Under the notations of definition 9, suppose that each category of $D$ has only one element. If the dimensionality $m$ of the input space satisfies $m > n_e^2 + n_e + d$, where $n_e = |D|$ is the dimensionality of the encoding space and $d$ is the depth of the hidden layers, then we can construct an encoder $\mathcal{E}$ of equation 2.2 not only realizing a bijective map, but also disentangling $D$.
\end{prp}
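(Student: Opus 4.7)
The plan is to build an encoder $\mathcal{E}$ that simultaneously realizes a bijective map $f_e : D \to D_e$ and renders $D_e$ linearly separable category-wise. Because each category of $D$ contains a single point, ``disentangled'' here means that for every image point $\boldsymbol{x}_i' \in D_e$ one can find a hyperplane in the $n_e$-dimensional encoding space strictly separating $\boldsymbol{x}_i'$ from the remaining $\boldsymbol{x}_j'$, $j \ne i$. This is strictly stronger than the distinctness of the $\boldsymbol{x}_i'$'s guaranteed by Proposition 1, so I would strengthen the first-layer construction and then propagate both properties down through the remaining hidden layers.

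For the first layer, I would start from the distinguishable-set construction of Proposition 1, which uses $n_e$ distinguishable hyperplanes to secure bijectivity of the first-layer map. To these I would append an ``isolating'' block: for every $\boldsymbol{x}_i \in D$, a group of hyperplanes whose joint effect after ReLU activation produces a coordinate of the first-layer image that is strictly positive at $\boldsymbol{x}_i$ and zero at every other $\boldsymbol{x}_j$. Since $D$ is a finite set of distinct points, for each $i$ one can separate $\boldsymbol{x}_i$ from each $\boldsymbol{x}_j$, $j \ne i$, by a single hyperplane oriented so that $\boldsymbol{x}_j \in l^-$ and $\boldsymbol{x}_i \in l^+$; the ReLU-activated affine form (or a ReLU combination of several such forms) then provides the desired isolating coordinate. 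The isolating block contributes at most $n_e$ hyperplanes per category, hence at most $n_e^2$ extra first-layer units on top of the $n_e$ already used for bijectivity.

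Next I would propagate both properties through the remaining $d-1$ hidden layers down to the encoding layer. In each subsequent layer I would transmit the $n_e$ isolating coordinates (the ones witnessing linear separability) by an essentially identity-like sub-map, while simultaneously satisfying the architectural constraint $n_1 > n_2 > \cdots > n_e$ of Definition 1 by dropping one auxiliary unit per layer. Theorem 2 guarantees that the remaining ``padding'' units do not destroy bijectivity, and the isolating coordinates survive to the encoding layer, where the hyperplanes $x_i = \epsilon$ for small $\epsilon > 0$ witness linear separability of $D_e$. The first layer thus requires at least $n_e^2 + n_e + d$ units, and since $m > n_1$ one obtains the stated bound $m > n_e^2 + n_e + d$.

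The main obstacle is the first-layer design: one has to arrange the distinguishable and the isolating hyperplanes together so that (i) after ReLU the isolating coordinate is genuinely ``on'' at the targeted point and ``off'' at all others, rather than merely nonnegative at others; and (ii) adjoining the isolating block does not destroy the bijectivity produced by the distinguishable-set method. The remaining work is largely combinatorial bookkeeping on the layer widths, supported by Theorem 2 for bijectivity preservation and by Lemma 2 for tracking how points are carried by affine maps through subsequent layers.
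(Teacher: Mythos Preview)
Your route differs from the paper's in where the disentangling machinery lives. You place an ``isolating block'' in the \emph{first} layer and then carry $n_e$ one-hot-like coordinates forward to the encoding layer. The paper instead runs the Proposition~1 distinguishable-hyperplane construction (bijectivity only) through all $d$ hidden layers and the encoding layer, and only \emph{in the encoding layer} adjoins the extra $n_e^2$ units: for each point's distinguishable hyperplane it adds $n_e$ hyperplanes having the same classification effect (proposition~1 of \citet{Huang2022}). Linear separability of $D_e$ is then argued indirectly, by showing that a hypothetical $(d{+}2)$th layer could realize an arbitrary piecewise linear function on $D_e$ (proposition~3 and corollary~3 of \citet{Huang2022}), rather than by exhibiting one-hot coordinates.

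Your first-layer step, however, has a genuine gap. You ask a ``group of hyperplanes'' in layer~1 to have a ``joint effect after ReLU'' producing \emph{one} first-layer coordinate that is positive at $\boldsymbol{x}_i$ and zero at every other $\boldsymbol{x}_j$. But each first-layer coordinate is the ReLU of a \emph{single} affine form; combining several ReLUs is already a second-layer operation. And a single first-layer unit can isolate $\boldsymbol{x}_i$ from the rest of $D$ only when $\{\boldsymbol{x}_i\}$ is linearly separable from $D\setminus\{\boldsymbol{x}_i\}$, which fails whenever $\boldsymbol{x}_i$ lies in the convex hull of the remaining points. Your parenthetical ``(or a ReLU combination of several such forms)'' silently pushes the isolating coordinates to layer~2; that is salvageable and does not change the width bound $m>n_e^2+n_e+d$, but the layer bookkeeping and the ``identity-like transmission'' must be rewritten to start one layer later than you state. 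The paper sidesteps this issue entirely by deferring all disentangling structure to the encoding layer.
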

\begin{proof}
The first step is to construct a network $N$ by part of the proof of proposition 1 to ensure a bijective map, with each hidden layer having $n_e$ units; and by theorem 2, adding extra units in each layer could not influence the bijective property.

Second, add $n_i'$ units in the $i$th layer of $N$ for $i = 1, 2, \cdots, d$ (i.e., except for the output layer), satisfying $n_i' > n_e^2$ and $n_1' > n_2' > \cdots > n_{d}'$. The modified network is denoted by $\mathcal{E}'$, which is an encoder.

Third, add $n_e^2$ units in the encoding layer of $\mathcal{E}'$, and the details are as follows. Let $D_e'$ be the mapped data set of $D$ by $\mathcal{E}'$. In the encoding space of $\mathcal{E}'$, to each element $\boldsymbol{x}_i' \in D_e'$ for $i = 1, 2, \cdots, n_e$, according to its distinguishable hyperplane $l_i$, add $n_e$ hyperplanes (or units) by proposition 1 of \citet*{Huang2022}, each of which has the same classification effect as $l_i$. Because $|D_e'| = n_e$ and each element must be similarly dealt with, we need to add $n_e^2$ units in the encoding layer. Until now, the final network architecture is denoted by $\mathcal{E}$, which is also an encoder by the construction process.

Let $D_e$ be the mapped data set of $D$ via $\mathcal{E}$. Then if we add a unit $u_i$ for $i = 1, 2, \cdots, n_e$ in a new $d+2$th layer of $\mathcal{E}$, by proposition 3 of \citet*{Huang2022}, it could realize an arbitrary piecewise linear function on $D_e = \bigcup_{i=1}^{n_e} D_i$, where each subdomain or category $D_i$ contains only one distinct element. By corollary 3 of \citet*{Huang2022}, $u_i$ could classify $D_i$ from the remaining categories by a specially designed piecewise linear function. Because the arbitrary selection of $i$, each category of $D_e$ could be linearly separated from other ones by the corresponding unit of the $d+2$th layer. That is, $D_e$ is linearly separable and $D$ is disentangled by encoder $\mathcal{E}$.

To the architecture of $\mathcal{E}$, the number of the units of the encoding layer is $n_e^2 + n_e$ as discussed above. To fulfil the characteristic of encoder architectures, the first layer should have at least $n_e^2 + n_e + d$ units. Thus, the dimensionality of the input space must satisfy $m > n_e^2 + n_e + d$.
\end{proof}

\begin{rmk}
In this example, the zero output of a ReLU plays a central role in data disentangling. In a later section, we will further explore this property to deal with more general cases.
\end{rmk}

\section{Discriminating-Hyperplane Way}
By lemma 3, a bijective map of the encoder is a key point to the solution of an autoencoder. We have constructed such a map in proposition 1, but the dimensionality of the encoding space is restricted by the cardinality of the input data set. This section will provide another method based on theorem 2, with no constraint on the number of units of the encoding layer.

Finding a certain concrete solution is not our only purpose. By the construction process, the property of the solution space would be reflected to some extent, which is helpful to interpret other solutions; and some intermediate results are general, which may become the foundation of the theories of autoencoders.

Note that in this section, only the linear part of a ReLU is used, and the results indicate that linear units are enough to produce a bijective map.
\subsection{Preliminaries}
\begin{lem}
Let $D$ and $l$ be a data set and a $m-1$-dimensional hyperplane of $m$-dimensional space, respectively. If any line connecting arbitrary two points of $D$ is not parallel to $l$, then all the $k$-dimensional hyperplanes with $2 \le k \le m-1$ passing through at least three elements of $D$ will also not be parallel to $l$. %, with $D \cap l = \emptyset$
\end{lem}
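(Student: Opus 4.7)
The plan is to argue by contradiction. Suppose that some $k$-dimensional hyperplane $l_k$ with $2 \le k \le m-1$ passes through at least three elements $\boldsymbol{x}_1, \boldsymbol{x}_2, \boldsymbol{x}_3 \in D$ and yet $l_k \parallel l$. My goal is to extract from this a line joining two points of $D$ that is parallel to $l$ in the sense of definition 3, which will contradict the hypothesis.

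First I would note that any two distinct points of an affine hyperplane determine a line lying entirely inside it. Writing $l_k$ parametrically through $\boldsymbol{x}_1$, the displacement $\boldsymbol{x}_2 - \boldsymbol{x}_1$ is expressible in the direction basis of $l_k$, so every point $\boldsymbol{x}_1 + t(\boldsymbol{x}_2 - \boldsymbol{x}_1)$ belongs to $l_k$. Hence the line $l_{12}$ through $\boldsymbol{x}_1$ and $\boldsymbol{x}_2$ satisfies $l_{12} \subset l_k$.

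Next I would unpack the parallelism assumption via definition 3: $l_k \parallel l$ means $l_k \cap l = \emptyset$. Combined with $l_{12} \subset l_k$ this forces $l_{12} \cap l = \emptyset$, and applying definition 3 once more — now to the line $l_{12}$ and the $(m-1)$-dimensional hyperplane $l$ — gives $l_{12} \parallel l$. Since $\boldsymbol{x}_1, \boldsymbol{x}_2$ are both in $D$, this directly contradicts the standing hypothesis that no line joining two points of $D$ is parallel to $l$, and the proof is complete.

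There is no serious obstacle here: the argument collapses to a one-line use of definitions once one observes that parallelism in definition 3 is phrased purely as disjointness of intersection, so parallelism passes from $l_k$ down to every affine subspace it contains, including the line determined by two of its points. I also expect to remark that the hypothesis ``at least three elements'' is not actually invoked in the contradiction — two are already enough. Its role is only to fit the dimensional constraint $k \ge 2$, since a genuine $k$-dimensional affine hyperplane with $k \ge 2$ typically requires more than two points of $D$ on it to be worth considering in this context.
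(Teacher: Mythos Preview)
Your proposal is correct and follows essentially the same contrapositive argument as the paper: assume some $l_k \in P_D$ is parallel to $l$, pick a line of $L_D$ lying on $l_k$, and observe that this line inherits the disjointness from $l$ and hence is parallel to $l$, contradicting the hypothesis. Your write-up is in fact more explicit than the paper's, and your side remark that only two points of $D$ on $l_k$ are needed for the contradiction is accurate.
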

\begin{proof}
Denote by $P_D$ the set of the $k$-dimensional hyperplanes passing through more than two points of $D$ for all $k = 2, 3,\cdots, m-1$, and by $L_D$ the set of the lines connecting some points of $D$.  The contrapositive of this conclusion is obvious. If there exists a hyperplane $l' \in P_D$ parallel to $l$, then the lines of $l_D$ that are on $l'$ will also be parallel to $l$; and because $P_D$ contains all of the lines of $L_D$, the conclusion follows.
\end{proof}

\begin{dfn}
Using the notations of lemma 4 (including its proof), to each element of $L_D$, according to whether it is parallel to $l$ or not, we call it either a parallel direction or a unparallel direction of $l$, respectively. The set $L_D$ is called the line-direction set of $D$. Let $Y_l \subset L_D$ and $N_l = L_D - Y_1$ be the sets of the parallel and unparallel directions of $l$ with respect to $D$, respectively. If we say that $L_D$ is unparallel (or not parallel) to $l$, it means that all the elements of $L_D$ have this property; otherwise, there exists some one parallel to it.
\end{dfn}

\begin{lem}
Under definition 10, if a $m-1$-dimensional hyperplane $l$ is not parallel to the line-direction set $L_D$ of data set $D$, then a hyperplane $l'$ that can discriminate $D$ could be found by the translation of $l$.
\end{lem}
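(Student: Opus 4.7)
The plan is to translate $l$ along its normal so that all of $D$ lies strictly on the positive side of the translated hyperplane $l'$; in that regime the ReLU output reduces to its underlying linear value, and the non-parallelism hypothesis will force those linear values to be pairwise distinct.

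Write the equation of $l$ as $\boldsymbol{w}^T\boldsymbol{x}+b=0$. The first step is to extract from the hypothesis the algebraic condition $\boldsymbol{w}^T\boldsymbol{x}_i\ne\boldsymbol{w}^T\boldsymbol{x}_j$ for any two distinct points $\boldsymbol{x}_i,\boldsymbol{x}_j\in D$. Let $l_{ij}$ be the line through these two points with direction $\boldsymbol{\lambda}=\boldsymbol{x}_i-\boldsymbol{x}_j$. By hypothesis $l_{ij}\in L_D$ is not parallel to $l$, so by the reasoning in the proof of lemma 2 (substituting the parametric equation of $l_{ij}$ into the equation of $l$, as in equation~(2.4)) we must have $\boldsymbol{w}^T\boldsymbol{\lambda}\ne 0$; this immediately yields $\boldsymbol{w}^T\boldsymbol{x}_i\ne\boldsymbol{w}^T\boldsymbol{x}_j$.

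Second, because $D$ is finite (note~7 of the conventions), the minimum $m_0:=\min_{\boldsymbol{x}\in D}\boldsymbol{w}^T\boldsymbol{x}$ is attained. Choose any $b'>-m_0$ and let $l'$ be the hyperplane $\boldsymbol{w}^T\boldsymbol{x}+b'=0$, which is a translation of $l$ since the normal $\boldsymbol{w}$ is unchanged. Then for each $\boldsymbol{x}_i\in D$ one has $\boldsymbol{w}^T\boldsymbol{x}_i+b'>0$, so $\sigma(\boldsymbol{w}^T\boldsymbol{x}_i+b')=\boldsymbol{w}^T\boldsymbol{x}_i+b'$; and these shifted values remain pairwise distinct by step one. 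Hence $l'$ discriminates $D$ in the sense of definition~8.

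The main obstacle is essentially a notational one: aligning the paper's definition~3 of parallelism (empty intersection) with the algebraic condition $\boldsymbol{w}^T\boldsymbol{\lambda}=0$ that lemma~2 actually produces. The degenerate case $l_{ij}\subset l$ is compatible with ``not parallel'' in the strict sense of definition~3 yet would give $\boldsymbol{w}^T\boldsymbol{x}_i=\boldsymbol{w}^T\boldsymbol{x}_j$; I would handle this either by adopting the direction-based reading already suggested by the term ``parallel direction'' in definition~10, or by performing a preliminary generic translation of $l$ to ensure it avoids $D$ before running the argument. Once this bookkeeping is settled, the proof collapses to the two-line calculation above.
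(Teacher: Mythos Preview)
Your proof is correct and follows the same overall shape as the paper's---translate $l$ so that $D\subset l'^+$, then argue the (now linear) outputs are pairwise distinct---but you justify the distinctness step more directly. The paper routes the argument through Lemma~4 (no higher-dimensional flat through three or more points of $D$ is parallel to $l$) and the converse direction of Theorem~1 (coincident outputs force the points onto a translate of $l$), whereas you simply observe that ``$l_{ij}$ not parallel to $l$'' gives $\boldsymbol{w}^T(\boldsymbol{x}_i-\boldsymbol{x}_j)\ne 0$ immediately. Your route is shorter and avoids the detour through Lemma~4, which for a pairwise statement like discrimination is not really needed. Your discussion of the degenerate case $l_{ij}\subset l$ is also on point: the paper's Definition~3 literally makes such a line ``not parallel,'' yet it yields $\boldsymbol{w}^T\boldsymbol{\lambda}=0$; the paper silently relies on the direction-based reading that Definition~10's language (``parallel direction'') and the proof of Theorem~3 both suggest, and your preliminary-translation fix is a clean way to dispose of it in any case.
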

\begin{proof}
Let $\boldsymbol{w}^T\boldsymbol{x} + b = 0$ be the equation of $l$. By lemma 4, if $l$ is not parallel to $L_D$, we cannot find a hyperplane with dimensionality $2 \le k \le m - 1$ passing through more than two points of $D$ and parallel to $l$. Then if $D \subset l^+$, to the outputs of $l$ for $D$, by theorem 1, it would be impossible to produce overlapping points after $D$ passing through the unit associated with $l$.

If $D \nsubseteq l^+$, we translate $l$ as far as possible until $D \subset l'^+$ by increasing or decreasing the parameter $b$ of the equation of $l$, where $l'$ is the translated version of $l$ that we need.
\end{proof}

\begin{thm}
To any data set $D$ of $m$-dimensional space with finite cardinality $\mu = |D| \ge 2$, there exists a $m-1$-dimensional hyperplane that is not parallel to the line-direction set $L_D$ of $D$.
\end{thm}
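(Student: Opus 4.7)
The plan is to reduce the statement to a standard fact: a vector space over an infinite field is not a finite union of proper subspaces. First I would translate the geometric condition ``not parallel'' into a linear-algebraic one. Writing the desired hyperplane as $l:\boldsymbol{w}^T\boldsymbol{x}+b=0$, a line in $L_D$ with direction $\boldsymbol{\lambda}=\boldsymbol{x}_i-\boldsymbol{x}_j$ (for two distinct points $\boldsymbol{x}_i,\boldsymbol{x}_j\in D$) is parallel to $l$ exactly when $\boldsymbol{w}^T\boldsymbol{\lambda}=0$. So the task reduces to producing a nonzero $\boldsymbol{w}\in\mathbb{R}^m$ satisfying
\begin{equation*}
\boldsymbol{w}^T(\boldsymbol{x}_i-\boldsymbol{x}_j)\ne 0\quad\text{for every pair of distinct }\boldsymbol{x}_i,\boldsymbol{x}_j\in D.
\end{equation*}

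Next I would count: since $\mu=|D|$ is finite, the number of pairs $(i,j)$ with $i<j$ is at most $\binom{\mu}{2}$, hence finite. For each such pair the set
\begin{equation*}
H_{ij}:=\{\boldsymbol{w}\in\mathbb{R}^m:\boldsymbol{w}^T(\boldsymbol{x}_i-\boldsymbol{x}_j)=0\}
\end{equation*}
is an $(m-1)$-dimensional linear subspace of $\mathbb{R}^m$ (it is a genuine hyperplane through the origin because $\boldsymbol{x}_i-\boldsymbol{x}_j\ne\boldsymbol{0}$). I then invoke the fact that $\mathbb{R}^m$ cannot be written as the union of finitely many proper linear subspaces; therefore $\mathbb{R}^m\setminus\bigcup_{i<j}H_{ij}$ is nonempty. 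Any $\boldsymbol{w}$ drawn from this complement is automatically nonzero and satisfies the required non-orthogonality conditions, so the hyperplane $l:\boldsymbol{w}^T\boldsymbol{x}+b=0$ (for any $b\in\mathbb{R}$) is a genuine $(m-1)$-dimensional hyperplane not parallel to any element of $L_D$.

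The main obstacle, if any, is justifying that a real vector space is not a finite union of proper subspaces. I would handle this with a one-line measure/volume argument (each $H_{ij}$ has Lebesgue measure zero in $\mathbb{R}^m$, so their finite union does too), or, in the spirit of this paper's more elementary style, by picking a line through the origin that is generic in $\mathbb{R}^m$ and noting it can meet each $H_{ij}$ in at most one point, leaving infinitely many admissible choices of $\boldsymbol{w}$ on it. Everything else is just unwrapping definitions; the conclusion then follows immediately from the construction.
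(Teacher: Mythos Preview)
Your argument is correct. The reduction to the linear-algebraic condition $\boldsymbol{w}^T(\boldsymbol{x}_i-\boldsymbol{x}_j)\ne 0$ is exactly right (strictly speaking, under the paper's Definition~3 a line contained in the hyperplane is not ``parallel'' either, so ``exactly when'' is a slight overstatement; but since you only use the implication $\boldsymbol{w}^T\boldsymbol{\lambda}\ne 0\Rightarrow$ not parallel, this does not affect the proof). The finite-union-of-hyperplanes / measure-zero argument then finishes the job cleanly.

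Your route, however, is genuinely different from the paper's. The paper gives an explicit inductive construction: it starts from an arbitrary line $l_1'$ not parallel to $L_D$, then extends it one dimension at a time by appending a new direction $\boldsymbol{\lambda}_n$ and, if necessary, perturbing it by $\alpha_n\boldsymbol{\varepsilon}$ with $|\alpha_n|$ small. The key observation resolving what the paper calls the ``perturbation dilemma'' is that the old $(n-1)$-dimensional hyperplane $l_{n-1}'$ is the intersection of the unperturbed and perturbed $n$-dimensional hyperplanes, so any direction parallel to both would already be parallel to $l_{n-1}'$, contradicting the inductive hypothesis. What this buys the paper is constructivity and, more importantly, Corollary~2: because each $\boldsymbol{\lambda}_i$ can be chosen as an arbitrarily small perturbation of a prescribed $\boldsymbol{\gamma}_i$, the resulting hyperplane can approximate any given hyperplane to arbitrary precision. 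Your existence argument is shorter and conceptually cleaner, but does not immediately yield this approximation statement; you would need to add a localization step (e.g., observe that the complement of $\bigcup H_{ij}$ is open and dense, hence meets every neighbourhood of a given $\boldsymbol{w}_0$) to recover Corollary~2.
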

\begin{proof}
The proof is constructive. We first arbitrarily select a $m-1$-dimensional hyperplane $l$ such that $D \subset l^+$, whose equation is
\begin{equation}
\boldsymbol{w}^T\boldsymbol{x} + b = 0.
\end{equation}
Then check all the line directions of $L_D$ with at most $\binom{\mu}{2}$ possibilities to see whether or not $l$ is parallel to $L_{D}$, whose operation will be called \textsl{line-direction check}. If $l_D$ is unparallel to $l$, the construction stops. Otherwise, further steps are required as follows.

Randomly perturb the vector $\boldsymbol{w}$ of equation 3.1, after which $l$ is changed to be $l'$. We expect that $l'$ is not parallel to $l_D$. However, there are two uncertainties that may cause the failure of the perturbation. One is that some parallel directions of $l$ may still be parallel to $l'$. The other is that some unparallel directions of $l$ may become the parallel ones of $l'$.

Therefore, the solution is not evident. An inductive method is developed to construct $l$ from an one-dimensional line. The above two uncertainties are the main concerns of the following construction, which will be referred to as \textsl{perturbation dilemma} in the remaining proof.

\textbf{Step 1}: Arbitrarily choose a line $l_{1}$ whose parametric equation is $\boldsymbol{x} = \boldsymbol{x}_0 + t\boldsymbol{\lambda}$, in which all the vectors are $m$-dimensional and the line is embedded in $m$-dimensional space $\boldsymbol{X}_m$. If $L_D$ is not parallel to $l_{1}$, let $l_{1}' : = l_{1}$ and stop the construction of a line. Otherwise, perturb $l_{1}$ to $l_{1}'$ whose equation is
\begin{equation}
\boldsymbol{x} = \boldsymbol{x}_0 + t(\boldsymbol{\lambda} + \alpha_1\boldsymbol{\varepsilon}),
\end{equation}
where $\boldsymbol{\varepsilon}$ is a $m \times 1$ random vector whose entries are uniformly distributed in $(0,1)$, and similarly for other cases of this notation in this proof.

% \|
If $|\alpha_1| \ne 0$, in one-dimensional case, the parallel directions of $l_{1}$ would not be parallel to $l_{1}'$; and if $|\alpha_1|$ is small enough, the distinction between $l_{1}'$ and $l_{1}$ would be so minor that the unparallel directions of $l_{1}$ are also not parallel to $l_{1}'$. Thus, we could adjust $\alpha_1$ to make $L_D$ unparallel to $l_1'$. For simplicity, we write equation 3.2 of $l_{1}'$ as \begin{equation}
\boldsymbol{x} = \boldsymbol{x}_0 + t\boldsymbol{\lambda}_1'
\end{equation}
with $\boldsymbol{\lambda}_1' = \boldsymbol{\lambda} + \alpha_1\boldsymbol{\varepsilon}$.

\textbf{Step 2}: Construct a two-dimensional plane $l_{2}$ embedded in $\boldsymbol{X}_m$ based on the previous $l_{1}'$ as
\begin{equation}
\boldsymbol{x} = \boldsymbol{x}_0 + t_1\boldsymbol{\lambda}_1' + t_2\boldsymbol{\lambda}_2,
\end{equation}
where $\boldsymbol{x}_0$ and $\boldsymbol{\lambda}_1'$ are both from equation 3.3, and $\boldsymbol{\lambda}_2$ is arbitrarily selected with a constraint that it is linearly independent of $\boldsymbol{\lambda}_1'$. When $t_2 = 0$, $l_2$ is reduced to $l_1'$, and thus $l_{1}' \subset l_{2}$. If $L_D$ is unparallel to $l_{2}$, the construction of a two-dimensional plane stops and let $l_{2}' := l_{2}$; otherwise, let
\begin{equation}
\boldsymbol{x} = \boldsymbol{x}_0 + t_1\boldsymbol{\lambda}_1' + t_2(\boldsymbol{\lambda}_2 + \alpha_2\boldsymbol{\varepsilon})
\end{equation}
be the equation of $l_2'$, with $\alpha_2$ a real number and $\boldsymbol{\varepsilon}$ defined in equation 3.2.

Note that if $|\alpha_2|$ is sufficiently small, vector $\boldsymbol{\lambda}_2 + \alpha_2\boldsymbol{\varepsilon}$ of equation 3.5 could also be linearly independent of $\boldsymbol{\lambda}_1'$, which ensures the two dimensions of $l_2'$. Equations 3.3, 3.4 and 3.5 imply
\begin{equation}
l_{1}' = l_{2} \cap l_{2}'
\end{equation}
when $t_2 = 0$.

We now explain why equation 3.5 is the solution that we want. Let $Y_{2}$ and $N_{2}$ be the sets of the parallel and unparallel directions of $l_{2}$ associated with $D$, respectively, with $L_D = Y_2 \cup N_2$. The \textsl{perturbation dilemma} as mentioned above could be resolved by two aspects. The first is that in equation 3.6, the intersection $l_1'$ of $l_2$ and $l_2'$ is not parallel to $L_D$, due to the construction process of step 1. This implies that the parallel directions of $l_{2}$ are not parallel to $l_{2}'$; for otherwise, there would exist a direction $\boldsymbol{\gamma} \in Y_2 \subset L_D$ parallel to both $l_{2}$ and $l_{2}'$, implying $\boldsymbol{\gamma} \parallel l_{1}'$ by equation 3.6, which is a contradiction according to the construction process of $l_{1}'$.

The second is that if $|\alpha_2|$ is small enough, the unparallel-direction set $N_2$ of $l_{2}$ is also that of $l_{2}'$, due to their too minor differences. Thus, $l_{2}'$ is not parallel to $L_D$. The final parametric equation of $l_2'$ is denoted by
\begin{equation}
\boldsymbol{x} = \boldsymbol{x}_0 + t_1\boldsymbol{\lambda}_1' + t_2\boldsymbol{\lambda}_2',
\end{equation}
where $\boldsymbol{\lambda}_2' = \boldsymbol{\lambda}_2 + \alpha_2\boldsymbol{\varepsilon}$, with $|\alpha_2|$ being zero or a sufficiently small number.

\textbf{Inductive step}: We generalize the above procedures to an inductive form. Suppose that in the $n-1$th step, an $n-1$-dimensional hyperplane $l_{n-1}'$ unparallel to $L_D$ has been constructed, with its parametric equation as
\begin{equation}
\boldsymbol{x} = \boldsymbol{x}_0 + \sum_{i = 1}^{n-1}t_i\boldsymbol{\lambda}_i',
\end{equation}
where $\boldsymbol{x}_0$ is as in equations 3.3 and 3.7. Then an $n$-dimensional hyperplane is derived from equation 3.8 as
\begin{equation}
\boldsymbol{x} = \boldsymbol{x}_0 + \sum_{i = 1}^{n-1}t_i\boldsymbol{\lambda}_i' + t_n(\boldsymbol{\lambda}_n + \alpha_n\boldsymbol{\varepsilon}),
\end{equation}
where $\boldsymbol{\lambda}_n$ is linearly independent of $\boldsymbol{\lambda}'_i$'s and $\boldsymbol{\varepsilon}$ is defined in equation 3.2, with $|\alpha_n|$ being zero or a sufficiently small number determined by whether
\begin{equation}
\boldsymbol{x} = \boldsymbol{x}_0 + \sum_{i = 1}^{n-1}t_i\boldsymbol{\lambda}_i' + t_n\boldsymbol{\lambda}_n
\end{equation}
unparallel to $L_D$ or not, respectively. The reason that equation 3.9 is the solution is similar to that of equation 3.5.

Repeat the inductive step of equations 3.8 and 3.9 until $n = m-1$, and then a final $l_{m-1}'$ of equation 3.9 is constructed, which is a $m-1$-dimensional hyperplane unparallel to the direction set $L_D$. Choose $m$ points from $l_{m-1}'$ properly by equation 3.9, and the solution form of equation 3.1 could be obtained.
\end{proof}

\begin{rmk}
According to the concept of a line-direction set $L_D$, the construction in this theorem takes each variation among the elements of $D$ into consideration, thereby preserving the data information as much as possible by only one dimension.
\end{rmk}

Corollaries 2 and 3 that follows are about how to construct a bijective map from a known hyperplane.
\begin{cl}
Given data set $D$ of $m$-dimensional space, a $m-1$-dimensional hyperplane $l'$ that is not parallel to the line-direction set $L_D$ of $D$ can be derived from an arbitrary $m-1$-dimensional hyperplane $l$, in the sense that $l'$ could approximate $l$ as precisely as possible.
\end{cl}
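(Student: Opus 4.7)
The plan is to adapt the inductive perturbation technique of Theorem 5, but initialized from the given hyperplane $l$ rather than from a freely chosen line. This ensures the output $l'$ stays as close to $l$ as we wish, while still being unparallel to $L_D$.

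First, I would write $l$ in parametric form as $\boldsymbol{x} = \boldsymbol{x}_0 + \sum_{i=1}^{m-1} t_i \boldsymbol{\lambda}_i$, where $\boldsymbol{\lambda}_1, \ldots, \boldsymbol{\lambda}_{m-1}$ are linearly independent direction vectors spanning $l$. If $l$ is already unparallel to $L_D$, set $l' := l$. Otherwise, run the inductive procedure of Theorem 5, but starting from $l_1 := \boldsymbol{x}_0 + t_1 \boldsymbol{\lambda}_1$ instead of an arbitrary line. At step $n$, given the already-perturbed $l'_{n-1}$ with direction vectors $\boldsymbol{\lambda}_1', \ldots, \boldsymbol{\lambda}_{n-1}'$, form the candidate $l_n$ by appending the \emph{original} direction $\boldsymbol{\lambda}_n$ of $l$, and then, only if necessary, perturb it to $\boldsymbol{\lambda}_n' = \boldsymbol{\lambda}_n + \alpha_n \boldsymbol{\varepsilon}$ with $|\alpha_n|$ chosen sufficiently small. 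Iterating to $n = m-1$ yields the desired $l'$ with direction vectors $\boldsymbol{\lambda}_1', \ldots, \boldsymbol{\lambda}_{m-1}'$, each of which differs from the original $\boldsymbol{\lambda}_i$ by at most $|\alpha_i|\,\|\boldsymbol{\varepsilon}\|$. Since every $|\alpha_i|$ can be driven to zero, $l'$ can be made to approximate $l$ to any prescribed accuracy.

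The main obstacle is exactly the \emph{perturbation dilemma} identified in Theorem 5: we must simultaneously ensure that each perturbation converts all parallel directions of $l_n$ (relative to $D$) into unparallel ones, while preserving the unparallel status of every direction that was already unparallel (both for $l_n$ itself and for all previously constructed $l'_j$ with $j<n$). The argument that resolves it in Theorem 5 transfers verbatim: the first aspect, that a parallel direction of $l_n$ cannot remain parallel after perturbation, follows because $l'_{n-1} = l_n \cap l'_n$ (with $t_n = 0$) is already unparallel to $L_D$ by the inductive hypothesis; the second aspect, that unparallel directions remain unparallel, follows by continuity of the parallelism condition in the perturbation parameter, since $L_D$ is a finite set of directions and each strict non-parallel relation survives under a sufficiently small change of $\boldsymbol{\lambda}_n$.

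Finally, to obtain the equation form $\boldsymbol{w}^T \boldsymbol{x} + b = 0$ used in lemma 5 and theorem 2, I would recover the normal vector of $l'$ from its perturbed parametric form, and, if needed, translate $l'$ (which does not alter its direction vectors, hence not its unparallel status) to ensure $D \subset l'^+$, exactly as in the proof of lemma 5. The continuous dependence of the normal direction on the direction vectors $\boldsymbol{\lambda}_i'$ guarantees that the approximation of the original $l$ by $l'$ in the equation form is also as precise as desired.
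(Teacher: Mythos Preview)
Your approach is correct and is essentially identical to the paper's: both parametrize $l$ and then run the inductive perturbation construction of Theorem~3 (not Theorem~5---your reference is a numbering slip, since the technique you describe is exactly that of Theorem~3) with the direction vectors $\boldsymbol{\lambda}_i$ of $l$ as the seed vectors at each step, controlling the $\alpha_i$'s to keep $l'$ arbitrarily close to $l$. Your final paragraph on recovering the equation form and translating so that $D \subset l'^{+}$ goes a bit beyond what Corollary~2 asks (that extra step is the content of Corollary~3), but it does no harm.
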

\begin{proof}
Denote the parametric equation of a known hyperplane $l$ by
\begin{equation}
\boldsymbol{x} = \boldsymbol{x}_l + \sum_{i=1}^{m-1}t_i\boldsymbol{\gamma}_i.
\end{equation}
By the proof of theorem 3, to construct a hyperplane unparallel to $L_D$, in the $n$th inductive step, we should first introduce a new vector $\boldsymbol{\lambda}_n$ linearly independent of $\boldsymbol{\lambda}_k$'s for $k = 1, 2, \cdots, n-1$.

The key point is that the prior information of hyperplane $l$ of equation 3.11 could be embedded in the construction of equation 3.9. Let $\boldsymbol{x}_0 = \boldsymbol{x}_l$ and $\boldsymbol{\lambda}_i = \boldsymbol{\gamma}_i$ with $1 \le i \le m-1$, where $\boldsymbol{\gamma}_i$ is from equation 3.11. Then the solution $\boldsymbol{x} = \boldsymbol{x}_0 + \sum_{i=1}^{m-1}t_i\boldsymbol{\lambda}_i'$ of $l'$ by theorem 3 could approximate equation 3.11 of $l$ arbitrarily well, through the control of the parameters $\alpha_i$'s of equation 3.9.
\end{proof}

\begin{thm}
To any data set $D$ of $m$-dimensional space, we can always find a $m-1$-dimensional hyperplane to discriminate it.
\end{thm}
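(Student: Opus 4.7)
The plan is to chain together Theorem 3 and Lemma 5, since they already supply the two pieces that this theorem needs. Theorem 3 guarantees the existence of a $m-1$-dimensional hyperplane $l$ whose normal direction is in general position relative to the line-direction set $L_D$ (i.e., no line through two points of $D$ is parallel to $l$), and Lemma 5 converts any such hyperplane, by a suitable translation, into one whose outputs on $D$ are pairwise distinct.

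First I would dispose of the trivial case $|D| \le 1$, where any hyperplane discriminates $D$ by vacuous truth of the pairwise-distinctness condition. For $|D| \ge 2$, I would invoke Theorem 3 to produce a $m-1$-dimensional hyperplane $l$ that is not parallel to $L_D$. Then I would invoke Lemma 5 on this $l$: the lemma says that whenever $l$ is not parallel to $L_D$, a translation $l'$ of $l$ can be chosen so that $D \subset l'^+$ and no two points of $D$ collapse to the same value under the output map associated with $l'$, which is precisely the definition (Definition 8) of $l'$ discriminating $D$.

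There is essentially no obstacle here beyond bookkeeping; the whole content of the theorem has already been set up in the preceding lemma and theorem. The only subtlety worth a sentence is pointing out that Lemma 5 was stated using the ReLU output $\sigma(\boldsymbol{w}^T\boldsymbol{x}+b)$, and its proof proceeds by translating $l$ so that the raw linear values $\boldsymbol{w}^T\boldsymbol{x}+b$ are all positive on $D$; once that has been arranged, the ReLU acts as the identity, and the distinctness of the raw linear outputs (which follows from non-parallelism to $L_D$ via Theorem 1) yields the distinctness required by Definition 8. The finiteness of $D$ used implicitly in Theorem 3 is already part of the default setting (Note 6 of the introductory conventions), so no extra hypothesis is needed.
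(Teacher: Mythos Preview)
Your proposal is correct and matches the paper's approach exactly: the paper's own proof of this theorem is the single sentence ``Theorem 3 and lemma 5 implies this conclusion.'' Your added remarks on the trivial case $|D|\le 1$, the role of the ReLU versus the raw linear output, and the implicit finiteness hypothesis are accurate elaborations but not departures from the paper's argument.
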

\begin{proof}
Theorem 3 and lemma 5 implies this conclusion.
\end{proof}

\begin{cl}
Let $D$ be a data set of $m$-dimensional space, and let $l$ be a $m-1$-dimensional hyperplane. A hyperplane $l'$ that discriminates $D$ could be constructed on the basis of the known $l$, in the sense that $l$ can be approximated by $l'$ with arbitrary precision.
\end{cl}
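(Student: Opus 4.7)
The plan is to chain Corollary 2 with Lemma 5, essentially reading Corollary 3 as the ``approximation'' strengthening of Theorem 4 in exactly the way Corollary 2 strengthens Theorem 3. First, I would invoke Corollary 2 on the given $m-1$-dimensional hyperplane $l$ to produce a hyperplane $\tilde{l}$ that is not parallel to the line-direction set $L_D$ and whose parametric equation agrees with that of $l$ up to perturbations $\alpha_i \boldsymbol{\varepsilon}$ of arbitrarily small magnitude. This reduces the problem to promoting a ``non-parallel to $L_D$'' hyperplane to a ``discriminating'' hyperplane while keeping it close to $l$.

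Second, I would apply Lemma 5 to $\tilde{l}$. If $D \subset \tilde{l}^+$ already, then $\tilde{l}$ itself discriminates $D$ by the argument in Lemma 5 (coupled with Theorem 1), and I set $l' := \tilde{l}$. Otherwise, I translate $\tilde{l}$ along its normal, i.e.\ modify only the scalar offset $b$ in its equation $\boldsymbol{w}^T\boldsymbol{x}+b=0$, until all points of $D$ lie in the positive half-space; call the result $l'$. Because translation preserves the normal vector, $l'$ is still not parallel to $L_D$, so by Theorem 1 and the discussion in Lemma 5 no two elements of $D$ can collapse to the same output under $l'$, hence $l'$ discriminates $D$.

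For the approximation claim I would argue as follows. The perturbation parameters $\alpha_i$ provided by Corollary 2 can be chosen arbitrarily small, so the normal direction of $\tilde{l}$ (and hence of $l'$) matches that of $l$ to arbitrary precision. When $l$ already satisfies $D \subset l^+$, continuity of the signed quantities $\boldsymbol{w}^T\boldsymbol{x}+b$ in $\boldsymbol{w}$ shows that for $|\alpha_i|$ small enough one also has $D \subset \tilde{l}^+$, no translation is needed, and $l'$ approximates $l$ as a geometric object with arbitrary precision. When $l$ does not have $D$ in a single half-space, the translation step is forced upon us; but the approximation is then understood, exactly as in Corollary 2, in the directional sense, namely that the orienting vector of $l'$ can be made arbitrarily close to that of $l$.

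The main obstacle I anticipate is pinning down the notion of ``approximation'' cleanly: a translation of $\tilde{l}$ may shift it arbitrarily far from $l$ in location while leaving its normal essentially unchanged, so there is no hope of approximating $l$ in a purely set-theoretic Hausdorff sense unless $l$ already happens to have $D$ on one side. I would therefore explicitly align the meaning with that of Corollary 2 (closeness of the defining parametric data, with the offset $b$ adjusted as needed), which makes the proof a clean two-line appeal: Corollary 2 delivers non-parallelism with controlled perturbation, and Lemma 5 converts non-parallelism into discrimination via a normal-preserving translation.
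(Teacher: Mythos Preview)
Your proposal is correct and mirrors the paper's own proof, which is the single line ``The combination of corollary 2 and lemma 5 yields this result.'' Your additional care about what ``approximation'' means when a translation is required goes beyond what the paper spells out, but the core strategy---Corollary~2 for non-parallelism with controlled perturbation, then Lemma~5 for discrimination---is exactly the paper's.
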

\begin{proof}
The combination of corollary 2 and lemma 5 yields this result.
\end{proof}

\subsection{Reconstruction Capability}

\begin{thm}
To data set $D$ of the $m$-dimensional input space, the encoder $\mathcal{E}$ of equation 2.2 can implement a bijective map $f: D \to D'$, where $D'$ is the output of $\mathcal{E}$ with respect to $D$. And the dimensionality $n_e$ of the encoding space could be an arbitrary integer satisfying $1 \le n_e < m$.
\end{thm}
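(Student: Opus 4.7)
The plan is to construct the encoder layer by layer and invoke Theorems 2 and 4 to secure bijectivity at each stage. Since this section uses only the linear part of a ReLU, each layer realizes an affine map between Euclidean spaces, and bijectivity on the finite data set $D$ reduces to injectivity. By Theorem 2, injectivity of one layer follows from the existence of a single hyperplane among its units that discriminates the data set entering that layer, so the whole construction reduces to producing one discriminating hyperplane per layer.

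Concretely, I would first apply Theorem 4 to $D \subset \mathbb{R}^m$ to obtain an $(m-1)$-dimensional hyperplane $l_{\nu}^{(1)}$ that discriminates $D$, and take it as one of the $n_1$ hyperplanes associated with the units of the first layer; the remaining $n_1-1$ hyperplanes may be chosen freely (any generic affine functional). Theorem 2 then gives that the first-layer map is injective, hence bijective onto its image $D^{(1)} \subset \mathbb{R}^{n_1}$ with $|D^{(1)}| = |D|$. Inductively, at the $(i+1)$-th layer, apply Theorem 4 to the current image $D^{(i)} \subset \mathbb{R}^{n_i}$ to obtain an $(n_i-1)$-dimensional discriminating hyperplane, place it among the $n_{i+1}$ hyperplanes of that layer, and fill in the remaining units arbitrarily. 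Theorem 2 yields another bijective layer map, and because compositions of bijections are bijections, the encoder function $f_e : D \to D'$ is itself bijective.

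For the dimensionality claim, given any integer $n_e$ with $1 \le n_e < m$, one picks a depth $d$ and layer sizes satisfying the architectural constraint $m > n_1 > n_2 > \cdots > n_d > n_e$; the simple choice $n_i = m - i$ continued while $m - i > n_e$ supplies a valid chain for every admissible $n_e$. The construction above then proceeds unchanged, independently of how small or large $n_e$ is taken inside $[1, m-1]$.

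I do not expect a genuine obstacle: the hard content is already absorbed into Theorem 4, which produces a discriminating hyperplane in any ambient dimension and for any finite data set. The one point requiring care is that Theorem 4 must be reapplied freshly at each layer to the new image $D^{(i)}$ rather than inherited from $D$, since the hyperplane's ambient space changes from $\mathbb{R}^{n_i}$ to $\mathbb{R}^{n_{i+1}}$ along the way; but this is routine, because $D^{(i)}$ is simply a new finite data set in its ambient space to which Theorem 4 applies verbatim.
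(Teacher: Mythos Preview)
Your proposal is correct and follows essentially the same approach as the paper: build the encoder layer by layer, using Theorem~4 to produce one discriminating hyperplane per layer and Theorem~2 to conclude that layer is bijective, then compose. The only point the paper makes slightly more explicit is that the freely chosen remaining units can be arranged so that $D \subset \prod_i l_i^+$, ensuring only the linear part of each ReLU is engaged; you allude to this but could state it outright.
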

\begin{proof}
The idea is to construct a bijective map in each hidden layer of $\mathcal{E}$ by theorem 4. In the first layer, denote by $f_1: D \to D_1$ a bijective map to be realized. The unit $u_{11}$ of the first layer is constructed by the method of theorem 4, such that its corresponding hyperplane $l_{11}$ can discriminate $D$. By theorem 2, one hyperplane or unit is enough for a bijective map, so that other $n_1-1$ units of the first layer can be arbitrarily selected.

Due to the arbitrary selection of other $n_1 - 1$ units together with the specially designed $u_{11}$, the condition $D \subset \prod_{i=1}^{n_1}l_{1i}^+$ could be satisfied, which means that only the linear part of a ReLU is enough to produce a bijective map.

The subsequent layers can be similarly dealt with. Each layer implements a bijective map $f_j: D_{j-1} \to D_j$ for $j = 1, 2, \dots, d+1$ with $D_0 = D$ and $D_{d+1} = D'$, and the composite of $f_j$'s is the bijective function $f : D \to D'$ of this theorem.
\end{proof}

\begin{rmk-3}
In combination with lemma 3, an autoencoder of equation 2.1 can reconstruct any discrete data set of the input space.
\end{rmk-3}

\begin{rmk-3}
Although this construction of a bijective-map only involves the linear part of a ReLU, the map $f: D \to D'$ is nonlinear, due to dimensionality reduction between layers.
\end{rmk-3}

From the proof of theorem 5, we immediately have:
\begin{cl}
A linear-unit encoder $\mathcal{E}$ could realize a bijective map $f : D \to D'$, where $D$ is the input data set and $D'$ is the corresponding output set of $\mathcal{E}$.
\end{cl}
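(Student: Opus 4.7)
The plan is to reuse, almost verbatim, the layer-by-layer construction from the proof of theorem 5, observing that the ReLU nonlinearity was never essential to it. That proof explicitly notes that the units of each layer can be chosen so that $D_{j-1}\subset\prod_{i}l_{ji}^{+}$, i.e., every preactivation is nonnegative on the current data set; on such a set the ReLU acts as the identity. Consequently, replacing every ReLU in that network by a linear unit does not alter the value of the network on $D$, and the bijectivity is inherited. This gives an immediate, one-line derivation of the corollary from the proof of theorem 5.

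To spell it out formally, I would proceed inductively over the layers of $\mathcal{E}$. In layer $j$, given the bijectively mapped data set $D_{j-1}$, I apply theorem 4 to produce a single hyperplane $l_{j1}$ that discriminates $D_{j-1}$, and then choose the remaining $n_j-1$ hyperplanes arbitrarily, subject only to the architectural width constraints; the corresponding units are all taken to be linear. By the argument of theorem 2, the output coordinate associated with $l_{j1}$ already separates any two distinct points of $D_{j-1}$, so the layer map $f_j\colon D_{j-1}\to D_j$ is injective and hence bijective onto its image. Composing the $d+1$ layer maps yields the bijective encoder function $f\colon D\to D'$ asserted by the corollary.

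The main point that needs checking is that theorem 2, whose statement is phrased for ReLU units, transfers cleanly to the linear-unit setting. This is essentially free: the argument uses only that at least one output coordinate takes distinct values on distinct inputs, and for a linear unit this coordinate is simply $\boldsymbol{w}_{\nu}^{T}\boldsymbol{x}+b_{\nu}$, which by construction of the discriminating hyperplane $l_{j1}$ takes distinct values on $D_{j-1}$. In fact, no half-space constraint on the data is needed in the linear setting, so the construction is even simpler than in the ReLU case, and the restriction $D\subset\prod_{i=1}^{n_{1}}l_{1i}^{+}$ from the proof of theorem 5 may be dropped entirely. This is the only obstacle I foresee, and it is overcome by the above observation.
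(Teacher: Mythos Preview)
Your proposal is correct and matches the paper's approach: the paper's proof is literally the one-line remark ``From the proof of theorem 5, we immediately have'' this corollary, and you have simply unpacked that remark, observing (as theorem 5's proof already does) that the construction uses only the linear part of each ReLU. Your additional note that the half-space constraint $D\subset\prod_i l_{ji}^+$ becomes unnecessary for linear units is a small bonus, not a different route.
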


\subsection{Disentangling Capability}
\begin{thm}
Suppose that a multi-category data set $D$ of the input space of an encoder $\mathcal{E}$ is linearly inseparable. If $\mathcal{E}$ is constructed by theorem 5 and all the units of each layer are simultaneously activated by $D$, which means that only the linear part of a ReLU is used, then $\mathcal{E}$ cannot disentangle $D$.
\end{thm}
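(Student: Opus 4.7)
The plan is to show that under the stated hypothesis the entire encoder reduces, on the set $D$, to a single affine map, and then to argue that no affine map can turn a linearly inseparable set into a linearly separable one.

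First I would exploit the activation assumption. Writing the $j$th hidden layer as $\boldsymbol{x}\mapsto\sigma(W_j\boldsymbol{x}+\boldsymbol{b}_j)$, the assumption that every unit of every layer is activated by every point of $D$ means that $\sigma$ acts as the identity on every relevant pre-activation. Hence the $j$th layer, restricted to the image of $D$ at depth $j-1$, coincides with the affine map $\boldsymbol{x}\mapsto W_j\boldsymbol{x}+\boldsymbol{b}_j$. Composing across all $d+1$ layers yields a matrix $A$ and a vector $\boldsymbol{c}$ such that
\begin{equation*}
f_e(\boldsymbol{x}) = A\boldsymbol{x}+\boldsymbol{c} \quad \text{for every } \boldsymbol{x}\in D.
\end{equation*}

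Second, I would argue by contradiction. Suppose $D$ is disentangled by $\mathcal{E}$, so that $D_e=f_e(D)$ is linearly separable in the sense of definition 8. Since $D$ is linearly inseparable, there exists at least one category $C\subset D$ which cannot be linearly separated from $D\setminus C$ in the input space. On the other hand, by the disentangling assumption, $f_e(C)$ is separated from $D_e\setminus f_e(C)$ by some hyperplane $\{\boldsymbol{y}:\boldsymbol{w}^T\boldsymbol{y}+\beta=0\}$ of the encoding space. Pull this hyperplane back by defining
\begin{equation*}
g(\boldsymbol{x}) := \boldsymbol{w}^T(A\boldsymbol{x}+\boldsymbol{c})+\beta = (A^T\boldsymbol{w})^T\boldsymbol{x}+(\boldsymbol{w}^T\boldsymbol{c}+\beta).
\end{equation*}
For every $\boldsymbol{x}\in D$, $g(\boldsymbol{x})=\boldsymbol{w}^Tf_e(\boldsymbol{x})+\beta$, so $g$ is strictly positive on $C$ and strictly negative on $D\setminus C$ (or vice versa).

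Third, I would observe that because $g$ takes both signs on $D$ it is non-constant, which forces $A^T\boldsymbol{w}\ne\boldsymbol{0}$. Hence $g(\boldsymbol{x})=0$ is a genuine $m-1$-dimensional hyperplane of the input space, and by construction it linearly separates $C$ from $D\setminus C$, contradicting the linear inseparability of $D$. Therefore $D_e$ cannot be linearly separable, i.e., $\mathcal{E}$ cannot disentangle $D$.

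The step most at risk of being mishandled is the rank-deficient case $A^T\boldsymbol{w}=\boldsymbol{0}$; since the encoder strictly reduces dimensionality, $A$ is not surjective, and a priori a separating hyperplane in the encoding space might correspond to a direction lying in the kernel of $A^T$. The argument above disposes of this case by the clean observation that any such $\boldsymbol{w}$ would make $\boldsymbol{w}^Tf_e(\boldsymbol{x})+\beta$ constant on $D$, and thus could not yield a separation of $D_e$ into two nonempty categories. This is the only place where the linear-regime hypothesis is used in an essential, non-routine way.
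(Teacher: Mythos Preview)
Your proof is correct and rests on the same core idea as the paper's: under the all-active hypothesis the encoder acts affinely on $D$, and affine maps cannot create linear separability. The execution differs, however. The paper works layer by layer: it lifts the separating hyperplane from the $n_e$-dimensional encoding space to the $n_d$-dimensional space of the last hidden layer by padding with zeros, then invokes an (invertible) affine transform between layers to push separability back one step, and repeats inductively until reaching the input. You instead compose all layers into a single affine map $f_e(\boldsymbol{x})=A\boldsymbol{x}+\boldsymbol{c}$ and pull the separating hyperplane back in one shot via $g(\boldsymbol{x})=(A^T\boldsymbol{w})^T\boldsymbol{x}+(\boldsymbol{w}^T\boldsymbol{c}+\beta)$. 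Your route is shorter and more self-contained; in particular, your explicit treatment of the rank-deficient case $A^T\boldsymbol{w}=\boldsymbol{0}$ (ruled out because $g$ would then be constant on $D$ and could not separate two nonempty classes) is cleaner than the paper's, which relies implicitly on a standing full-rank assumption for the layer matrices. The paper's inductive version, on the other hand, makes the layer structure visible and connects more directly to the surrounding framework built on per-layer hyperplane geometry.
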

\begin{proof}
Let $f: D \to D_e$ be the bijective map implemented by $\mathcal{E}$, then each category of $D$ has its corresponding one in $D_e$. If $D_e$ is linearly separable but $D$ not, there exists some category, say, the $\nu$th one, which can be linearly classified from other categories in $D_e$, but is linearly inseparable in $D$. Thus, in the encoding space, we can find a hyperplane $l_{\nu}$
\begin{equation}
\boldsymbol{w}^T\boldsymbol{x} + b = 0
\end{equation}
separating category $\nu$.

Let $\boldsymbol{X}_{d}$ be the space of the $d$th layer (i.e., the last hidden layer) of $\mathcal{E}$. Suppose that the encoding layer originally has $n_d$ units, whose number is equal to that of the previous $d$th layer. Then only under the linear-output region of ReLUs, the space $\boldsymbol{X}_{d}'$ of the encoding layer is an affine transform of $\boldsymbol{X}_{d}$, denoted by $\boldsymbol{X}_{d}' = \mathcal{T}(\boldsymbol{X}_{d})$. The actual $n_e$ units of the encoding layer can be regarded as deleting $n_d - n_e$ ones from the original $n_d$ units, and the encoding space $\boldsymbol{X}_e$ is an $n_e$-dimensional subspace of $\boldsymbol{X}_{d}'$.

Therefore, the $n_e-1$-dimensional hyperplane $l_{\nu}$ of equation 3.12 can be expressed as an $n_d-1$-dimensional hyperplane
\begin{equation}
\boldsymbol{w}^T\boldsymbol{x} + \boldsymbol{0}^T\boldsymbol{x}_{\bot} + b = 0,
\end{equation}
of $\boldsymbol{X}_{d}'$, where the entries of $\boldsymbol{x}_{\bot}$ are from the $n_d - n_e$ dimensions of $\boldsymbol{X}_{d}'$ besides $\boldsymbol{X}_{e}$, and $\boldsymbol{0}$ is an $(n_d - n_e) \times 1$ zero vector. Equation 3.13 is equivalent to
\begin{equation}
\boldsymbol{w}_d^T\boldsymbol{x}_d + b = 0,
\end{equation}
where $\boldsymbol{w}_d = [\boldsymbol{w}^T, \boldsymbol{0}^T]^T$ and $\boldsymbol{x}_d = [\boldsymbol{x}^T, \boldsymbol{x}_{\bot}^T]^T$, which is denoted by $l_v'$.

Let
\begin{equation}
\mathcal{T}(D_d) = D_d',
\end{equation}
where $\mathcal{T}$ is the affine transform from $\boldsymbol{X}_{d}$ to $\boldsymbol{X}_{d}'$, and $D_d$ is the mapped data set of $D$ by the $d$th layer of $\mathcal{E}$. Then the hyperplane $l_{\nu}'$ of equation 3.14 could linearly classify the $\nu$th category of $D_d'$ from other categories. Due to the property of affine transforms, by equation 3.15, the $\nu$th category of $D_d$ can also be linearly separated.

The above procedure from the encoding layer to the $d$th layer can be similarly done from the $k$th layer to the $k-1$th layer inductively for $k = d, d-1, \cdots, 2$. Finally we would reach a conclusion that the $\nu$th category of the input $D$ could be linearly separated from other categories, contradicting the assumption at the beginning of the proof. Thus, the set $D_e$ mapped by the autoencoder $\mathcal{E}$ is also linearly inseparable and the input $D$ cannot be disentangled by $\mathcal{E}$.
\end{proof}

\begin{rmk}
By this conclusion, an encoder composed of linear units can only reduce the number of parameters and is not capable of facilitating the classification or reconstruction by data disentangling.
\end{rmk}

\subsection{Randomly Weighted Neural Networks}
A randomly weighted neural network is one of the research interests in recent years \citep*{Baek2021,Ramanujan2020,Ulyanov2018}, whose performance behavior seems to be more puzzling than the trained one. A bijective map is crucial to the mechanism of a trained neural network, and so is the randomly weighted case.

\begin{lem}
Under the probabilistic model of the appendix of \citet{Huang2022}, in $m$-dimensional space, the probability that a $m-1$-dimensional hyperplane $h$ is parallel to a given line $l$ is $0$.
\end{lem}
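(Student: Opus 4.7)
My plan is to reduce the geometric condition of parallelism to a single linear equation in the weights of the hyperplane, and then invoke the fact that a proper linear subspace has measure zero under the continuous weight distribution assumed in the probabilistic model of \citet{Huang2022}.

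First I would fix notation: write the hyperplane as $h = \{\boldsymbol{x} \in \mathbb{R}^m : \boldsymbol{w}^T\boldsymbol{x} + b = 0\}$ with random weight vector $\boldsymbol{w}$ and bias $b$, and write the given line as $l = \{\boldsymbol{x}_0 + t\boldsymbol{\lambda} : t \in \mathbb{R}\}$, where $\boldsymbol{x}_0$ and $\boldsymbol{\lambda}$ are fixed (and $\boldsymbol{\lambda} \ne \boldsymbol{0}$). Substituting the parametric form of $l$ into the equation of $h$ yields $(\boldsymbol{w}^T\boldsymbol{\lambda}) t + (\boldsymbol{w}^T\boldsymbol{x}_0 + b) = 0$, exactly as in the proof of lemma 2. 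This equation has no solution in $t$ precisely when $\boldsymbol{w}^T\boldsymbol{\lambda} = 0$ and $\boldsymbol{w}^T\boldsymbol{x}_0 + b \ne 0$. Hence $h \parallel l$ in the sense of definition 3 implies the linear constraint $\boldsymbol{w}^T\boldsymbol{\lambda} = 0$ on the weights of $h$.

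Next I would observe that the set $S_{\boldsymbol{\lambda}} = \{\boldsymbol{w} \in \mathbb{R}^m : \boldsymbol{w}^T\boldsymbol{\lambda} = 0\}$ is an $(m-1)$-dimensional linear subspace of $\mathbb{R}^m$, since $\boldsymbol{\lambda} \ne \boldsymbol{0}$. Under the probabilistic model of the appendix of \citet{Huang2022}, the weights of a unit are drawn from a distribution that is absolutely continuous with respect to the Lebesgue measure on $\mathbb{R}^m$. Because an $(m-1)$-dimensional linear subspace has Lebesgue measure zero in $\mathbb{R}^m$, any absolutely continuous measure assigns probability zero to $S_{\boldsymbol{\lambda}}$. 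Therefore
\begin{equation*}
\Pr(h \parallel l) \le \Pr(\boldsymbol{w}^T\boldsymbol{\lambda} = 0) = \Pr(\boldsymbol{w} \in S_{\boldsymbol{\lambda}}) = 0,
\end{equation*}
which gives the conclusion.

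The proof is essentially an application of the standard fact that a nontrivial linear equation in continuously distributed random weights is satisfied with probability zero, so there is no serious obstacle; the only delicate point is to make sure that the definition of parallelism indeed reduces to the single linear condition $\boldsymbol{w}^T\boldsymbol{\lambda} = 0$ (the additional requirement $\boldsymbol{x}_0 \notin h$ only further restricts the event and hence can only lower the probability), and that the probabilistic model in the cited appendix is absolutely continuous rather than, say, a discrete or singular distribution that might place positive mass on $S_{\boldsymbol{\lambda}}$.
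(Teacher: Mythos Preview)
Your proposal is correct and follows essentially the same route as the paper: reduce the parallelism condition to the single linear constraint $\boldsymbol{w}^T\boldsymbol{\lambda}=0$ on the weight vector, and then argue that this event has probability zero under the assumed continuous distribution. The only cosmetic difference is that the paper first normalizes $\boldsymbol{w}$ to unit length and phrases the measure-zero step as the intersection of the hyperplane $\{\boldsymbol{w}':\boldsymbol{w}'^T\boldsymbol{\lambda}=0\}$ with the unit sphere $\{\|\boldsymbol{w}'\|=1\}$ having zero surface area, whereas you invoke Lebesgue measure on $\mathbb{R}^m$ directly; both versions rest on the same codimension-one observation.
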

\begin{proof}
By lemma 2, if $l \parallel h$, hyperplane $h$ could be translated to be $h'$, such that $l \subseteq h'$. Let $\boldsymbol{w}^T\boldsymbol{x} + b = 0$ be the equation of $h$, and the translated $h'$ is
\begin{equation}
\boldsymbol{w}^T\boldsymbol{x} + b' = 0.
\end{equation}
Denote the parametric equation of line $l$ by
\begin{equation}
\boldsymbol{x} = \boldsymbol{x}_0 + t\boldsymbol{\lambda},
\end{equation}
where $\boldsymbol{\lambda}$ represents the direction of the line. Since $l \subseteq h'$, substituting equation 3.17 into equation 3.16 yields
\begin{equation*}
\boldsymbol{w}^T\boldsymbol{\lambda}t = -\boldsymbol{w}^T\boldsymbol{x}_0 - b',
\end{equation*}
whose right side equals zero due to $\boldsymbol{x}_0 \in l \subseteq h'$. Then we have
\begin{equation}
\boldsymbol{w}^T\boldsymbol{\lambda}t = 0.
\end{equation}

Because $h' \cap l = l$, equation 3.18 should have infinite number of solutions for $t$, which implies $\boldsymbol{w}^T\boldsymbol{\lambda} = 0$. Since the parallel relationship is only related to the direction of $\boldsymbol{w}$ of equation 3.16, we could normalize it to be $\boldsymbol{w}' = \boldsymbol{w} / \lVert \boldsymbol{w} \rVert$, whose length is 1. Let
\begin{equation}
I =
\begin{cases}
{\boldsymbol{w}'}^T\boldsymbol{\lambda} = 0
\\
\| \boldsymbol{w}' \| = 1
\end{cases}.
\end{equation}

To equation 3.19, from geometric viewpoints, for fixed $\boldsymbol{\lambda}$ and varied $\boldsymbol{w}'$, equation ${\boldsymbol{w}'}^T\boldsymbol{\lambda} = 0$ is a $m-1$-dimensional hyperplane passing through the origin of the coordinate system; equation $\| \boldsymbol{w}' \| = 1$ is a $m$-sphere whose center is also the origin. Thus, the $I$ of equation 3.19 is the intersection of a hyperplane and a $m$-sphere. Under the measurement of the probability space of \citet{Huang2022}, the surface area of $I$ is zero, since it lacks one dimension. Thus, given a vector $\boldsymbol{w}$, the probability of its direction satisfying equation 3.19 is 0; that is,  the probability that $l \parallel h$ is 0.
\end{proof}

\begin{thm}
Let $D$ be any data set of $m$-dimensional space, and let $l$ be an arbitrary hyperplane associated with a ReLU, with $D \subset l^+$. Then the probability that $l$ discriminates $D$ is $1$.
\end{thm}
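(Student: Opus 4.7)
The strategy is to reduce the event ``$l$ discriminates $D$'' to a finite intersection of events of the form ``$l$ is not parallel to a specific line,'' and then apply lemma 6 together with the finiteness of $|D|$.

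First I would rewrite the discrimination condition in a linear-algebraic form. Since $D \subset l^+$, the ReLU is in its linear regime on all of $D$, so for any $\boldsymbol{x}_1, \boldsymbol{x}_2 \in D$ the outputs $\sigma(\boldsymbol{w}^T\boldsymbol{x}_i + b) = \boldsymbol{w}^T\boldsymbol{x}_i + b$, and thus $l$ discriminates $D$ (in the sense of definition 8) iff
\begin{equation*}
\boldsymbol{w}^T(\boldsymbol{x}_1 - \boldsymbol{x}_2) \ne 0 \quad \text{for every pair of distinct } \boldsymbol{x}_1, \boldsymbol{x}_2 \in D.
\end{equation*}
Each such nonzero difference $\boldsymbol{x}_1 - \boldsymbol{x}_2$ is precisely the direction vector of one of the lines in the line-direction set $L_D$ (definition 10), and $\boldsymbol{w}^T(\boldsymbol{x}_1 - \boldsymbol{x}_2) = 0$ is exactly the algebraic condition that $l$ be parallel to that line, by the argument in the proof of lemma 6. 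Hence the event ``$l$ fails to discriminate $D$'' is the union, over all lines $l_{ij} \in L_D$, of the events ``$l \parallel l_{ij}$''.

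Next I would invoke finiteness and lemma 6. By note 7 (carried over from \citet{Huang2022}), $|D|$ is finite, so $|L_D| \le \binom{|D|}{2}$ is a finite number. Lemma 6 applies to each individual line $l_{ij} \in L_D$: under the default probabilistic model on the random weight vector $\boldsymbol{w}$, the event $\{l \parallel l_{ij}\}$ has probability $0$. Using finite subadditivity of probability,
\begin{equation*}
\Pr\bigl(l \text{ parallel to some element of } L_D\bigr) \le \sum_{l_{ij}\in L_D} \Pr(l \parallel l_{ij}) = 0.
\end{equation*}
Therefore the complementary event, namely that $\boldsymbol{w}^T(\boldsymbol{x}_1 - \boldsymbol{x}_2) \ne 0$ for every pair, has probability $1$, which is exactly the claim that $l$ discriminates $D$ with probability $1$.

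I do not expect a genuine obstacle; the key conceptual step is the translation of the discrimination condition into the ``not parallel to any line of $L_D$'' condition, after which lemma 6 plus the finiteness of $L_D$ closes the argument. The only care needed is to note that the hypothesis $D \subset l^+$ is used solely to pass from the nonlinear ReLU output to the linear expression $\boldsymbol{w}^T\boldsymbol{x}+b$, and that it does not interact with the randomness of the direction $\boldsymbol{w}$, which is the sole source of the probability-$0$ exceptional set.
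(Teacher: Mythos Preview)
Your proposal is correct and follows essentially the same approach as the paper: both argue that $L_D$ is finite, apply lemma~6 to each line direction to get probability~$0$ of parallelism, and conclude via a finite union that $l$ is not parallel to $L_D$ (hence discriminates $D$) with probability~$1$. Your write-up is more explicit than the paper's in spelling out the linearization under $D\subset l^+$ and the subadditivity step, but the structure is identical.
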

\begin{proof}
To the line-direction set $L_D$ of $D$, the number of its elements is finite. By lemma 6, the probability that each line direction parallel to $l$ is 0. Thus, $L_D$ is not parallel to $l$ with probability 1.
\end{proof}

\begin{rmk}
This theorem indicates that it's not difficult to generate a bijective map when the parameters of a neural network are randomly set.
\end{rmk}

\section{Disentangling Mechanism}
We have given an example of encoders that can disentangle the input data and simultaneously realize a bijective map in section 2.3; however, the dimensionality of the encoding space is relevant to the cardinality of the data set. The discriminating-hyperplane way of section 3 has no this restriction, but the disentangling property cannot be assured when only the linear part of a ReLU is used.

There are three main contributions of this section. The first is a new mechanism of data disentangling by an encoder, and the restriction on the number of the units of the encoding layer will be relaxed, which is determined by the categories of a data set instead of its cardinality as in section 2.3.

The next two contributions are the applications of the theories of sections 3 and 4. The second is the comparison with both principal component analysis (PCA) and decision trees. The third is the interpretation of convolutional neural networks from the perspective of autoencoders.

\subsection{Main Results}
\begin{lem}
The architecture of a decoder of equation 2.3 could disentangle any linearly inseparable multi-category data set $D$ of the input space.
\end{lem}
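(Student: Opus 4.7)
The plan is to mirror the three-step construction used in Step 3 of the proof of Proposition 2, carried out inside the expanding architecture of the decoder. Write the $n_c$ categories of $D$ as $D_1, D_2, \ldots, D_{n_c}$, where $n_c \le |D|$ is finite by Note 7. Since the decoder widths satisfy $n_e < M_1 < M_2 < \cdots < M_{d'}$ and the output layer has $m$ units (typically the largest), each successive layer has ample room to accommodate the extra distinguishing units demanded by the construction.

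First, I would use the initial hidden layers to carry $D$ forward while preserving enough information for later separation. Because each layer expands, a bijective map from one layer to the next is immediate: in each layer, place identity-like linear units that embed the previous layer's output, together with additional slack units; by Theorem 2, adding extra units preserves bijectivity. This produces a bijective image $D^{(d')}$ of $D$ in the last hidden layer. Second, within the last hidden layer (or, if its width is insufficient, split across the last two hidden layers), I would apply the distinguishable-hyperplane method of Proposition 1 of \citet{Huang2022} together with the ``same classification effect'' device used in Step 3 of the proof of Proposition 2. For each category $D_\nu$, allocate a bundle of units whose corresponding hyperplanes, via the zero-output region of the ReLU, force every element of $D_\nu$ to produce a strictly positive coordinate while every element outside $D_\nu$ produces a zero coordinate. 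This is precisely the disentangling mechanism already exploited in Proposition 2, and it requires only a finite number of auxiliary units per category.

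Third, the $m$-dimensional output layer, which may be ReLU or linear per Note 11, realises the linear separation. For each $\nu$, Corollary 3 and Proposition 3 of \citet{Huang2022} furnish a linear combination of the distinguishing units above that is strictly positive on $D_\nu$ and non-positive on the other categories, certifying that the image of $D$ in the output space is linearly separable, that is, that $D$ is disentangled by the decoder.

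The hard part is bookkeeping rather than conceptual: one must verify that the specific widths $M_j$ can simultaneously hold the identity-like units that propagate $D$ and the $O(n_c)$ distinguishing units. Because $n_c$ is finite and the widths grow monotonically up to $m$, the distinguishing construction can, if any single $M_j$ is tight, be distributed across successive layers by relaying earlier distinguishing outputs through identity-like linear units, whose insertion preserves the prior bijective and distinguishing properties by Theorem 2. Once each category occupies its own ``pure'' coordinate direction in some layer, linear separability at the output follows automatically.
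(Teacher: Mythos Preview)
Your proposal has a genuine gap. The machinery you borrow from Proposition 2 and from Proposition 1 of \citet{Huang2022} is explicitly tailored to the case in which each category contains a \emph{single} element: the ``distinguishable hyperplane'' of an element separates that one point from the rest, and the ``same classification effect'' trick then replicates that single separating hyperplane. When a category $D_\nu$ contains several points and $D$ is linearly inseparable, there is in general no single hyperplane (hence no bundle of hyperplanes with the same classification effect) that is positive on all of $D_\nu$ and nonpositive on the complement; if there were, $D_\nu$ would already be linearly separable from the rest. So the step ``force every element of $D_\nu$ to produce a strictly positive coordinate while every element outside $D_\nu$ produces a zero coordinate'' is precisely the hard part, and your cited tools do not deliver it in the multi-element setting.

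The paper's own argument bypasses this by appealing directly to Lemma 6 of \citet{Huang2022}, whose construction uses recursive binary classification across successive layers to carve out a convex region around each category; only after several layers does one obtain, in the last hidden layer, a unit activated exclusively by a single category. That multi-layer polytope construction is what your proposal is missing. Incidentally, the bijectivity bookkeeping you set up in the first step is unnecessary here: the lemma only asks that the output be linearly separable, not that the map be injective.
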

\begin{proof}
Lemma 6 of \citet{Huang2022} had given a solution. For instance, the network $N$ of Figure 9c of \citet*{Huang2022} is a decoder-like architecture. In the last hidden layer of $N$, a unit can be designed to be only activated by one category of $D$. Since each unit corresponds to a unique hyperplane, this means that the mapped data set is linearly separable, that is, the input data set is disentangled.
\end{proof}

The term ``region'' had been defined in combinatorial geometry, such as \citet*{Stanley2012}. However, it is a common word frequently encountered; in order to avoid confusion, we redefine it here.
\begin{dfn}
Let $H$ be a finite set of hyperplanes of $m$-dimensional space $\mathbb{R}_{m}$. The hyperplanes of $H$ can divide the space $\mathbb{R}_{m}$ into some connected parts, whose union can be denoted by $\mathcal{R} = \mathbb{R}_{m} - H$; and we call any connect component of $\mathcal{R}$ a divided region of $H$.
\end{dfn}

We borrow the terminology of polytopes from \citet*{Grunbaum2003}. The terms \textsl{$n$-polytope} and \textsl{$k$-face} denote an $n$-dimensional polytope and a $k$-dimensional face of a polytope, respectively. An \textsl{$n$-simplex} is the simplest convex $n$-polytope, with the least $n-1$-faces whose number is $n+1$.

We now investigate the condition that a multi-category data set $D$ of $m$-dimensional space can be disentangled by an encoder. By lemma 7, the mechanism of data disentangling is mainly attributed to recursive binary classification, whose number of operations influences that of the units of hidden layers. Due to the characteristic of the encoder architecture, in each hidden layer, the number of units cannot be arbitrarily large under the constraint of the dimensionality of the input space. However, in $m$-dimensional space, a convex $m$-polytope has more than $m$ faces of dimensionality $m-1$, rendering the method of lemma 7 unsuitable for encoders. Thus, a condition is proposed below.

\begin{itemize}
\item \textbf{Embedding condition}: Data set $D$ of $m$-dimensional space is embedded in an $n$-dimensional subspace whose dimensionality $n$ is sufficiently small relative to $m$, or we say that $m$ is sufficiently large relative to $n$. In this case, we could use fewer $m-1$-dimensional hyperplanes to construct an $n$-polytope, and the underlying principle is corollary 7 of \citet*{Huang2022}. For example, only $n+1$ higher-dimensional hyperplanes with dimensionality $m-1$ are required to construct an $n$-simplex embedded in $m$-dimensional space.
\end{itemize}

\begin{lem}
Let $D$ be a multi-category data set of $m$-dimensional space satisfying the embedding condition. Suppose that $D$ is linearly inseparable and each of its categories is contained in an open convex polytope that does not contain other categories. Then we can construct an encoder of equation 2.2 to both implement a bijective map on $D$ and disentangle $D$, provided that the dimensionality $m$ of the input space is sufficiently large; and the dimensionality of the encoding space is determined by the number of the open convex polytopes.
\end{lem}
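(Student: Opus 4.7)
The plan is to build an encoder whose encoding layer has exactly one ReLU per open convex polytope (so $n_e=K$), each activated only on the corresponding category, while simultaneously preserving bijectivity on $D$. First, for every $P_k$ the embedding condition together with corollary 7 of \citet*{Huang2022} provides a small family of $(m-1)$-dimensional hyperplanes $l_{k,1},\ldots,l_{k,\ell_k}$ whose positive half-spaces have intersection $P_k$; corollary 3 lets me perturb each $l_{k,j}$ by an arbitrarily small amount (which preserves both $D_k\subset P_k$ and $D_{k'}\cap P_k=\emptyset$ for $k'\ne k$, since $P_k$ is open and $D$ is finite) and makes it unparallel to $L_D$ when useful. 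Theorem 4 separately furnishes a hyperplane $l_{\mathrm{disc}}$ that discriminates $D$ and can be oriented with $D\subset l_{\mathrm{disc}}^+$.

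In the first hidden layer I place, for each $(k,j)$, a unit computing $\sigma(-l_{k,j}(\boldsymbol{x}))$ (positive exactly when the $j$th facet of $P_k$ is violated), the discriminator $\sigma(l_{\mathrm{disc}}(\boldsymbol{x}))$, and enough padding units to reach width $n_1$. Theorem 2 applied to $l_{\mathrm{disc}}$ yields bijectivity of the first-layer map, and theorem 5 allows me to propagate both the violation channels and the discriminator value forward while installing one new discriminating hyperplane per layer, so the subsequent widths $n_2>\cdots>n_d$ can be thinned monotonically without losing bijectivity. In the penultimate layer form the aggregates
\[
S_k(\boldsymbol{x})=\sum_{j=1}^{\ell_k}\sigma(-l_{k,j}(\boldsymbol{x})),
\]
which satisfy $S_k|_{D_k}\equiv 0$ and $S_k|_{D_{k'}}>0$ for $k'\ne k$. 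In the encoding layer, put
\[
u_k(\boldsymbol{x})=\sigma\bigl(c_k+\varepsilon\, l_{\mathrm{disc}}(\boldsymbol{x})-S_k(\boldsymbol{x})\bigr),\qquad k=1,\ldots,K,
\]
with $c_k\in(0,\min_{k'\ne k}\min_{\boldsymbol{x}\in D_{k'}}S_k(\boldsymbol{x}))$ and $\varepsilon>0$ chosen so small that $c_k+\varepsilon\, l_{\mathrm{disc}}(\boldsymbol{x})<S_k(\boldsymbol{x})$ still holds on every $D_{k'}$ with $k'\ne k$. For $\boldsymbol{x}\in D_k$ one obtains $u_k=c_k+\varepsilon\, l_{\mathrm{disc}}(\boldsymbol{x})>0$, and these values are distinct across $D_k$ because $l_{\mathrm{disc}}$ discriminates $D$; for $k'\ne k$ one has $u_{k'}=0$. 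Hence $f_e$ is injective on $D$ and the image $D_e$ sits on the $K$ coordinate axes of the encoding space, so the categories are linearly separated by the hyperplanes $u_k=0$; this is disentangling in the sense of definition 9.

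The technical obstacle is reconciling (a) layerwise bijectivity on $D$, (b) the strict monotone decrease $n_1>\cdots>n_e$ mandated by equation 2.2, and (c) carrying the $\sum_k\ell_k$ violation channels plus the discriminator channel through the entire hidden stack so that the detectors $u_k$ can be assembled. The embedding condition is exactly what makes (c) affordable, since it keeps each $\ell_k$ of order $n+1$ and so the total bookkeeping width at $O(K(n+1))$, independent of $m$; demanding $m$ larger than this aggregate plus the proposition-2-style padding needed to respect (b) is precisely the hypothesis ``$m$ sufficiently large'' in the statement, and it delivers exactly $n_e=K$ encoding units, confirming the final claim that the encoding dimensionality is determined by the number of polytopes.
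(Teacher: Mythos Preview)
Your construction is correct and reaches the same conclusion as the paper, but the route is genuinely different.

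The paper does not build explicit ``violation channels'' or a global discriminator. Instead it invokes lemma 6 of \citet{Huang2022} (recursive binary classification, the decision-tree-like deep construction underlying lemma 7 here) to produce a network whose last layer already has one unit per polytope activated exclusively by that category; it then pads every layer with redundant units via proposition 6 of \citet{Huang2022} to force the strictly decreasing widths of equation 2.2. Bijectivity within a category is argued because on a single open polytope all the relevant ReLUs are in their linear regime, so the restriction of the network is an affine map. Your approach is more elementary and more explicit: you isolate membership in $P_k$ with the aggregate $S_k$, and you secure within-category injectivity by threading the single scalar $l_{\mathrm{disc}}(\boldsymbol{x})$ into each encoding unit. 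This gives $n_e=K$ on the nose and avoids the black-box appeal to \citet{Huang2022}; the paper's version is more modular but less transparent about why the same-category map is injective.

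One small point of presentation: your appeal to theorem 5 for the intermediate layers is not really what is doing the work. What you actually need is that (i) once the discriminator channel is present it already satisfies theorem 2 at every subsequent layer, so no \emph{new} discriminating hyperplane is required, and (ii) the $\sum_k\ell_k$ violation channels can be collapsed to the $K$ aggregates $S_k$ as early as layer 2 (a ReLU of a nonnegative sum is the sum), after which only $K{+}1$ essential channels need to be carried forward by identity while disposable padding units supply the strict width decrease. Framed that way the bookkeeping in your final paragraph goes through cleanly, and the ``$m$ sufficiently large'' hypothesis is exactly the requirement $m>\sum_k\ell_k+1+\text{(padding for depth)}$.
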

\begin{proof}
The solution is based on corollary 7, lemma 6 and proposition 6 of \citet{Huang2022} as well as lemma 7 of this section. By the assumption of this lemma, let the input data set $D$ be embedded in an $n$-dimensional subspace $\boldsymbol{X}_n$ of the $m$-dimensional input space $\boldsymbol{X}_m$ with $n < m$. The proof is composed of two parts. Part 1 constructs the encoder and Part 2 proves the bijective property.

\textbf{Part 1}. First, by corollary 7 of \citet{Huang2022}, in subspace $\boldsymbol{X}_n$, the classification of $D$ via any $n-1$-dimensional hyperplane $l_n$ could be done by a $m-1$-dimensional hyperplane $l_m$ of $\boldsymbol{X}_m$ whose intersection with $\boldsymbol{X}_n$ is $l_n$, and the outputs of $l_n$ and $l_m$ with respect to any point of $\boldsymbol{X}_n$ are equal.

Second, construct a deep-layer network by lemma 6 of \citet{Huang2022} to make $D$ disentangled as in lemma 7. Only in the first layer, we should use higher-dimensional hyperplanes to classify $D$ as discussed above. From the second layer, the construction is the same as lemma 6 of \citet{Huang2022}, which is under the background of $n$-dimensional space without being embedded in a higher-dimensional space.

Let $\mathcal{N}$ be the network obtained in this step. By the construction process of lemma 6 of \citet{Huang2022}, we can see that the number of the units of the encoding layer is determined by the number of the convex polytopes containing each category.

Third, use proposition 6 of \citet{Huang2022} to modify the network $\mathcal{N}$ of the previous step to be an encoder. By that proposition, any number of redundant units could be added without influencing the realization of an affine transform; so we can add proper number of units in each layer, such that the characteristic of the architecture of an encoder is satisfied, as in proposition 1 of section 2.3. The modified network is an encoder denoted by $\mathcal{E}$.

In this step, the embedding condition for this lemma ensures that the input layer has more units than the first layer. And the condition that dimensionality $m$ is sufficiently large makes sure that although extra units are added in some layer, the number of units cannot be greater than $m$. Both of the two constraints are required for the architecture of an encoder.

\textbf{Part 2}. By the construction process, the elements of $D$ belonging to the same category will be bijectively mapped by affine transforms. And if two elements are from different categories, they will not activate the same unit of the output layer, which means that their corresponding outputs of the network are distinct. Thus, the map of encoder $\mathcal{E}$ for $D$ is bijective.

\end{proof}

\begin{thm}
To any multi-category data set $D$ of $m$-dimensional space that is linearly inseparable and satisfies the embedding condition, we can construct an encoder of equation 2.2 to realize a bijective map of $D$ and simultaneously to disentangle it, if the dimensionality $m$ of the input space is large enough.
\end{thm}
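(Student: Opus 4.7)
The plan is to reduce Theorem 8 to Lemma 9 (the previous result whose statement assumes each category sits inside its own open convex polytope). The only extra ingredient needed is a preprocessing step that refines the category structure of $D$ into one satisfying the hypothesis of Lemma 9, while preserving both the bijectivity goal (which is about individual points) and the disentangling goal (which only requires that points from different original categories end up linearly separated in the output).

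First I would show that, since $D$ is finite and is embedded in the $n$-dimensional subspace $\boldsymbol{X}_n$ guaranteed by the embedding condition, each category $C_\nu$ of $D$ can be decomposed into finitely many subcategories $C_{\nu,1}, C_{\nu,2}, \ldots, C_{\nu,k_\nu}$, each of which is contained in an open convex polytope of $\boldsymbol{X}_n$ that contains no element from any other original category. This is a standard geometric separation argument inside $\boldsymbol{X}_n$: starting from any enclosing polytope for $C_\nu$, one can iteratively refine it using $n-1$-dimensional hyperplanes of $\boldsymbol{X}_n$ that strictly separate pairs of points from different original categories (finitely many such pairs exist), terminating in a finite family of open convex polytopes whose union covers $C_\nu$ and whose interiors exclude all of $D\setminus C_\nu$. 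Call the refined labelling $\widetilde{D}$; by construction it is still linearly inseparable, satisfies the embedding condition in $\boldsymbol{X}_m$, and each of its subcategories lies in an open convex polytope of $\boldsymbol{X}_n$ avoiding all other subcategories.

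Next I would apply Lemma 9 directly to $\widetilde{D}$: there exists an encoder $\mathcal{E}$ of equation 2.2 that realizes a bijective map on $\widetilde{D}$ (hence on $D$, since $\widetilde{D}$ and $D$ are identical as point sets) and disentangles $\widetilde{D}$ in the encoding space, provided $m$ is sufficiently large. Disentangling $\widetilde{D}$ trivially implies disentangling $D$ under the original labelling: each subcategory $C_{\nu,j}$ is mapped into its own linearly separable region, so the union of the encoded images of the $C_{\nu,j}$ for fixed $\nu$ can be separated from the encoded images of other original categories by a unit in an additional output layer that fires on exactly those $k_\nu$ regions (this combining step is precisely corollary 3 of Huang2022, which Proposition 2 already invokes). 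Thus the bijective map is inherited directly, and disentangling follows after this trivial post-processing.

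The main obstacle is the bookkeeping about dimensionality. Refining $D$ into $\widetilde{D}$ inflates the category count from the original number to $\sum_\nu k_\nu$, which in turn inflates the number of units Lemma 9 needs in the encoding layer and, through the monotonicity constraint $m > n_1 > n_2 > \cdots > n_e$ intrinsic to the encoder architecture, inflates the required input dimensionality $m$. I would make the claim ``$m$ large enough'' quantitative by tracing through the construction: the number of subcategories produced by the separation procedure is bounded by a function of $|D|$ and $n$, and Lemma 9 then gives an explicit bound on $m$ in terms of that count together with $d$. The embedding condition, which only asks $n$ to be small compared to $m$, leaves $m$ free to be as large as required, so the theorem follows.
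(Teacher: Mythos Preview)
Your approach is essentially the paper's own: decompose $D$ into finer subcategories so that each lies in its own open convex polytope, then invoke the preceding lemma; the paper's one-line proof does exactly this, citing ``the proof of theorem 5 of \citet{Huang2022}'' for the decomposition step and then applying the lemma. Two small points: the lemma you mean is Lemma~8 in this paper (Lemma~9 concerns convolutional operations), and your ``additional output layer'' remark is better read as exhibiting a single separating hyperplane in the existing encoding space rather than as appending a layer---by the construction in Lemma~8, distinct subcategories activate disjoint encoding-layer units, so each original category $C_\nu$ is already linearly separable from the rest without modifying the encoder.
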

\begin{proof}
As the proof of theorem 5 of \citet*{Huang2022}, we can decompose $D$ into subsets, such that the condition of lemma 8 is fulfilled, which follows the conclusion.
\end{proof}

\begin{rmk}
By this theorem, the encoder of an autoencoder $\mathcal{A}$ not only can reduce dimensionality, but also is capable of facilitating the work of the decoder by data disentangling. The overall effect of $\mathcal{A}$ is the cooperation of its encoder and decoder.
\end{rmk}

\subsection{Comparison with PCA}
In the area of signal processing, dimensionality reduction is usually applied to what is called \textsl{signal}. A discrete signal can be considered as a set of discrete points. Thus, the term ``input data set'' of neural networks actually refers to the same thing as ``discrete signal''; for consistency, we'll use the former one in this paper.

\begin{prp}
An autoencoder of equation 2.1 can compress any data set $D$ of the $m$-dimensional input space into set $D_e$ of arbitrary $n_e$-dimensional space with $1 \le n_e < m$ by a bijective map; and its decoder can reconstruct the original $D$ from the compressed $D_e$ without loss of accuracy.
\end{prp}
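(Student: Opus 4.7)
The plan is to recognize this proposition as essentially a terminological repackaging of two results already established in the paper, and to combine them directly. The compression claim is the bijective-map existence statement of Theorem 5, while the lossless reconstruction claim is Lemma 3 together with the definition (Definition 6) that factors an autoencoder map as $f_d \circ f_e$. So the proof is a short two-step argument and does not require fresh machinery.

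First I would invoke Theorem 5 to obtain, for the given $D \subset \mathbb{R}^m$ and any prescribed target dimension $n_e$ with $1 \le n_e < m$, an encoder $\mathcal{E}$ of the form in equation 2.2 whose associated function $f_e : D \to D_e$ is bijective, where $D_e \subset \mathbb{R}^{n_e}$ is the image of $D$ under $f_e$. This already supplies the desired compression: each datum $\boldsymbol{x} \in D$ is represented by the unique point $f_e(\boldsymbol{x}) \in D_e$, and because $f_e$ is injective on $D$ no two distinct elements collapse, so no information about $D$ is destroyed at the encoding stage. Note that $n_e$ here is a free parameter in the construction of Theorem 5, which is what makes the ``arbitrary $n_e$'' clause of the proposition valid.

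Second, I would append a decoder $\mathcal{D}$ of the form in equation 2.3 to build a full autoencoder $\mathcal{A}$ of equation 2.1, and then apply Lemma 3: since $f_e$ is bijective on $D$, the identity map $f_D : D \to D$ can be realized by $\mathcal{A}$. Unpacking this through Definition 6, there exists a decoder function $f_d : D_e \to D$ such that $f_d \circ f_e = \mathrm{id}_D$, i.e.\ $f_d(f_e(\boldsymbol{x})) = \boldsymbol{x}$ for every $\boldsymbol{x} \in D$. This is exactly lossless reconstruction of the original $D$ from the compressed $D_e$.

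There is no real obstacle, since both building blocks are in hand; the only thing to be mildly careful about is the reading of ``without loss of accuracy'', which I would take to mean exact pointwise recovery on the finite set $D$ (consistent with note 7 in the preamble), not uniform recovery off $D$. Under that reading the conclusion is immediate from the composition above, and the proposition serves mainly to rephrase Theorem 5 and Lemma 3 in the signal-processing language of compression and reconstruction that the subsection introduces.
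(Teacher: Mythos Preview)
Your proposal is correct and matches the paper's own proof essentially line for line: the paper cites Theorem 5 for the bijective encoder and Corollary 12 of \citet{Huang2022} for the decoder, and your Lemma 3 is explicitly just that corollary restated. The only cosmetic difference is that you invoke the internal Lemma 3/Definition 6 packaging while the paper cites the external corollary directly.
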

\begin{proof}
The decoder had been dealt with in corollary 12 of \citet{Huang2022}, and the encoder was discussed in theorem 5.
\end{proof}

\begin{rmk}
To PCA, the effect of dimensionality reduction depends on the scatter-property of data; and usually the more the reduction, the more difficult reconstructing the original data is. However, there's no such restriction on autoencoders. As an example, Figure 2 of \citet{Hinton2006} provided some experimental results of this comparison.
\end{rmk}

\begin{prp}
To the lower-dimensional representation of multi-category data set $D$ of $m$-dimensional space, PCA cannot be as capable of disentangling $D$ as autoencoders.
\end{prp}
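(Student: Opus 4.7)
The plan is to reduce the claim to the fact that PCA is an affine map, then apply essentially the same linear-pullback argument that drives Theorem 6, and finally contrast this with Theorem 8 which already gives a positive disentangling result for autoencoders.

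First I would make precise what PCA does: it produces an affine map $P : \mathbb{R}^m \to \mathbb{R}^{n_e}$ of the form $P(\boldsymbol{x}) = W(\boldsymbol{x} - \boldsymbol{\mu})$, where $W \in \mathbb{R}^{n_e \times m}$ has rows given by the top $n_e$ principal directions. In particular $P$ is affine, so its image $P(D)$ is obtained from $D$ by a single affine transformation followed by restriction to an $n_e$-dimensional subspace.

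Next I would show that if $D$ is linearly inseparable in $\mathbb{R}^m$, then $P(D)$ is linearly inseparable in $\mathbb{R}^{n_e}$; this is the heart of the proposition. The argument is the pullback trick used inside the proof of Theorem 6. Suppose some category $\nu$ of $P(D)$ were linearly separable by a hyperplane $\boldsymbol{w}^T\boldsymbol{y} + b = 0$ in the encoding space. Substituting $\boldsymbol{y} = W(\boldsymbol{x} - \boldsymbol{\mu})$ yields the hyperplane $(W^T\boldsymbol{w})^T\boldsymbol{x} + (b - \boldsymbol{w}^T W \boldsymbol{\mu}) = 0$ in $\mathbb{R}^m$, which would linearly separate the $\nu$th category of $D$ from the rest, contradicting the assumption that $D$ is linearly inseparable. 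Hence PCA cannot disentangle any linearly inseparable multi-category $D$.

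Finally I would invoke Theorem 8 to close the gap: under the embedding condition, an autoencoder can be constructed whose encoder both realizes a bijective map on $D$ and disentangles $D$, even when $D$ is linearly inseparable. Comparing the two, PCA fails on every linearly inseparable $D$ whereas autoencoders succeed on a broad class of such $D$, establishing the desired qualitative superiority. The main obstacle I anticipate is purely expository: keeping the comparison fair by stating both sides over the same class of inputs (multi-category, linearly inseparable, satisfying the embedding condition) so that the contrast between the two methods is an honest one and not an artifact of a restricted PCA setting; the mathematical content, given Theorem 6 and Theorem 8, is essentially immediate.
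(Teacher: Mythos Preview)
Your proposal is correct and follows essentially the same approach as the paper: identify PCA as an affine map, invoke the linear-pullback argument of Theorem~6 to show it cannot disentangle any linearly inseparable $D$, and contrast with Theorem~8 for autoencoders. The only difference is cosmetic: the paper simply observes that PCA is a special instance of the dimension-selection step in the proof of Theorem~6 and cites that theorem directly, whereas you spell out the pullback computation $(W^T\boldsymbol{w})^T\boldsymbol{x} + (b - \boldsymbol{w}^T W\boldsymbol{\mu}) = 0$ explicitly.
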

\begin{proof}
The PCA method can be regarded as selecting some dimensions from a changed coordinate system \citep*{Bishop2006}, according to the covariance matrix of $D$; and the relationship between the new coordinate system and the original one could be modeled by an affine transform. Thus, PCA is a special type of the dimension-selection operation of the proof of theorem 6. Combined with theorem 8, this proposition is proved.
\end{proof}

\subsection{Comparison with Decision Trees}
\begin{dfn}
To an autoencoder of equation 2.1, its decoder part could be modified to do classification instead of reconstruction. And the restrictions on the architecture of the decoder could be relaxed, such that each hidden layer could have arbitrary number of units. We call the modified version of equation 2.1 a classification autoencoder, denoted by
\begin{equation}
\mathcal{A}_c := m^{(1)}\prod_{i=1}^{d}n_{i}^{(1)}n_e^{(1)}\prod_{j=1}^{d'}M_{j}^{(1)}\mu^{(1)},
\end{equation}
which is different from equation 2.1 in that no constraints are imposed on the decoder architecture.
\end{dfn}

\begin{thm}
A classification autoencoder of equation 4.1 can classify arbitrary multi-category data set $D$ of the $m$-dimensional input space.
\end{thm}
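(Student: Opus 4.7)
The plan is to decompose the classification task along the natural encoder--decoder split of $\mathcal{A}_c$. The encoder will compress $D$ into the encoding space while preserving categorical distinctness, and the relaxed decoder will carry out the actual classification by disentangling the compressed data and then designing output units that each respond to a single category.

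First I would apply theorem 5 to the encoder part $\mathcal{E} := m^{(1)}\prod_{i=1}^{d}n_{i}^{(1)}n_e^{(1)}$ of equation 4.1 to construct a bijective map $f_e : D \to D_e$ with $D_e \subset \mathbb{R}^{n_e}$ and $n_e$ an arbitrary integer satisfying $1 \le n_e < m$. Because $f_e$ is bijective on $D$, points from different categories of $D$ remain distinct after $f_e$, so the multi-category structure of $D$ is carried over, without any collision between categories, to a multi-category structure on $D_e$. The set $D_e$ may still be linearly inseparable in the encoding space, which is precisely the situation addressed by lemma 7.

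Next I would construct the decoder part to classify $D_e$. Since the definition of the classification autoencoder removes the width restriction on the decoder (each hidden layer may have an arbitrary number of units), the decoder architecture is strictly more flexible than the one of equation 2.3 treated in lemma 7. I would apply the construction behind lemma 7, which ultimately rests on lemma 6 of \citet{Huang2022}, to the hidden layers indexed by $M_1,\dots,M_{d'}$, so that the categories of $D_e$ become linearly separable in the last hidden layer. Finally, using the key observation from the proof of lemma 7 that a single unit can be designed to be activated only by one prescribed category, I would set up the $\mu$ output units so that the $i$-th unit fires exactly on the $i$-th category, yielding the required classification output.

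The main obstacle I anticipate is not a new construction but rather a careful verification that the transition from a reconstruction-style decoder (output dimension $m$ in lemma 7) to a classification-style decoder (output dimension $\mu$ in equation 4.1) does not break the disentangling argument. Because the disentangling actually takes place in the hidden layers and the classification-output design uses only one unit per category, the change of output dimensionality is immaterial, and the relaxed width constraints only help the construction. The remaining steps---tracking categories through the bijective encoder and translating the disentangled representation into category-indicator outputs---are essentially bookkeeping in the style already developed in sections 2 and 4.
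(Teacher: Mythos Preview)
Your proposal is correct and follows the same high-level encoder--decoder split as the paper, but the decoder side is handled by a genuinely different mechanism. The paper's proof does not go through disentangling at all: after obtaining the bijective encoder map $f_e:D\to D_e$ (citing theorems 5 and 8, though only bijectivity is actually used), it invokes theorem 10 of \citet{Huang2022} to realize an \emph{arbitrary} discrete piecewise linear function $f_d:D_e\to\mathbb{R}^{\mu}$ on the finite set $D_e$ via the relaxed decoder, and then appeals to corollary 4 of \citet{Huang2022} to pick a particular $f_d$ that classifies $D_e$. In contrast, your route passes through lemma 7 (i.e., lemma 6 of \citet{Huang2022}) to first disentangle $D_e$ inside the hidden layers $M_1,\dots,M_{d'}$ and then hand-designs category-indicator units in the $\mu$-dimensional output layer. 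Both arguments are valid: the paper's is shorter because it treats the decoder as a black-box universal approximator on a finite domain, while yours makes the internal geometry of the decoder explicit and directly motivates why the width relaxation in equation 4.1 is helpful. Your observation that only theorem 5 (bijectivity) is needed for the encoder is correct; the paper's additional citation of theorem 8 is not essential to the argument.
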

\begin{proof}
Theorems 5 and 8 have proved that the encoder can realize a bijective map $f_e : D \to D_e$. Theorem 10 of \cite{Huang2022} gave the method of constructing any discrete piecewise linear function $f_d : D_e \to \mathbb{R}^{\mu}$ via the decoder part of equation 4.1. By corollary 4 of \citet{Huang2022}, the mapped data set $D_e$ of $D$ could be classified by a specially designed function $f_d$. And the composite function $f = f_d \circ f_e$ is the classifier needed.
\end{proof}

Decision trees could classify multi-category data set by recursive binary classification, until only one category left in a certain region. The decoder alone would have no advantages, if its solution is obtained by lemma 6 of \citet*{Huang2022}, since its underlying principle is also recursive binary classification. However, combined with the encoder, the neural-network way could demonstrate its superiority.

\begin{prp}
Compared to decision trees, a classification autoencoder $\mathcal{A}_c$ of equation 4.1 has two advantages. First, the classification is done in a lower-dimensionality space, which may greatly save the number of parameters. Second, the encoder could disentangle the input data set $D$ to facilitate the classification via the decoder.
\end{prp}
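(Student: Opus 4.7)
The plan is to establish the two advantages sequentially, leveraging the machinery already built in Sections 3 and 4 and contrasting it with the underlying mechanism of decision trees. I first recall that a decision tree partitions the original $m$-dimensional input space by recursive axis-aligned (or hyperplane) splits until each leaf contains a single category, so each internal node stores a hyperplane specified by $O(m)$ parameters, and the number of such nodes grows with the geometric complexity of the categories of $D$ in $\mathbb{R}^m$. The goal is to show that $\mathcal{A}_c$ replaces these $O(m)$-parameter splits by splits in an $n_e$-dimensional space after a disentangling step, so that both the per-split cost and the number of splits decrease.

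For the first advantage, I would apply Theorem 5 to get a bijective encoder map $f_e : D \to D_e$ with $D_e \subset \mathbb{R}^{n_e}$ and $n_e < m$ (indeed $n_e$ can be an arbitrary integer less than $m$). Any classifier the decoder builds on top acts on $D_e$, so every hyperplane the decoder uses for classification is parameterized by a vector in $\mathbb{R}^{n_e}$ rather than $\mathbb{R}^m$. Comparing this to a decision tree operating in $\mathbb{R}^m$ gives an immediate per-split parameter saving of a factor $m/n_e$, which can be made arbitrarily favorable by choosing $n_e$ small. I would also note that the encoder itself is shared across all categories, so its parameters amortize across the classification task, unlike the per-node parameters of a decision tree.

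For the second advantage, I would invoke Theorem 8: under its hypotheses, the encoder can be constructed so that the mapped data set $D_e$ is not only bijectively represented but also disentangled, i.e. linearly separable in the encoding space. Once $D_e$ is linearly separable, the decoder's task collapses to a single-layer linear classification of $D_e$, which by corollary 4 of \citet{Huang2022} can be realized by the first hidden layer of the decoder, with subsequent layers performing only affine refinements. By contrast, a decision tree faced with the original linearly inseparable $D$ must perform many recursive binary splits to isolate each category (this being precisely the mechanism invoked in lemma 7 for the decoder-alone case), so the encoder genuinely contributes an additional capability that decision trees lack. Combining these two observations with Theorem 9, which certifies that $\mathcal{A}_c$ can in fact classify any multi-category $D$, yields both claims of the proposition.

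The hard part is less a technical obstacle than a rhetorical one: the proposition is comparative and partly quantitative (\emph{``may greatly save the number of parameters''}), so the main care required is to state precisely in what sense the comparison holds. I would handle this by phrasing the parameter-count claim as an inequality on per-hyperplane dimensionality ($n_e$ versus $m$) together with the observation that the number of decoder splits is bounded by the number of categories after disentangling, rather than by the geometric complexity of $D$ in $\mathbb{R}^m$; this keeps the argument honest without overclaiming a uniform parameter bound.
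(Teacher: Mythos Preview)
Your proposal is correct and follows essentially the same line as the paper: both compare the per-hyperplane parameter count ($n_e+1$ versus $m+1$) after the bijective encoder of Theorem~5, and both invoke the disentangling result of Theorem~8 to argue that the decoder needs far fewer splits than a decision tree working on the original linearly inseparable $D$. The only minor differences are cosmetic---the paper writes out the decision-tree count explicitly as $N_d=(m+1)\times n_b$ and remarks that the encoder's own hyperplanes also shrink layer by layer, whereas you emphasize amortization of the encoder across categories---but the substance is the same.
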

\begin{proof}
The two parts of the proof below are for the two conclusions of this proposition, respectively.

\textbf{Part 1}. In comparison with a decision tree, the extra encoder part of $\mathcal{A}_c$ increases the number of parameters, and the construction of the decoder may introduce redundant parameters. However, there are two aspects that could save the parameters of $\mathcal{A}_c$, such that the increased ones can be compensated to some extent. The first is related to binary classification via hyperplanes. The second is about the disentangling property of $\mathcal{A}_c$.

To a decision tree for $m$-dimensional space, let $n_b$ be the number of binary classification required, each of which needs $m+1$ parameters; then the total number of the parameters is
\begin{equation}
N_d = (m + 1) \times n_b.
\end{equation}
To $\mathcal{A}_c$, due to dimensionality reduction of the encoder, if the solution of the decoder is constructed by affine transforms as lemma 6 of \citet*{Huang2022}, each hyperplane for classification has only $n_e + 1$ parameters with $1 \le n_e < m$, which is less than $m+1$ of equation 4.2. On the other hand, if data set $D$ is disentangled by the encoder of $\mathcal{A}_c$, each category needs only one $n_e-1$-dimensional hyperplane to classify it, greatly reducing the number $n_b$ of equation 4.2

To the construction of the encoder of $\mathcal{A}_c$, although the basic operation is also binary classification of hyperplanes, due to dimensionality reduction layer by layer, the number of parameters of a hyperplane would decrease as the depth of layers grows, which is not the case of equation 4.2 either.

Therefore, if the dimensionality $m$ of the input space is sufficiently large and the data structure of $D$ is complicated enough, autoencoder $\mathcal{A}_c$ may be superior to a decision tree in much less parameters required.

\textbf{Part 2}. The disentangling property of $\mathcal{A}_c$ not only could save parameters as in part 1, but also is helpful to find a solution of the decoder, since a disentangled data set is much easier to be processed.
\end{proof}

\subsection{Convolutional Neural Networks}
\begin{lem}
Each of the basic operations of convolutional neural networks, including the convolution and the max or average pooling \citep*{Goodfellow2016}, can be considered as the output of a fully-connected unit (or a unit fully connected to the previous layer).
\end{lem}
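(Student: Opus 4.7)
The plan is to handle convolution, average pooling, and max pooling in turn, showing in each case that the operation's output coincides with that of a fully-connected unit (or a small cascade of such units). The key enabling observation is that the weight vector of a fully-connected unit is unconstrained in its entries, so a weight pattern that is zero outside a local receptive field is still a legitimate fully-connected configuration; the locality of convolution and pooling is therefore a special case of full connectivity, not an obstruction to it.

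Convolution and average pooling are the easy cases because both are affine in the previous layer's values. A convolution at a fixed spatial location with kernel $\boldsymbol{k}$ computes $\sum_j k_j x_{i+j}$, which is reproduced by a fully-connected unit whose weight vector places the $k_j$'s at the indices inside the receptive field and zeros elsewhere, with bias and subsequent ReLU carried over unchanged. Average pooling over a window of size $r$ is the same construction with weights $1/r$ on the window and $0$ outside, and zero bias.

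Max pooling is the main obstacle, since $\max$ is not affine. I would handle it by invoking the identity $\max(a,b) = b + \sigma(a-b)$, where $\sigma$ is the ReLU of definition 3: a fully-connected unit with weights $+1$ on $a$ and $-1$ on $b$ and zero elsewhere yields $\sigma(a-b)$, after which a second fully-connected unit summing this output with $b$ yields $\max(a,b)$. Recursively pairing inputs extends this to a window of arbitrary size at the cost of a logarithmic number of layers, so max pooling is realized by a short cascade of fully-connected units rather than by a single one; this is precisely why the lemma's phrasing permits ``a unit fully connected to the previous layer'' instead of insisting on a single unit attached to the original input. The only delicate point to check is that the intermediate difference $a-b$ is allowed to be negative because pre-activations are unrestricted in sign under the paper's standing ReLU convention (only post-activations are constrained to be nonnegative), so the rewriting remains valid regardless of which of $a,b$ is larger.
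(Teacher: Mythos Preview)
Your treatment of convolution and average pooling matches the paper exactly: both are realized by a fully-connected unit whose weight vector is zero outside the receptive field. The divergence is in max pooling.

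The paper handles max pooling in a single unit by letting the weight on the argmax position be $1$ and all other weights be $0$. This is a data-dependent weight assignment: the unit's weights are chosen \emph{after} seeing the input, so that for each particular input the max-pooling output coincides with the output of some fully-connected linear unit. Your approach instead fixes the weights once and for all by exploiting the ReLU identity $\max(a,b)=b+\sigma(a-b)$ and then cascading, which is mathematically clean and input-independent but costs extra layers. Two remarks on the comparison. First, you have misread the parenthetical: ``(or a unit fully connected to the previous layer)'' is merely a gloss on what ``fully-connected unit'' means, not a licence for a multi-layer cascade; the lemma really is asserting that a \emph{single} unit suffices. Second, the single-unit reading is what the paper needs downstream: Theorem~10 relies on Lemma~9 to replace each convolution/pooling layer by one fully-connected layer \emph{in place}, preserving the layer count of $\mathcal{E}_c$ so that the monotone-width hypothesis still yields an encoder architecture. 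Your cascade would insert additional layers at every max-pooling stage and disturb that correspondence. So your argument is correct as a construction of max pooling from fully-connected ReLU units, but it proves a slightly different (weaker, multi-layer) statement than the one the paper intends and uses.
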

\begin{proof}
Under a two-layer neural network, the convolution operation is performed within a subspace of the input space, through a unit $u_{1i}$ of the first layer that only has connections with the input units of that subspace. If we consider the unlinked case as a connection with zero weight, then the unit $u_{1i}$ is fully connected, and the convolution operation becomes an output of a common unit of feedforward neural networks.

The pooling operation is a special case of convolutions. To the max pooling, just let the weight corresponding to the maximum value (within a subspace) of the input unit be 1 and others be 0. And the case of average pooling is to set the weights within a subspace as an average filter, and other weights are forced to be zero. Thus, the pooling operation is equivalent to the output of a fully connected unit.
\end{proof}

\begin{thm}
Let $\mathcal{E}_c$ be the part of a convolutional neural network $\mathcal{C}$ that doesn't contain the fully-connected layers. Suppose that $\mathcal{E}_c$ is built by the basic operations of lemma 9. If the number of units of the hidden layers of $\mathcal{E}_c$ monotonically decreases as the depth of the layer grows, then $\mathcal{C}$ is a classification autoencoder of equation 4.1.
\end{thm}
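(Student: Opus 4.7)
The plan is to show that the convolutional part $\mathcal{E}_c$ fits the encoder template $m^{(1)}\prod_{i=1}^{d}n_{i}^{(1)}n_e^{(1)}$ of equation 2.2, and that the trailing fully-connected layers of $\mathcal{C}$ fit the (unrestricted) decoder template $\prod_{j=1}^{d'}M_{j}^{(1)}\mu^{(1)}$ of the classification-autoencoder definition in equation 4.1. Gluing the two pieces at the encoding layer will then exhibit $\mathcal{C}$ as an instance of $\mathcal{A}_c$.

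First, I would translate $\mathcal{E}_c$ into a standard feedforward network by applying lemma 9 unit by unit. Every convolutional output and every max/average pooling output is, by that lemma, the output of a fully connected unit in which the weights at the absent connections are simply set to zero. Performing this reinterpretation layer by layer converts $\mathcal{E}_c$ into a feedforward network whose input width is the input dimensionality $m$ of $\mathcal{C}$, whose hidden widths are those of the convolutional/pooling layers, and whose final layer serves as the encoding layer of width $n_e$.

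Next, I would verify the architectural inequalities required by equation 2.2. The hypothesis that the widths of the hidden layers of $\mathcal{E}_c$ strictly decrease with depth gives $n_1 > n_2 > \cdots > n_d > n_e$; and because in any convolutional network the first hidden layer has fewer units than the input image has pixels (a structural property of convolution and pooling as defined), we also have $m > n_1$. Thus $m > n_1 > n_2 > \cdots > n_d > n_e$, which is precisely the encoder architecture of equation 2.2, so the rewritten $\mathcal{E}_c$ is an encoder $\mathcal{E}$.

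Finally, I would dispose of the decoder side by appealing to definition 12, which explicitly removes any architectural restriction on the layers following the encoding layer in a classification autoencoder. The fully-connected layers of $\mathcal{C}$, together with its final classification layer of $\mu$ units, therefore automatically constitute a valid decoder $\prod_{j=1}^{d'}M_{j}^{(1)}\mu^{(1)}$. Concatenating the encoder obtained in the previous step with this decoder yields exactly equation 4.1, proving that $\mathcal{C}$ is a classification autoencoder. The only delicate point I anticipate is making the zero-weight reinterpretation of the pooling layers precise enough that their effective width really coincides with the width one would assign to them when counting units in $\mathcal{C}$; but this is bookkeeping already taken care of by lemma 9, and no new machinery is needed.
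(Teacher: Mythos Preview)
Your approach is essentially the paper's: invoke lemma 9 to rewrite every convolutional and pooling unit as a fully connected unit with specially chosen (mostly zero) weights, so that $\mathcal{E}_c$ becomes an ordinary feedforward encoder, and then note that definition 12 places no constraint on the decoder side. The paper's proof is a one-liner to this effect; you have merely unpacked it.

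One caveat: your justification of $m>n_1$ via ``in any convolutional network the first hidden layer has fewer units than the input image has pixels'' is not a structural fact of convolution or pooling as defined in lemma 9. A convolutional layer with many filters (or with stride $1$ and padding) can easily produce more output units than input pixels. The paper's proof does not verify $m>n_1$ either; it simply takes the monotone-width hypothesis as supplying whatever architectural inequalities the encoder template needs. So this is best read as an implicit assumption of the theorem rather than something you should try to derive from the nature of convolution.
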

\begin{proof}
By lemma 9, $\mathcal{E}_c$ could be regarded as a fully connected neural network with some weights specially designed, which follows the conclusion.
\end{proof}

\begin{rmk}
By this theorem, we can interpret a convolutional neural network from the viewpoint of autoencoders, and the results of this paper are beneficial to understand it.
\end{rmk}

\section{Generalization Model}
We have gained some knowledge on how an autoencoder works from the preceding sections, on the basis of which the generalization mechanism will be investigated. This section mainly provides some general definitions and principles, and their applications or examples will be given in sections 6 and 7.

\subsection{Definition of Generalization}
Note that \citet*{Bengio2009} had already made the similar definitions of local and nonlocal generalizations. Compared to them, ours are more specific and closely related to the theories of this paper.

\begin{dfn}
Suppose that network
\begin{equation}
\mathcal{N} := \mu^{(1)}\prod_{i=1}^{d}m_{i}^{(1)}1^{(1)}
\end{equation}
realizes a piecewise linear function $f': \mathbb{R}^{\mu} \to \mathbb{R}$, and the discrete version $f : D \to \mathbb{R}$ with $D \subset \mathbb{R}^{\mu}$ is derived from the sampling of $f'$. Each linear component $f_i'$ of $f'$ corresponds to a divided region $r_i$ of $\mathbb{R}^{\mu}$ for $i = 1, 2,\cdots, \nu$, where $\nu$ is the number of the linear functions. All of $r_i$'s are formed by network $\mathcal{N}$ of equation 5.1. Let $\boldsymbol{x}_0 \in D$ and $\boldsymbol{x}$ is in the neighborhood of $\boldsymbol{x}_0$. If $\boldsymbol{x}$ is in the same divided region $r_k$ as $\boldsymbol{x}_0$ for some $1 \le k \le \nu$, we say the output $f(\boldsymbol{x})$ of $\mathcal{N}$ is a local generalization of $\boldsymbol{x}_0$; otherwise, we call $f(\boldsymbol{x})$ a nonlocal generalization.
\end{dfn}

The next concept could be defined independently of definition 13, with only some notations shared with it.
\begin{dfn}
Under the notations of definition 13, denote by $\boldsymbol{x}''$ and $\boldsymbol{x}_0''$ the mapped points of $\boldsymbol{x}$ and $\boldsymbol{x}_0$ via the $d$th layer of $\mathcal{N}$, respectively. If $\boldsymbol{x}''$ and $\boldsymbol{x}_0''$ coincide or $\boldsymbol{x}'' = \boldsymbol{x}_0''$, we call the output $f(\boldsymbol{x})$ of $\mathcal{N}$ with respect to $\boldsymbol{x}$ an overlapping generalization of $\boldsymbol{x}_0$; otherwise, it is a non-overlapping generalization.
\end{dfn}

\begin{lem}
To network $\mathcal{N}$ of equation 5.1, suppose that $l_{D}$ is an one-dimensional line embedded in the $\mu$-dimensional input space, and that $\l_D \parallel \bigcap_{i = 1}^{m_1}l_i$, where $l_i$ for $i = 1, 2, \cdots, m_1$ is the hyperplane corresponding to the unit $u_{1i}$ of the first layer of $\mathcal{N}$. Given $\boldsymbol{x}_0 \in l_D$, let $\boldsymbol{x}$ be a point of the neighborhood of $\boldsymbol{x}_0$, with $\boldsymbol{x} \in l_D$. Then to $\boldsymbol{x}_0$, the output of $\mathcal{N}$ with respect to $\boldsymbol{x}$ is an overlapping generalization.
\end{lem}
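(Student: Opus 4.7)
The plan is to reduce the claim to two observations: every point of $l_D$ produces the same first-layer output, and once two inputs coincide after the first layer they coincide after every subsequent layer.

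First, I would upgrade the parallelism hypothesis. Since $\bigcap_{i=1}^{m_1} l_i \subseteq l_i$ for every $i$, the assumption $l_D \parallel \bigcap_{i=1}^{m_1} l_i$ implies $l_D \parallel l_i$ individually --- this is exactly the step used at the beginning of the proof of Theorem 1. With this in hand, Lemma 2 applies unit by unit: the ReLU output $\sigma(\boldsymbol{w}_i^T \boldsymbol{x} + b_i)$ of the $i$th unit of the first layer is equal to a single constant $C_i$ along the whole line $l_D$. Because both $\boldsymbol{x}_0$ and $\boldsymbol{x}$ lie on $l_D$, the full vector of first-layer activations agrees at $\boldsymbol{x}_0$ and $\boldsymbol{x}$, so they are sent to the same point in the $m_1$-dimensional space of the first layer.

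Second, I would propagate this coincidence forward. Each subsequent layer is a deterministic composition of an affine map with the ReLU nonlinearity, so it necessarily sends equal inputs to equal outputs. Inducting on $k = 2, 3, \dots, d$, the images of $\boldsymbol{x}$ and $\boldsymbol{x}_0$ at the end of the $k$th layer coincide, and in particular $\boldsymbol{x}'' = \boldsymbol{x}_0''$ at the $d$th layer. By Definition 14, this is exactly the condition that makes $f(\boldsymbol{x})$ an overlapping generalization of $\boldsymbol{x}_0$.

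The main subtlety, rather than a true obstacle, is choosing the right tool for the first step: one should invoke Lemma 2 directly instead of Theorem 1. Lemma 2 yields a constant \emph{post-ReLU} value on the whole parallel line regardless of whether $l_D$ sits inside $\prod_{i=1}^{m_1} l_i^+$, so no hypothesis on the first-layer activation pattern is needed and there is no worry about activations flipping between $\boldsymbol{x}_0$ and $\boldsymbol{x}$. Consequently the neighborhood clause on $\boldsymbol{x}$ is not actually consumed by the argument --- the conclusion holds verbatim for every $\boldsymbol{x} \in l_D$ --- and the ``neighborhood'' phrasing serves only to match the wording of Definitions 13 and 14.
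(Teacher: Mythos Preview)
Your proposal is correct and follows essentially the same two-step skeleton as the paper's proof: first show that all points of $l_D$ collapse to a single point after the first layer, then observe that subsequent layers cannot separate them, so Definition~14 is satisfied. The paper compresses this into two sentences, citing Theorem~1 for the first step and stating the second step informally (``it would be impossible to discriminate them in subsequent layers'').

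The one place you differ is in the tool invoked for the collapse: the paper appeals to Theorem~1, whereas you go one level deeper and use Lemma~2 unit by unit. Your choice is actually the more careful one, since Theorem~1 carries the standing hypothesis $D \subset \prod_{i=1}^{m_1} l_i^+$, which Lemma~10 does not assume; Lemma~2 gives a constant \emph{post-ReLU} output along $l_D$ with no positivity requirement, exactly as you note. So your argument is not a different route but a slightly tightened version of the paper's, and your remark that the neighborhood clause is inessential is also correct.
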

\begin{proof}
By theorem 1, all the points of $l_D$ will be mapped to a single one of the space of the first layer, and it would be impossible to discriminate them in subsequent layers. Thus, it is an overlapping generalization.
\end{proof}

\begin{dfn}
Given network $\mu^{(1)}m^{(1)}$, let $\boldsymbol{x}_0$ be a point of the $\mu$-dimensional input space, and $\boldsymbol{x}$ is a point of the neighborhood of $\boldsymbol{x}_0$. By lemma 10, if the line direction connecting $\boldsymbol{x}$ and $\boldsymbol{x}_0$ is not parallel to the intersection of the hyperplanes derived from the units of the first layer, we say that $\boldsymbol{x}$ and $\boldsymbol{x}_0$ satisfy the non-overlapping condition.
\end{dfn}

\begin{thm}
Using the notations definition 13, let $\boldsymbol{x}^{(i)}$ and $\boldsymbol{x}_0^{(i)}$ for $i = 1, 2, \cdots, d$ be the mapped points of $\boldsymbol{x}$ and $\boldsymbol{x}_0$ by the $i$th layer of $\mathcal{N}$, respectively. The output of $\mathcal{N}$ with respect to $\boldsymbol{x}$ would be a non-overlapping generalization of $\boldsymbol{x}_0$, provided that $\boldsymbol{x}^{(i)}$ and $\boldsymbol{x}_0^{(i)}$ for all $i$ satisfy the non-overlapping condition. Otherwise, there exists some layer, say, the $k$th one for $1 \le k \le d$, such that $\boldsymbol{x}^{(k)}$ and $\boldsymbol{x}_0^{(k)}$ doesn't fulfil the non-overlapping condition.
\end{thm}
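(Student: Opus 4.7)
The plan is to reduce the statement to an inductive application of Theorem 1, treating the two-element set $D_i := \{\boldsymbol{x}^{(i-1)}, \boldsymbol{x}_0^{(i-1)}\}$ as the ``data set'' fed into the $i$th layer. At that layer, $D_i$ spans a single line, whose direction is exactly the line connecting the two points; the non-overlapping condition at layer $i$ asserts precisely that this direction is not parallel to $\bigcap_{j=1}^{m_i} l_{ij}$, where $l_{ij}$ is the hyperplane associated with the $j$th unit of the $i$th layer. With this reformulation in hand, both halves of the theorem follow from the ``if and only if'' form of Theorem 1.

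For the forward direction, I would proceed by induction on $i$, with base case $\boldsymbol{x}^{(0)} := \boldsymbol{x} \ne \boldsymbol{x}_0^{(0)} := \boldsymbol{x}_0$. Given $\boldsymbol{x}^{(i-1)} \ne \boldsymbol{x}_0^{(i-1)}$ and the non-overlapping condition at layer $i$, I would apply the contrapositive half of Theorem 1 to $D_i$: because no $l_D$ containing $D_i$ can be parallel to $\bigcap_j l_{ij}$ (see the clarifying remark below), the set $D_i$ cannot collapse to a single point after the $i$th layer, so $\boldsymbol{x}^{(i)} \ne \boldsymbol{x}_0^{(i)}$. Iterating to $i = d$ gives $\boldsymbol{x}^{(d)} \ne \boldsymbol{x}_0^{(d)}$, which is exactly a non-overlapping generalization in the sense of Definition 14.

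For the ``otherwise'' direction, I would argue by contrapositive: suppose the output is an overlapping generalization, so $\boldsymbol{x}^{(d)} = \boldsymbol{x}_0^{(d)}$. Since $\boldsymbol{x}^{(0)} \ne \boldsymbol{x}_0^{(0)}$, there is a smallest index $k$ with $\boldsymbol{x}^{(k)} = \boldsymbol{x}_0^{(k)}$, at which the $k$th layer fuses the previously distinct pair. The ``vice versa'' half of Theorem 1 then forces $D_k$ to lie on some hyperplane $l_D$ parallel to $\bigcap_j l_{kj}$; in particular, the line through the two points of $D_k$ inherits that parallelism, so the non-overlapping condition fails at layer $k$, yielding the conclusion.

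The one point requiring a bit of care, and probably the only real obstacle, is the passage from ``the line spanned by two points is not parallel to $\bigcap_j l_{ij}$'' to ``no hyperplane $l_D$ through those two points is parallel to $\bigcap_j l_{ij}$,'' which is what Theorem 1 literally requires. This is immediate from the algebraic characterization of parallelism used in the proof of Lemma 2: any direction vector of $l_D$ must satisfy $\boldsymbol{w}_{ij}^T \boldsymbol{\lambda} = 0$ for every unit of the $i$th layer, hence so must the particular direction $\boldsymbol{x}^{(i-1)} - \boldsymbol{x}_0^{(i-1)}$, contradicting the non-overlapping condition. I would state this micro-observation explicitly before invoking Theorem 1, after which both directions compress to a short induction and a first-failure argument.
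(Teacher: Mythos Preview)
Your proposal is correct and follows essentially the same approach as the paper: a layer-by-layer (inductive) application of the two-point collapse criterion. The only cosmetic difference is that the paper phrases this as ``repeated application of Lemma 10'' together with Definition 15, whereas you bypass Lemma 10 and invoke Theorem 1 directly; since Lemma 10 is itself an immediate corollary of Theorem 1, the two routes are the same argument at different levels of packaging, and your version is in fact more explicit about the induction and the first-failure step for the ``otherwise'' clause.
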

\begin{proof}
The proof is simply the repeated application of lemma 10, and the conclusion is described by the terminology of definition 15.
\end{proof}

\subsection{Encoder for Generalization}
Analogous to the concept of the orthogonal complement of a vector space, we introduce some notations. Let $\boldsymbol{x} = [x_1, x_2, \cdots, x_m]^T$ be a vector of $m$-dimensional space $\mathbb{R}^m$. We write $\boldsymbol{x}$ in the form
\begin{equation}
\boldsymbol{x} = \boldsymbol{x}_{m_1} \oplus \boldsymbol{x}_{m_2}
\end{equation}
to express $\boldsymbol{x} = [\boldsymbol{x}_{m_1}^T, \boldsymbol{x}_{m_2}^T]^T$, where $m_1 + m_2 = m$, $\boldsymbol{x}_{m_1}$ is a $m_1 \times 1 $ column vector of the subspace $l_{m_1}$ spanned by the first $m_1$ dimensions of $\boldsymbol{x}$, and $\boldsymbol{x}_{m_2}$ of size $m_2 \times 1$ is derived from the remaining dimensions spanning the subspace $l_{m_2}$.

The subspace $l_{m_1}$ can be regarded as a $m_1$-dimensional hyperplane, and similarly for $l_{m_2}$, with $\boldsymbol{x}_{m_1} \in l_{m_1}$ and $\boldsymbol{x}_{m_2} \in l_{m_2}$ in equation 5.2. For all $\boldsymbol{x} \in \mathbb{R}^m$, we use $\mathbb{R}^m = l_{m_1} \oplus l_{m_2}$ to express equation 5.2 in the language of sets, and define

\begin{equation}
l_{m_1} = \mathbb{R}^m \ominus l_{m_2}.
\end{equation}

\begin{dfn}
To network $m^{(1)}n^{(1)}$ for $m > n$, let $l = \bigcap_{i=1}^{n}l_i$, whose dimensionality is $\nu = m - n$, where hyperplane $l_i$ for $i = 1, 2, \cdots, n$ corresponds to the $i$th unit of the first layer, with $l_{\nu}$ not parallel to $l_{\mu}$ for $1 \le \nu, \mu \le n$ and $\nu \ne \mu$. To any point $\boldsymbol{x} \in \prod_{i=1}^nl_i^+$, we decompose $it$ into the form $\boldsymbol{x} = \boldsymbol{x}_l \oplus \boldsymbol{x}_l^{\bot}$, where $\boldsymbol{x}_l \in l'$ and $\boldsymbol{x}_l^{\bot} \in \mathbb{R}^m \ominus l'$, with $l'$ a $k$-dimensional hyperplane satisfying $l' \parallel l$ and $k \le \nu$. Let $f: \mathbb{R}^m \to \mathbb{R}^n$ be the map of $m^{(1)}n^{(1)}$. Then we call $l$ the minor-feature space of map $f$, and $\boldsymbol{x}_l$ the minor feature of $\boldsymbol{x}$ with respect to map $f$.
\end{dfn}

\begin{rmk}
For completeness, besides encoders, the case of $m \le n$ for $m^{(1)}n^{(1)}$ should also be considered. However, this case had been intensively studied in \citet*{Huang2022, Huang2020}, with the conclusion that the input information could not be lost in terms of affine transforms.
\end{rmk}

\begin{prp}
Under definition 16, if we add a $m-1$-dimensional hyperplane not parallel to any of $l_i$'s, the dimensionality $\nu$ of the minor-feature space of map $f$ will be reduced by one.
\end{prp}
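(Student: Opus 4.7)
The plan is to reduce the statement directly to lemma 1. Let $l_{n+1}$ denote the added $m-1$-dimensional hyperplane. Then, per definition 16, the minor-feature space of the augmented map (now realized by the network $m^{(1)}(n+1)^{(1)}$) is
\begin{equation*}
l' = \bigcap_{i=1}^{n+1} l_i = l\cap l_{n+1}.
\end{equation*}

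First, I would verify that the enlarged collection $\{l_1,\ldots,l_n,l_{n+1}\}$ is pairwise non-parallel: the original $n$ hyperplanes are pairwise non-parallel by the standing hypothesis of definition 16, and the remaining $n$ pairs $(l_i,l_{n+1})$ are non-parallel by the hypothesis of this proposition. Thus the premise of lemma 1 is met by $n+1$ hyperplanes of $m$-dimensional space.

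Second, I would invoke lemma 1 directly, obtaining $\dim(l') = m-(n+1) = (m-n)-1 = \nu-1$, which is the claimed reduction by one. A shorter route that avoids rerunning the full recursion is to apply the underlying dimension formula cited inside the proof of lemma 1 (proposition 5 of \citealtt{Huang2022}) in a single step, taking $l$ of dimension $\nu$ and $l_{n+1}$ of dimension $m-1 \ge \nu$, to conclude $\dim(l\cap l_{n+1}) = \nu-1$ at once.

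I expect the only subtle point to be the auxiliary non-emptiness and non-containment clauses attached to that dimension formula, namely $l\cap l_{n+1}\ne\emptyset$ and $l\nsubseteq l_{n+1}$. By lemma 2, the containment $l\subseteq l_{n+1}$ would force $\boldsymbol{w}_{n+1}\in\mathrm{span}(\boldsymbol{w}_1,\ldots,\boldsymbol{w}_n)$; under the general-position reading of ``non-parallel'' implicit throughout lemma 1, this is already precluded by the hypothesis that $l_{n+1}$ is not parallel to any individual $l_i$. So the obstacle dissolves into a consistency check with the conventions already adopted earlier in the paper rather than a new argument.
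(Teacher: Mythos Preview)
Your proposal is correct and follows essentially the same approach as the paper, which simply writes ``See the proof of lemma 1 together with definition 16.'' You have merely made explicit what the paper leaves implicit: identifying the new minor-feature space via definition 16 and then applying lemma 1 (or its underlying dimension formula) to drop the dimension by one, with the general-position/rank convention of Note 1 handling the non-containment technicality.
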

\begin{proof}
See the proof of lemma 1 together with definition 16.
\end{proof}

\begin{rmk}
As the dimensionality $\nu$ of the minor-feature space becomes larger, an overlapping generalization is more likely to occur. When $\nu = 0$, it is the case of affine transforms and there would be no possibility of overlapping generalizations.
\end{rmk}

\begin{lem}
Under definition 16, the variation of $\boldsymbol{x}_0$ of the input space caused by a minor feature would be neglected by the map $f$ of the network in terms of overlapping generalization.
\end{lem}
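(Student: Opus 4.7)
The plan is to apply Theorem 1 directly to the line joining $\boldsymbol{x}_0$ to its minor-feature-perturbed version. Let $\boldsymbol{x}_0 \in \prod_{i=1}^{n}l_i^+$ have the decomposition $\boldsymbol{x}_0 = \boldsymbol{x}_{0,l} \oplus \boldsymbol{x}_{0,l}^{\bot}$ with $\boldsymbol{x}_{0,l} \in l'$, and let $\boldsymbol{x}_0'$ be a nearby point with the same ``$\bot$-part'' but a perturbed minor feature, namely $\boldsymbol{x}_0' = \boldsymbol{x}_{0,l}' \oplus \boldsymbol{x}_{0,l}^{\bot}$ with $\boldsymbol{x}_{0,l}' \in l'$. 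Such $\boldsymbol{x}_0'$ are exactly the points obtained by ``varying $\boldsymbol{x}_0$ by a minor feature'' in the sense of Definition 16.

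First I would identify the one-dimensional line $l_\Delta$ connecting $\boldsymbol{x}_0$ and $\boldsymbol{x}_0'$: its direction lies inside a translate of $l'$, so $l_\Delta$ sits in an affine copy of $l'$. Since $l' \parallel l$ in the sense of Definition 3, the opening step in the proof of Theorem 1 immediately gives $l' \parallel l_i$ for each $i=1,\dots,n$, and consequently $l_\Delta \parallel l_i$ for every $i$. Choosing the perturbation small enough keeps $\boldsymbol{x}_0' \in \prod_{i=1}^{n} l_i^+$, so both endpoints sit in the common linear regime of every ReLU in the first layer.

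Next, regarding the two-point set $\{\boldsymbol{x}_0,\boldsymbol{x}_0'\}$ as a data set contained in a hyperplane parallel to $l$, Theorem 1 (equivalently, a pointwise application of Lemma 2 to each $l_i$) yields that both points are mapped to a single point by the first layer of $m^{(1)}n^{(1)}$. Hence $f(\boldsymbol{x}_0) = f(\boldsymbol{x}_0')$: the minor-feature variation is fully absorbed at the level of the map $f$.

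Finally, I would translate this collapse into the vocabulary of Definition 14: the coincidence of $\boldsymbol{x}_0$ and $\boldsymbol{x}_0'$ after the layer $m^{(1)}n^{(1)}$ is by definition an overlapping generalization, so $f$ ``neglects'' the minor feature exactly in that sense. The main obstacle is essentially bookkeeping, namely justifying the two side-conditions $l' \parallel l_i$ for every $i$ and $\boldsymbol{x}_0' \in \prod_i l_i^+$; both are minor verifications already implicit in the proof of Theorem 1 and the hypothesis of Definition 16.
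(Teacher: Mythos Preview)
Your proposal is correct and follows essentially the same route as the paper. The paper's proof simply observes that the line connecting $\boldsymbol{x}_0$ and its perturbation is parallel to the minor-feature space $l$ and then invokes Lemma~10; since Lemma~10 is itself just a one-line application of Theorem~1, your choice to bypass Lemma~10 and apply Theorem~1 directly (together with the explicit check that the perturbed point stays in $\prod_i l_i^+$) amounts to the same argument with the wrapper unpacked.
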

\begin{proof}
Suppose that a variation from $\boldsymbol{x}_0$ to $\boldsymbol{x}$ is a minor feature of $\boldsymbol{x}_0$. Then the line connecting $\boldsymbol{x}$ and $\boldsymbol{x}_0$ is parallel to the minor-feature space $l$ of $f$. By lemma 10, the conclusion follows.
\end{proof}

\begin{thm}
To encoder $\mathcal{E}$ of equation 2.2, let $\boldsymbol{x}_0$ be an arbitrary point of the input space, and point $\boldsymbol{x}$ is in its neighborhood. Denote by $\boldsymbol{x}_0^{(i)}$ and $\boldsymbol{x}^{(i)}$ for $i = 1, 2, \cdots, d$ the mapped points of $\boldsymbol{x}$ and $\boldsymbol{x}_0$ via the $i$th layer of $\mathcal{E}$, respectively. The variation from $\boldsymbol{x}_0$ to $\boldsymbol{x}$ will be neglected, provided that in some layer $k$ for $1 \le k \le d$, the difference $\boldsymbol{x}^{(k)} - \boldsymbol{x}^{(k)}_0$ between $\boldsymbol{x}^{(k)}$ and $\boldsymbol{x}_0^{(k)}$ is a minor feature of $\boldsymbol{x}_0^{(k)}$.
\end{thm}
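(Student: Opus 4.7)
The plan is to propagate the condition at layer $k$ forward by combining Lemma 11 with the fact that once two points coincide after a layer, they remain coincident in all subsequent layers. The map from the $k$th hidden layer to the $(k+1)$th hidden layer is itself a single-layer network of the form $n_k^{(1)}n_{k+1}^{(1)}$ with $n_{k+1} < n_k$ (since $\mathcal{E}$ is an encoder), so Definition 16 applies and the minor-feature space of this particular transition is well-defined. The hypothesis that $\boldsymbol{x}^{(k)}-\boldsymbol{x}_0^{(k)}$ is a minor feature of $\boldsymbol{x}_0^{(k)}$ is exactly the statement that the line segment connecting $\boldsymbol{x}^{(k)}$ to $\boldsymbol{x}_0^{(k)}$ lies parallel to that minor-feature space.

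First, I would invoke Lemma 11 on the transition from layer $k$ to layer $k+1$. That lemma says precisely that a variation along the minor-feature direction is neglected in terms of overlapping generalization, i.e., the two points collapse onto a single image: $\boldsymbol{x}^{(k+1)}=\boldsymbol{x}_0^{(k+1)}$. Next, I would observe that each layer of $\mathcal{E}$ is a (piecewise-linear) function, so it sends equal inputs to equal outputs; applying this inductively from layer $k+1$ through the encoding layer yields $\boldsymbol{x}^{(j)}=\boldsymbol{x}_0^{(j)}$ for every $k+1 \le j \le d+1$. In particular the encoder outputs agree, which is exactly the conclusion that the variation from $\boldsymbol{x}_0$ to $\boldsymbol{x}$ is neglected.

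The one subtlety I expect to have to address is the activation region. Lemma 11 (via Lemma 10 and Theorem 1) requires $\boldsymbol{x}^{(k)}$ and $\boldsymbol{x}_0^{(k)}$ to lie in $\prod_{i=1}^{n_k} l_i^+$ so that the ReLUs at layer $k{+}1$ act as their linear parts on both points; otherwise the minor-feature argument, which is an affine-geometric statement about the intersection of hyperplanes, does not directly yield the coincidence of outputs. Since $\boldsymbol{x}$ is assumed to lie in a neighborhood of $\boldsymbol{x}_0$, continuity of the compositions up to layer $k$ keeps $\boldsymbol{x}^{(k)}$ near $\boldsymbol{x}_0^{(k)}$, so generically they fall in the same activation region, which is the hypothesis one needs. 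I would state this small continuity observation explicitly before invoking Lemma 11, and otherwise the proof reduces to two lines: Lemma 11 collapses the two trajectories at step $k{+}1$, and functional composition keeps them collapsed thereafter.
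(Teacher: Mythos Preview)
Your proposal is correct and follows essentially the same route as the paper: the paper's proof is the one-line ``combination of lemma 11, theorem 11 and definition 16,'' and you have simply unpacked this by applying Lemma 11 at layer $k$ and then propagating the coincidence forward (which is the content hidden behind the citation of Theorem 11). Your explicit remark about the activation region $\prod_i l_i^+$ and the continuity argument is a useful clarification that the paper leaves implicit.
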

\begin{proof}
This conclusion is the combination of lemma 11, theorem 11 and definition 16.
\end{proof}

\begin{figure}[!t]
\captionsetup{justification=centering}
\centering
\subfloat[Two-dimensional case.]{\includegraphics[width=2.1in, trim = {4.5cm 2.4cm 4.5cm 3cm}, clip]{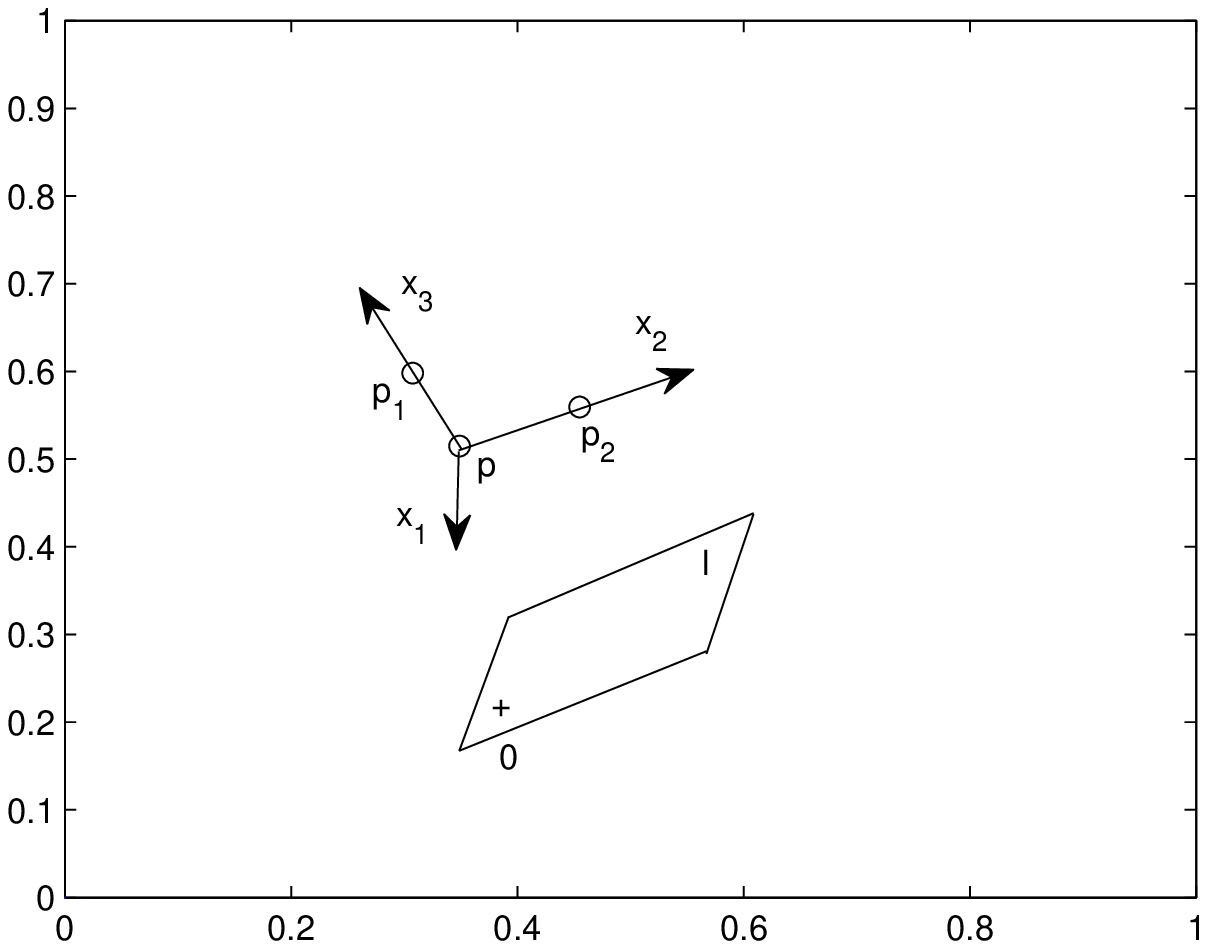}} \quad \quad \quad % 4.5cm 3.6cm 4.5cm 3cm
\subfloat[One-dimensional case.]{\includegraphics[width=2.1in, trim = {4.5cm 2.4cm 4.5cm 3cm}, clip]{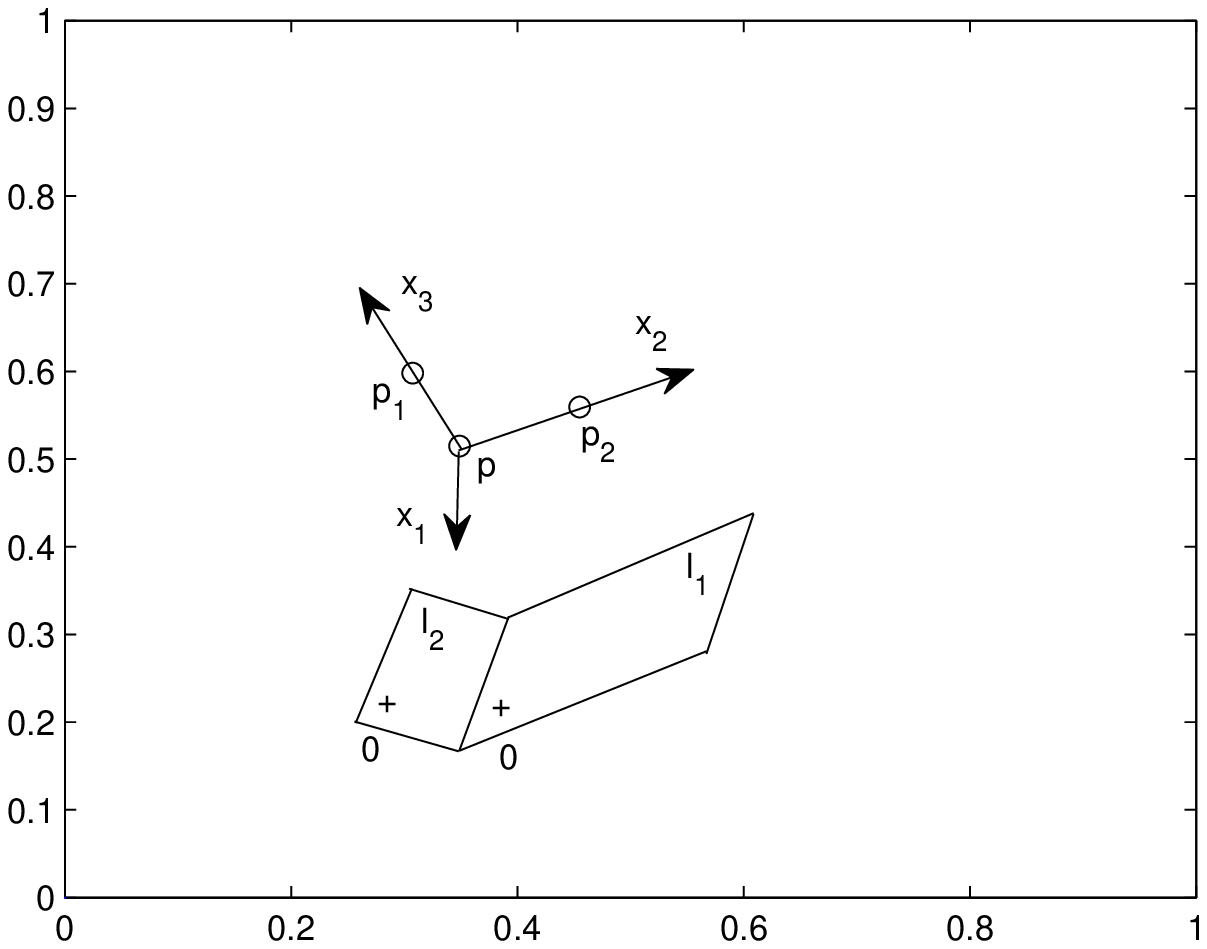}} %{4.5cm 3.3cm 4.5cm 3cm}
\caption{Minor-feature space.}
\label{Fig.1}
\end{figure}

\noindent
\textbf{Example.} Figure \ref{Fig.1} is an example of network $3^{(1)}1^{(1)}$. As shown in Figure \ref{Fig.1}a, dimensions $x_1$, $x_2$ and $x_3$ represent a coordinate system of the three-dimensional input space; points $p_1$ and $p_2$ are in the neighborhood of $p$; plane $l$ corresponds to the only unit of the first layer of $3^{(1)}1^{(1)}$, with $l$ parallel to $x_2$ but unparallel to $x_1$ and $x_3$.

Because $x_3$ is not parallel to $l$, after passing through the first layer, the information of dimension $x_3$ could be preserved, such that the variation from $p$ to $p_1$ is reflected in the output of the network. However, to dimension $x_2$, since $x_2 \parallel l$, the difference between $p_2$ and $p_0$ would be neglected due to the reason of minor features. The minor-feature space of the map is the plane $l$ itself, whose dimensionality is 2.

If we add plane $l_2$ (with the notation of $l$ above modified to $l_1$) that is not parallel to $x_2$ as shown in Figure \ref{Fig.1}b, the information of dimension $x_2$ will be preserved by $l_2$, and the variation from $p$ to $p_2$ would not be lost as a minor feature. The minor-feature space of the modified network $3^{(1)}2^{(1)}$ is an one-dimensional line of $l_1 \cap l_2$.

\subsection{The Non-ReLU Case}
Denote the activation function of a sigmoid unit by
\begin{equation}
y = \eta(s) = \eta(\boldsymbol{w}^T\boldsymbol{x} + b),
\end{equation}
where $\eta(s) = 1/(1 + e^{-s})$, $s = \boldsymbol{w}^T\boldsymbol{x} + b$, and $\boldsymbol{x}$, $\boldsymbol{w}$ and $b$ are the input vector, weight vector and bias, respectively. Two properties of $\eta(s)$ are required for further discussions:
\begin{itemize}
\item[1.] \textbf{Quasi-zero}. The activation function satisfies $\lim_{s \to -\infty} \eta(s) = 0$.
\item[2.] \textbf{Monotonicity}. If $s_1 > s_2$, we have $\eta(s_1) > \eta(s_2)$, and so is the scaled version $\eta(as)$ with $a > 0$.
\end{itemize}

Nearly all the definitions and conclusions of this section could be generalized to the sigmoid-unit case, due to the two reasons below. The first is that the quasi-zero property of a sigmoid unit, analogous to the zero-output property of a ReLU, could lead to the generalization of definition 13 in the sense of approximation meaning, because this property could result in distinct piecewise components of a function.

The second reason is for the generalization of the remaining concepts and results except for definition 13. By equation 5.4, the output of a sigmoid unit is based on the original output $s = \boldsymbol{w}^T\boldsymbol{x} + b$ of hyperplane $\boldsymbol{w}^T\boldsymbol{x} + b = 0$. Given two points $\boldsymbol{x}_1$ and $\boldsymbol{x}_2$, let $s_1 = \boldsymbol{w}^T\boldsymbol{x}_1 + b$ and $s_2 = \boldsymbol{w}^T\boldsymbol{x}_2 + b$. If $s_1 = s_2$, the outputs of a sigmoid unit satisfy $\eta(s_1) = \eta(s_2)$, and by the monotonicity property of the activation function, if $s_1 \ne s_2$, then $\eta(s_1) \ne \eta(s_2)$, which is the rationale of the related generalizations.

Note that the second reason applies to both the tanh-unit and linear-unit cases. Thus, the model of section 5 except for definition 13 can be generalized to the tanh-unit and linear-unit networks.

\section{Explanation of Experiments}
The next two sections explain some experimental results of autoencoders based on the theories of the preceding sections. The following theorem is a preliminary to further investigations, which is trivial by the results of sections 5.
\begin{thm}
To an autoencoder $\mathcal{A}$ of equation 2.1, suppose that $\boldsymbol{x}_0$ is a point of the $m$-dimensional input space and $\boldsymbol{x}$ is obtained by perturbing $\boldsymbol{x}_0$. If the perturbation becomes a minor feature in some layer of the encoder of $\mathcal{A}$, then $\mathcal{A}$ could recover $\boldsymbol{x}_0$ from the perturbed input $\boldsymbol{x}$.
\end{thm}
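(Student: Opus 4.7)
My plan is to combine Theorem 12 (which controls the encoder) with Lemma 3 (which controls the decoder), since the statement of Theorem 13 is essentially the composition of these two facts. I expect the proof to be short, in line with the author's remark that it is ``trivial by the results of section 5''.

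First I would apply Theorem 12 to the encoder $\mathcal{E}$ of $\mathcal{A}$. By hypothesis, the perturbation $\boldsymbol{x}^{(k)} - \boldsymbol{x}_0^{(k)}$ at some layer $k$ is a minor feature of $\boldsymbol{x}_0^{(k)}$, i.e.\ it lies in the minor-feature space of the map realized by the $k{+}1$th layer. By Lemma 10 (applied to that layer) the two points collapse, $\boldsymbol{x}^{(k+1)} = \boldsymbol{x}_0^{(k+1)}$, and propagating this equality through the remaining layers as in Theorem 11 yields the identity $f_e(\boldsymbol{x}) = f_e(\boldsymbol{x}_0)$ in the encoding space. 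In the language of definition 14 this is precisely an overlapping generalization of $\boldsymbol{x}_0$ by $\boldsymbol{x}$.

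Second, I would invoke Lemma 3: because $\mathcal{A}$ is an autoencoder of equation 2.1, its encoder is bijective on the data set of interest, and the decoder $f_d$ realizes the inverse on the image $D_e$; in particular $f_d(f_e(\boldsymbol{x}_0)) = \boldsymbol{x}_0$. Composing the two observations gives
\begin{equation*}
\mathcal{A}(\boldsymbol{x}) \;=\; f_d(f_e(\boldsymbol{x})) \;=\; f_d(f_e(\boldsymbol{x}_0)) \;=\; \boldsymbol{x}_0,
\end{equation*}
which is exactly the recovery claimed.

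The only delicate point, and it is a mild one, is to ensure the bijective-map hypothesis of Lemma 3 is in force at $\boldsymbol{x}_0$; but this is built into the reference to the autoencoder architecture of equation 2.1, whose construction (sections 2 and 3) guarantees a bijective encoder on the relevant discrete data. Everything else is bookkeeping on the composition $f_d \circ f_e$, so I do not anticipate a substantive obstacle beyond stringing together the two cited results.
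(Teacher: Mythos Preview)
Your proposal is correct and matches the paper's approach: the paper's proof is a one-line citation of Theorem 12 (or its non-ReLU generalization from section 5.3), which already encapsulates the collapse $f_e(\boldsymbol{x}) = f_e(\boldsymbol{x}_0)$ that you unpack via Lemma 10 and Theorem 11. Your explicit invocation of Lemma 3 for the decoder step is a detail the paper leaves implicit in the phrase ``$\mathcal{A}$ could recover $\boldsymbol{x}_0$'', but the underlying argument is the same.
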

\begin{proof}
This conclusion is a direct consequence of theorem 12 or its generalized version of the non-ReLU case.
\end{proof}

\subsection{Linear-Unit Autoencoder}
\citet*{Cottrell1987} pointed out that linear-unit autoencoders can do a perfect job in image compression and reconstruction, even comparable with nonlinear-unit autoencoders. This phenomenon was somewhat elusive or mysterious, since it is usually thought that a linear network is unqualified for complex tasks. We give an explanation here.

\begin{prp}
Autoencoder $\mathcal{A}$ of equation 2.1 composed of linear units can reconstruct a single point $\boldsymbol{x}_0$ of the $m$-dimensional input space; and the generalization mechanism of theorem 13 applies to $\mathcal{A}$.
\end{prp}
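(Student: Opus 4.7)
The plan is to address the two assertions of the proposition separately, leveraging what has already been built up in Sections 3 and 5. For the reconstruction claim, the key observation is that when every unit is linear, the entire autoencoder computes an affine map $f:\mathbb{R}^m\to\mathbb{R}^m$, since the composition of affine maps is affine. Thus reconstructing a single point amounts to choosing the parameters so that $f(\boldsymbol{x}_0)=\boldsymbol{x}_0$. I would first apply corollary 4 to the singleton $D=\{\boldsymbol{x}_0\}$ to obtain a linear-unit encoder $f_e$ with $f_e(\boldsymbol{x}_0)=\boldsymbol{x}_e$ for some point $\boldsymbol{x}_e$ of the encoding space. I then construct a linear-unit decoder whose affine map $f_d$ satisfies $f_d(\boldsymbol{x}_e)=\boldsymbol{x}_0$; since fitting a single point imposes only $m$ scalar conditions on the many degrees of freedom of the decoder, such a choice always exists. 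The composite $f_d\circ f_e$ is the desired reconstruction map.

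For the second claim, I would appeal directly to the discussion in Section 5.3, where it is explicitly noted that every definition and conclusion of Section 5 except for definition 13 carries over to the linear-unit case, using the monotonicity of the (identity) activation in place of the ReLU's zero-output property. In particular, definition 16 of the minor-feature space and lemma 11, together with theorem 11 and theorem 12, remain valid when all the units of the encoder are linear: each layer of the encoder still performs a dimensionality reduction, and the null direction of the corresponding affine map plays the role of the minor-feature space. Since theorem 13 is merely the assembly of these ingredients into a statement about the autoencoder, it applies verbatim to the linear-unit $\mathcal{A}$ constructed above, so any perturbation of $\boldsymbol{x}_0$ that becomes a minor feature in some layer of the encoder is still neglected and $\boldsymbol{x}_0$ is recovered.

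The main obstacle is conceptual rather than computational: one must make clear that the generalization mechanism does not rely on any nonlinearity, but only on the loss of information that comes from reducing the number of units between consecutive layers. Once this is emphasized, the proof reduces to a short combination of corollary 4, the extension discussed in Section 5.3, and theorem 13 itself, with no further calculation required.
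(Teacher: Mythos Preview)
Your proposal is correct and follows essentially the same route as the paper. Both treat the single-point reconstruction as trivial---the paper simply notes that whatever the encoder does, the decoder can be built (via theorem 11 of \citet{Huang2022}) to output $\boldsymbol{x}_0$, while you argue by degrees of freedom, and your invocation of corollary 4 for the encoder is harmless but superfluous for a singleton---and both appeal to Section 5.3 to carry theorem 13 over to the linear-unit setting.
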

\begin{proof}
The compression and reconstruction of a single point $\boldsymbol{x}_0$ is trivial. Whatever the encoder is designed, we can always reconstruct $\boldsymbol{x}_0$ via the decoder constructed by theorem 11 of \citet*{Huang2022}.

To the generalization mechanism, sections 5.3 indicates that theorem 13 is applicable to linear-unit networks. In \citet*{Cottrell1987}, the point $\boldsymbol{x}_0$ is an image and a linear-unit autoencoder $\mathcal{A}_0$ was trained to reconstruct $\boldsymbol{x}_0$; when noise was added into the image $\boldsymbol{x}_0$, $\mathcal{A}_0$ could still recover $\boldsymbol{x}_0$ to some extent, and the reason can be explained by theorem 13.
\end{proof}

\subsection{Denoising Autoencoder}
This section is based on the original paper \citet*{Vincent2010} of denoising Autoencoders. We want to find the rationale behind the phenomenon of \citet*{Vincent2010}. A denoising autoencoder doesn't change the architecture of equation 2.1, but only corrupt the input $\boldsymbol{x}_0$ to be $\boldsymbol{x}'_0$ by noise. Although simple, this operation improves the performance of autoencoders \citep*{Vincent2010}.

\begin{thm}
To reconstruct $\boldsymbol{x}_0$ from a corrupted $\boldsymbol{x}'_0$ via an autoencoder $\mathcal{A}$ of equation 2.1, the encoder $\mathcal{E}$ could be designed in such a way that each layer tries to make the perturbation from noise a minor feature. In this process of dimensionality reduction, the dimensions that have physical meanings are more likely to be preserved, and the dimensions related to noise are forced to be removed in terms of minor features, through which a better lower-dimensional representation of $\boldsymbol{x}_0$ by $\mathcal{E}$ could be obtained.
\end{thm}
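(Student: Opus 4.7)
The plan is to reduce the statement to Theorem 13 and then explain, layer by layer, why the encoder has enough freedom to arrange that the noise perturbation $\boldsymbol{x}_0' - \boldsymbol{x}_0$ becomes a minor feature in some hidden layer, while the ``physical'' variations that separate distinct clean inputs cannot be absorbed as minor features. Concretely, I would let $\boldsymbol{x}_0^{(k)}$ and $\boldsymbol{x}_0'^{(k)}$ be the images of the clean and corrupted inputs after the $k$th layer of $\mathcal{E}$, and aim to show that the construction principles of Definition 16 and Lemma 11 can be invoked at each layer to steer the displacement $\boldsymbol{x}_0'^{(k-1)}-\boldsymbol{x}_0^{(k-1)}$ into the minor-feature space of that layer, after which Theorem 13 immediately yields reconstruction of $\boldsymbol{x}_0$.

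First, at the $k$th layer, which reduces dimensionality from $n_{k-1}$ to $n_k$, the hyperplanes $l_{k,1},\dots,l_{k,n_k}$ associated with its units carve out a minor-feature space of dimension $n_{k-1}-n_k$ by Proposition 6 and Definition 16. By choosing the normals $\boldsymbol{w}_{k,i}$ so that $\boldsymbol{w}_{k,i}^{T}(\boldsymbol{x}_0'^{(k-1)}-\boldsymbol{x}_0^{(k-1)}) = 0$ for every $i$ (exactly the parallelism condition of Lemma 2 and Theorem 1), the displacement becomes parallel to $\bigcap_i l_{k,i}$ and is therefore a minor feature of $\boldsymbol{x}_0^{(k-1)}$. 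Lemma 11 then guarantees that this component is neglected by the $k$th layer, so $\boldsymbol{x}_0'^{(k)}=\boldsymbol{x}_0^{(k)}$ and the remainder of $\mathcal{E}$ processes only the clean trajectory.

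Second, I would argue why the meaningful directions are necessarily retained rather than discarded. The encoder must, by hypothesis, represent distinct clean points of $D$ distinguishably in the encoding space, otherwise the decoder cannot produce the correct $\boldsymbol{x}_0$ via Lemma 3. Hence the subspace spanned by inter-sample variations of $D$ cannot sit inside any layer's minor-feature space; if it did, Lemma 10 or Lemma 11 would collapse distinct samples, breaking reconstruction. The total ``budget'' for dimensions to neglect is $\sum_{k}(n_{k-1}-n_{k}) = m - n_e$, and this budget, once committed to the noise directions, has nothing left to spend on the physical directions. So the encoder that absorbs noise as a minor feature automatically preserves the physical dimensions.

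The main obstacle I anticipate is uniformity: the argument of the first paragraph is pointwise, orienting hyperplanes against one fixed $(\boldsymbol{x}_0,\boldsymbol{x}_0')$ pair, but the claim is a \emph{design} statement that should work for the whole data set $D$ and for a population of noise realizations. The key reduction is thus to identify a low-dimensional ``noise subspace'' at each layer that contains all likely perturbations $\boldsymbol{x}_0'^{(k-1)}-\boldsymbol{x}_0^{(k-1)}$ simultaneously, and to fit its total dimension inside the cumulative budget $m-n_e$ while keeping it transverse to the inter-sample variation subspace. The theorem's phrasing ``could be designed in such a way that each layer tries to make the perturbation a minor feature'' is exactly this feasibility claim, and it rests on the empirical assumption that noise occupies far fewer effective directions than the signal; once that is assumed, the layer-by-layer construction above, together with Theorem 13, closes the proof.
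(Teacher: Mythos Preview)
Your proposal is correct and shares the paper's core move: both reduce the reconstruction claim directly to Theorem~13, so that once the noise perturbation is shown to land in some layer's minor-feature space, $\mathcal{A}$ outputs $\boldsymbol{x}_0$ automatically.

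Where you diverge is in the justification of the second half of the statement, the claim that physical dimensions are preferentially preserved. The paper's argument is a training-dynamics heuristic: without the corruption step, the encoder has no signal telling it which of the $m$ input dimensions are meaningful, so dimensionality reduction may throw away physical directions and keep noise directions; introducing noise and asking for $\boldsymbol{x}_0$ back forces the training to push the noise into the minor-feature space, and the physical directions survive as a by-product. Your argument is instead structural: distinct clean samples must remain distinguishable in the encoding layer by Lemma~3, so the inter-sample variation subspace cannot lie in any layer's minor-feature space, and your dimension-budget $\sum_k (n_{k-1}-n_k)=m-n_e$ then forces the discarded directions to be exactly the noise directions. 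Your version is more explicit and quantitative, and it makes clear what geometric assumption is doing the work (transversality of the noise subspace to the signal subspace, with the noise subspace of dimension at most $m-n_e$); the paper's version is looser but is closer to the actual denoising-autoencoder setting of \citet*{Vincent2010}, where the encoder is \emph{trained} rather than hand-constructed and one wants to explain why the loss pressure singles out noise for removal. Both arguments are acceptable at the level of rigor the paper operates at; yours would benefit from stating the transversality and dimension hypotheses explicitly rather than leaving them as the ``obstacle'' paragraph.
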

\begin{proof}
According to theorem 13, if the noise becomes a minor feature by the construction or training of $\mathcal{E}$, it would be removed by $\mathcal{E}$. The dimensions associated with the variation of noise will be neglected in terms of minor features, which may compensate for some dimensions caused by the physical change of $\boldsymbol{x}_0$.

When the input is a single point $\boldsymbol{x}_0$, we cannot use statistical methods (such as PCA) to extract its principal dimensions, and which dimension accounts for useful information is unknown. If nothing is done, the dimensionality reduction of $\mathcal{E}$ is not restricted to preserving the dimensions that have physical meanings. Thus, it is possible for $\mathcal{E}$ that the useful dimensions are dismissed, and the noise dimensions are preserved, resulting in a worse lower-dimensional representation of $\boldsymbol{x}_0$. The introduction of noise to $\boldsymbol{x}_0$ is helpful to avoid this drawback.
\end{proof}

\section{Variational Autoencoder}
This section explains the mechanism of variational autoencoders originated from \citet*{Kingma2014}. We assume that the type of unit is sigmoid or ReLU; the results not related to definition 13 can be applicable to the tanh-unit case.

Throughout the discussion, it is beneficial to bear in mind an intuitive example of the input point $\boldsymbol{x}$, that is, an image (zigzag-order version) to be represented or reconstructed by an autoencoder, such as in \citet*{Kingma2014}.

\subsection{Basic Model}
A variational autoencoder was proposed by \citet*{Kingma2014}, whose encoder can be expressed as a three-layer network
\begin{equation}
\boldsymbol{h} = \tau(\boldsymbol{w}_3\boldsymbol{x} + \boldsymbol{b}_3),
\end{equation}
\begin{equation}
\boldsymbol{\mu}_e = \boldsymbol{w}_4\boldsymbol{h} + \boldsymbol{b}_4, \boldsymbol{\sigma}_e^2 = \boldsymbol{w}_5\boldsymbol{h} + \boldsymbol{b}_5
\end{equation}
coupled with the probabilistic model of the output
\begin{equation}
\boldsymbol{x}_e \sim \mathcal{N}(\boldsymbol{\mu}_e, \boldsymbol{\Sigma}_{\boldsymbol{\sigma}_e^2}),
\end{equation}
where $\tau$ is the activation function of a unit, $\boldsymbol{w}_i$ and $\boldsymbol{b}_i$ for $i = 3, 4, 5$ are the weight and bias parameters of the encoder, $\boldsymbol{x}_e$ is the output of the encoder with $\mathcal{N}$ representing Gaussian distribution, and $\boldsymbol{\Sigma}_{\boldsymbol{\sigma}_e^2}$ is a diagonal matrix whose diagonal entries correspond to the entries of $\boldsymbol{\sigma}_e^2$ of equation 7.2, respectively (similarly for $\boldsymbol{\Sigma}_{\boldsymbol{\sigma}^2}$ below).

In Equations 8.1, $\boldsymbol{x}$ is a vector of the input space, and $\boldsymbol{h}$ a vector of the space of the hidden layer. Equation 7.2 of the output layer gives $\boldsymbol{\mu}$ and $\boldsymbol{\sigma}_e^2$, which are the mean vector and covariance vector (whose components are the diagonal entries of the covariance matrix) of the distribution of equation 7.3. The sampling of equation 7.3 yields the encoding output $\boldsymbol{x}_e$.

Note that in \citet*{Kingma2014}, the left side of the second term of equation 7.2 is $\log \boldsymbol{\sigma}_e^2$. The $\log$ operation may be useful to the learning; however, from the perspective of reconstruction capabilities, the $\log$ operation is unnecessary.

The decoder is similar to the encoder: a three-layer network
\begin{equation}
\boldsymbol{h}' = \tau(\boldsymbol{w}'_3\boldsymbol{x}_e + \boldsymbol{b}'_3),
\end{equation}
\begin{equation}
\boldsymbol{\mu} = \boldsymbol{w}'_4\boldsymbol{h}' + \boldsymbol{b}'_4, {\boldsymbol{\sigma}}^2 = \boldsymbol{w}'_5\boldsymbol{h}' + \boldsymbol{b}'_5,
\end{equation}
together with the probabilistic model
\begin{equation}
\boldsymbol{x} \sim \mathcal{N}(\boldsymbol{\mu}, \boldsymbol{\Sigma}_{\boldsymbol{\sigma}^2}),
\end{equation}
whose sampling is the final output. For an intuitive illustration of this kind of autoencoders, the reader is referred to Figure 2.2 of \citet*{Girin2020}.

\subsection{Loss Function}
In order to highlight the main factor, we'll use the quadratic loss function to describe the mechanism of variational autoencoders. Note that the loss function (estimator) of \citet*{Kingma2014} was designed on the basis of probabilistic models and one of its two terms is a maximum likelihood estimation, which is closely related to quadratic loss functions when the distribution is Gaussian.

In fact, whatever the type of the loss function is, the ultimate goal of an autoencoder is to reconstruct the input data, and the reconstruction effect would be finally reflected in the quadratic loss function.

Furthermore, to some other loss functions, our theoretical framework could be modified in relevant assumptions and proofs, such that the main conclusions are still applicable.

\subsection{Encoder-Output Disturbance}
For convenience, all the explanations are in the form of the single-point reconstruction, and the generalization to the multi-point case is trivial by the mechanism of multi-output neural networks \citep*{Huang2022}.

The following assumption is not a general conclusion that holds for all the cases, but would be true in some certain scenarios.
\begin{assm}
To reconstruct a point $\boldsymbol{x}$ of the input space by an autoencoder of equation 2.1, we assume that the local generalization leads to smaller reconstruction error than the nonlocal generalization.
\end{assm}

The lemma below shows how the random disturbance of the encoding vector influences the parameter setting of the decoder.
\begin{lem}
To the decoder $\mathcal{D}$ of a variational autoencoder $\mathcal{A}$, under assumption 1, regardless of equation 7.6, the random disturbance of the encoding vector by equation 7.3 could enforce local generalizations.
\end{lem}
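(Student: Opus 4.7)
The plan is to connect three ingredients: the stochastic nature of the encoder output $\boldsymbol{x}_e \sim \mathcal{N}(\boldsymbol{\mu}_e,\boldsymbol{\Sigma}_{\boldsymbol{\sigma}_e^2})$, the piecewise-linear structure of the decoder $\mathcal{D}$ that determines divided regions (Definition 13), and the error ordering provided by Assumption 1. The phrase ``regardless of equation 7.6'' tells me to treat the reconstruction as the deterministic map $\boldsymbol{x}_e \mapsto \boldsymbol{\mu}$ given by equations 7.4--7.5, so that the whole discussion reduces to a piecewise-linear function $g:\boldsymbol{x}_e\mapsto\boldsymbol{\mu}$ realized by $\mathcal{D}$, exactly the setting in which Definition 13 and Assumption 1 live.

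First I would fix the input $\boldsymbol{x}$ and note that, for a given setting of the encoder parameters, the vector $\boldsymbol{\mu}_e$ is a deterministic function of $\boldsymbol{x}$ while the decoder input $\boldsymbol{x}_e$ is concentrated in a neighborhood of $\boldsymbol{\mu}_e$ whose size is governed by $\boldsymbol{\sigma}_e^2$. Since the training criterion drives the reconstruction $g(\boldsymbol{x}_e)$ to be close to $\boldsymbol{x}$ in expectation over this Gaussian neighborhood, the decoder must simultaneously reconstruct $\boldsymbol{x}$ from $\boldsymbol{\mu}_e$ and from all typical perturbations $\boldsymbol{x}_e$ of $\boldsymbol{\mu}_e$. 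I would then invoke Definition 13 on the pair $(\boldsymbol{\mu}_e,\boldsymbol{x}_e)$: either $\boldsymbol{x}_e$ lies in the same divided region of $\mathcal{D}$ as $\boldsymbol{\mu}_e$ (local generalization) or it does not (nonlocal).

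Next I would apply Assumption 1 directly to these two alternatives. Among all admissible parameter settings of $\mathcal{D}$ that realize comparable behavior at $\boldsymbol{\mu}_e$ itself, those that place the entire high-probability Gaussian ball of $\boldsymbol{x}_e$ inside the same divided region as $\boldsymbol{\mu}_e$ yield strictly smaller reconstruction error than those whose divided-region boundaries cut through that ball. Because any reasonable minimization of the quadratic reconstruction loss (Section 7.2) must prefer the lower-error configuration, the training pressure exerted by the random sampling in equation 7.3 biases the decoder's hyperplane arrangement so that the neighborhood of $\boldsymbol{\mu}_e$ is not crossed by first-layer hyperplanes of $\mathcal{D}$. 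This is precisely the statement that local generalization is enforced.

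The main obstacle will be the implicit step between ``Assumption 1 ranks the two cases'' and ``the parameters are actually driven to the locally-generalizing configuration.'' Strictly speaking, this requires that the decoder has enough capacity to realize a locally-generalizing arrangement (so that it is truly an available configuration) and that the loss's minimizer tracks the Assumption-1 ordering. I would handle the capacity point by appealing to the constructive freedom established in Section 4 and Theorem 11, and handle the minimizer point by noting that any descent procedure acting on the quadratic loss with the input distribution 7.3 will migrate boundaries away from the support of $\boldsymbol{x}_e$ precisely because doing so reduces expected error, per Assumption 1. With these two observations in place, the conclusion follows.
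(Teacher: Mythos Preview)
Your proposal is correct and follows essentially the same approach as the paper: reparameterize the encoder output as $\boldsymbol{x}_e=\boldsymbol{\mu}_e+\boldsymbol{\varepsilon}$, observe that the decoder's hidden-layer hyperplanes determine the divided regions, apply Assumption~1 to rank the local versus nonlocal cases, and conclude that training reshapes those regions so that the Gaussian neighborhood of $\boldsymbol{\mu}_e$ lies inside a single region. If anything, you are more explicit than the paper about the two implicit steps (decoder capacity and why the minimizer tracks the Assumption-1 ordering), which the paper simply asserts via ``the training of $\mathcal{D}$ according to the loss function would force\ldots''.
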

\begin{proof}
Equation 7.3 is equivalent to
\begin{equation}
\boldsymbol{x}_{e} = \boldsymbol{\mu}_e + \boldsymbol{\varepsilon},
\end{equation}
where $\boldsymbol{\varepsilon} \sim \mathcal{N}(\boldsymbol{0}, \boldsymbol{\Sigma}_{\boldsymbol{\sigma}_e^2})$, and $\boldsymbol{x}_{e}$ could be regarded as a random variation from $\boldsymbol{\mu}_e$. Let $H$ be the set of the hyperplanes derived from the units of the hidden layer of $\mathcal{D}$. By corollary 12 of \citet*{Huang2022}, the output of $\mathcal{D}$ is determined by $H$ along with its output weights.

Vector $\boldsymbol{x}_e$ randomly changes in a region of the encoding space, whose size is controlled by the covariance matrix $\boldsymbol{\Sigma}_{\boldsymbol{\sigma}_e^2}$. The training of $\mathcal{D}$ according to the loss function would force the reconstruction error with respect to the random input $\boldsymbol{x}_e$ to be as small as possible. Under assumption 1, the variation of $\boldsymbol{x}_{e}$ from $ \boldsymbol{\mu}_e$ should be within a same divided region of $H$, since this could result in local generalizations and thus the reconstruction error is smaller.

For fixed $\boldsymbol{\mu}_e$ and $\boldsymbol{\Sigma}_{\boldsymbol{\sigma}_e^2}$, the updating of the parameters of $H$ due to the training could reformulate the divided regions to make the output of the variation of $\boldsymbol{x}_{e}$ a local generalization. Thus, adding noise by equation 7.3 is beneficial to the optimization in terms of local generalizations.
\end{proof}

The next lemma is for the preparation of assumption 2 later.
\begin{lem}
To a point $\boldsymbol{x}$, suppose that an autoencoder $\mathcal{A}$ of equation 2.1 has been trained to reconstruct it, and that the variation of $\boldsymbol{x} + \boldsymbol{\varepsilon}_1$ becomes a minor feature in some layer of the encoder, whereas another variation of $\boldsymbol{x} + \boldsymbol{\varepsilon}_2$ does not. Then $\boldsymbol{x} + \boldsymbol{\varepsilon}_1$ has no possibility to be reconstructed by $\mathcal{A}$, but $\boldsymbol{x} + \boldsymbol{\varepsilon}_2$ may be possible.
\end{lem}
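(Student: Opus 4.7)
The plan is to split the statement into two one-sided implications and apply Theorem 12 to each. For the $\boldsymbol{\varepsilon}_1$ half I would argue non-reconstruction by showing that the two encodings collide, so the trained decoder cannot distinguish $\boldsymbol{x}+\boldsymbol{\varepsilon}_1$ from $\boldsymbol{x}$. For the $\boldsymbol{\varepsilon}_2$ half I would argue that the encodings remain distinct, which leaves room for a decoder that maps this different encoding back to $\boldsymbol{x}+\boldsymbol{\varepsilon}_2$.

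For the first half I would apply Theorem 12 with base point $\boldsymbol{x}_0 := \boldsymbol{x}$ and neighbor $\boldsymbol{x}+\boldsymbol{\varepsilon}_1$. The hypothesis is precisely the premise of Theorem 12, so the variation is neglected: the encoder produces the same encoding vector for both inputs. The decoder is a deterministic function of its input, so the autoencoder outputs the same value for both, and that value must be $\boldsymbol{x}$ by the training assumption. Since $\boldsymbol{x}+\boldsymbol{\varepsilon}_1 \neq \boldsymbol{x}$, reconstruction of $\boldsymbol{x}+\boldsymbol{\varepsilon}_1$ is impossible, regardless of any further tuning of the decoder.

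For the second half I would translate the hypothesis layer by layer. Since $\boldsymbol{\varepsilon}_2$ fails to become a minor feature in any layer, at each layer $k$ the difference $\boldsymbol{x}^{(k)} - \boldsymbol{x}_0^{(k)}$ lies outside the minor-feature space of the layer map (Definition 16), i.e., the non-overlapping condition of Definition 15 is satisfied. Theorem 11 then yields that the final encodings of $\boldsymbol{x}$ and $\boldsymbol{x}+\boldsymbol{\varepsilon}_2$ are distinct. On the two-point set $\{\boldsymbol{x},\,\boldsymbol{x}+\boldsymbol{\varepsilon}_2\}$ the encoder is therefore bijective, so Lemma 3 guarantees the existence of an autoencoder of equation 2.1 realizing the identity on this set; in particular, reconstruction of $\boldsymbol{x}+\boldsymbol{\varepsilon}_2$ is possible in principle.

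The hard part will be the layerwise step in the second half. The hypothesis ``not a minor feature in any layer'' is a statement about the geometric structure of each layer's map, and a careless reading could admit collisions coming from the ReLU zero region rather than from the minor-feature mechanism itself. I would address this by restricting attention to the positive-output region $\prod_i l_i^+$ used throughout Sections 2 and 3, so that near the base point each layer acts as an affine transform and the chain of non-overlapping conditions feeding Theorem 11 applies without additional caveats. Once this is secured, invoking Lemma 3 closes the second half and the two implications together give the stated dichotomy.
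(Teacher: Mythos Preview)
Your proposal is correct and follows essentially the same route as the paper: for $\boldsymbol{\varepsilon}_1$ you use the minor-feature mechanism (the paper phrases this as ``an overlapping map in the encoding space'') to force the decoder output to be $\boldsymbol{x}$, and for $\boldsymbol{\varepsilon}_2$ you argue the encodings differ so reconstruction is not obstructed. Your treatment is in fact more explicit than the paper's two-sentence proof; the only mild overreach is invoking Lemma~3 at the end, since the paper merely asserts ``it is possible that $\mathcal{A}$ outputs $\boldsymbol{x}+\boldsymbol{\varepsilon}_2$'' for the given trained $\mathcal{A}$ rather than constructing a new decoder, but this matches the soft ``may be possible'' in the statement.
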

\begin{proof}
Compared to the original input $\boldsymbol{x}$, the perturbed $\boldsymbol{x} + \boldsymbol{\varepsilon}_1$ yields an overlapping map in the encoding space, due to the property of minor features; thus, the output of $\mathcal{A}$ with respect to $\boldsymbol{x} + \boldsymbol{\varepsilon}_1$ is $\boldsymbol{x}$, not $\boldsymbol{x} + \boldsymbol{\varepsilon}_1$. Let $\boldsymbol{x}_e$ be the encoding vector of input $\boldsymbol{x}$. Because  point $\boldsymbol{x} + \boldsymbol{\varepsilon}_2$ could result in a variation of $\boldsymbol{x}_e$ in the encoding space, it is possible that $\mathcal{A}$ outputs $\boldsymbol{x} + \boldsymbol{\varepsilon}_2$.
\end{proof}

The assumption that follows is natural for the usefulness of autoencoders, but difficult to be proved at present. The previous lemma 13 provides some reasonableness for it in terms of rigorous proofs.
\begin{assm}
Suppose that an autoencoder $\mathcal{A}$ of equation 2.1 has been trained or constructed to reconstruct an input $\boldsymbol{x}$. If the variation of the input $\boldsymbol{x} + \boldsymbol{\varepsilon}$ doesn't become a minor feature in any layer of the encoder, we assume that the output of $\mathcal{A}$ with respect to $\boldsymbol{x} + \boldsymbol{\varepsilon}$ could be $\boldsymbol{x} + \boldsymbol{\varepsilon}$.
\end{assm}

The following assumption is established due to the same reason as assumption 1.
\begin{assm}
To an autoencoder of equation 2.1, in comparison with an input $\boldsymbol{x}$ of the input space, under some metric, we assume that the error $\|\boldsymbol{\varepsilon}\| = \|(\boldsymbol{x} + \boldsymbol{\varepsilon}) - \boldsymbol{x}\|$ of corrupted $\boldsymbol{x} + \boldsymbol{\varepsilon}$ by noise is larger than $\|\Delta \boldsymbol{x}\|$ of a smoothly changed $\boldsymbol{x} + \Delta \boldsymbol{x}$, whose variation is smaller due to physical background reasons.
\end{assm}

\noindent
\textbf{Example.} Assume that the input point $\boldsymbol{x}$ is an image of a boy's face. If the boy smiles, the image also changes; this is a minor smooth variation that has physical meaning. And if the image is corrupted by salt-and-pepper noise globally, the caused error could be larger than that of the smiled case.

\begin{lem}
Under assumptions 2 and 3, regardless of equation 7.6, the random disturbance of the encoding vector of equation 7.3 could make the dimensions of the encoding space correspond to smooth variations of the input $\boldsymbol{x}$, rather than noise variations. That is, the dimensions for smooth variations tend to be preserved and the dimensions for noise variations are more likely to be dismissed.
\end{lem}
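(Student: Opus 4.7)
The plan is to trace how the random disturbance in the encoding space, pulled back along the encoder/decoder, shapes which input-space directions survive compression, and to conclude by invoking Assumptions 2 and 3 together with the quadratic loss penalty.

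First, I would fix notation and localize the argument. Using the reparameterization $\boldsymbol{x}_e = \boldsymbol{\mu}_e + \boldsymbol{\varepsilon}'$ with $\boldsymbol{\varepsilon}' \sim \mathcal{N}(\boldsymbol{0},\boldsymbol{\Sigma}_{\boldsymbol{\sigma}_e^2})$ from equation 7.7, I would examine the effect of a small displacement along each coordinate axis $\boldsymbol{e}_k$ of the encoding space. By Lemma 12, training in the presence of this random disturbance enforces local generalization, so around $\boldsymbol{\mu}_e$ the decoder behaves as an affine map on a single divided region and the image of each $\boldsymbol{e}_k$ under the decoder can be identified with a specific input-space direction. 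Dually, on the encoder side, an input variation $\boldsymbol{\delta}$ either collapses as a minor feature in some hidden layer, in which case it does not register in the encoding vector, or it is preserved, in which case it shifts some coordinates of $\boldsymbol{\mu}_e$.

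Second, I would split the admissible input perturbations of $\boldsymbol{x}$ into smooth variations $\Delta\boldsymbol{x}$ (of small norm) and noise variations $\boldsymbol{\varepsilon}$ (of larger norm), and apply Assumption 3 to compare their magnitudes under the reconstruction metric. Suppose that dimension $k$ of the encoding space encodes a noise-type input variation: then the random disturbance $\boldsymbol{\varepsilon}'$ along $\boldsymbol{e}_k$, once decoded, corresponds (by Assumption 2, since by hypothesis this variation is not a minor feature anywhere in the encoder) to an output of the form $\boldsymbol{x}+\boldsymbol{\varepsilon}$, contributing a comparatively large squared error $\|\boldsymbol{\varepsilon}\|^2$ relative to the target $\boldsymbol{x}$. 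If instead dimension $k$ encodes a smooth variation, the decoded output is $\boldsymbol{x}+\Delta\boldsymbol{x}$ and the incurred error $\|\Delta\boldsymbol{x}\|^2$ is strictly smaller by Assumption 3.

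Third, I would close the argument by a training-objective comparison. Since the random disturbance is injected at every training step, the expected quadratic loss aggregates the squared contributions over all encoding coordinates weighted by $\boldsymbol{\sigma}_e^2$; consequently, encoding dimensions that align with noise directions are penalized more heavily than those aligning with smooth directions. Optimization therefore reassigns the coordinate axes of the encoding space so that noise variations are pushed into minor-feature subspaces of intermediate layers (and thus dismissed in the sense of Theorem 13), while smoothly varying directions are retained, which is exactly the claim.

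The principal obstacle is the identification between an encoding-space axis and a well-defined input-space direction, because the encoder is nonlinear and the correspondence is only locally meaningful. I would handle this by working on one divided region around $\boldsymbol{\mu}_e$ where Lemma 12's local-generalization conclusion makes the encoder/decoder act affinely, and by noting that the quadratic-loss comparison depends only on norms of the resulting input-space variations, not on a global linear identification.
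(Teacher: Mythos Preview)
Your argument follows essentially the same route as the paper's: contrast the case where encoding dimensions align with noise-type input variations (decoder then outputs $\boldsymbol{x}+\boldsymbol{\varepsilon}$, larger error by Assumption~3) against the case where they align with smooth variations (decoder outputs $\boldsymbol{x}+\Delta\boldsymbol{x}$, smaller error), and conclude that the training objective drives the encoder toward the latter configuration. The paper's proof states this contrast as a direct two-scenario comparison, without your per-coordinate decomposition or the $\boldsymbol{\sigma}_e^2$-weighted expectation, but the logical core is the same.

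One point to flag: your detour through Lemma~12 to obtain a locally affine decoder is both unnecessary and formally out of bounds here. Lemma~12 rests on Assumption~1, which is \emph{not} among the hypotheses of the present lemma (only Assumptions~2 and~3 are given). The paper sidesteps this entirely by arguing straight from Assumption~2: once you stipulate which input directions the encoder preserves, the encoding disturbance corresponds to an input of the form $\boldsymbol{x}+\boldsymbol{\varepsilon}'$ (or $\boldsymbol{x}+\Delta\boldsymbol{x}$), and Assumption~2 then says the autoencoder reproduces that input at the output---no local linearization of the decoder is needed. You can simply delete the Lemma~12 step and your identification of encoding axes with input-space directions still stands via the encoder side of your dichotomy (minor feature vs.\ preserved), leaving the argument valid under only Assumptions~2 and~3.
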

\begin{proof}
During dimensionality reduction of the encoder, assume that the dimensions (collectively denoted by set $N$) for noise $\boldsymbol{\varepsilon}$ are preserved, while those (denoted by set $S$) for smooth variation $\Delta \boldsymbol{x}$ are neglected. Then the dimensions of the encoding space correspond to noise variation $\boldsymbol{\varepsilon}'$ of the input space.

The disturbed encoding vector $\boldsymbol{x}_e$ of equation 7.3 has its associated output and input of the autoencoder. Without using random disturbance, to produce the varied $\boldsymbol{x}_e$ by the encoder, the input point should be in the form of $\boldsymbol{x} + \boldsymbol{\varepsilon}'$. By assumption 2, correspondingly, the output of the decoder is also the noisy version $\boldsymbol{x} + \boldsymbol{\varepsilon}'$. But in this case, by assumption 3, the reconstruction error would be larger and the training process would tend to change this situation to render the error smaller.

When the encoder preserves the dimensions of $S$, by the similar analysis as above, the decoder would output the smooth version $\boldsymbol{x} + \Delta \boldsymbol{x}$, whose reconstruction error is smaller. This is the state that the training process more likely reaches, due to the optimization for the loss function.
\end{proof}

\begin{rmk}
This effect is similar to that of a denoising autoencoder of theorem 14.
\end{rmk}

\begin{thm}
Under assumptions 1, 2, and 3, regardless of equation 7.6, the random disturbance of the encoding vector of equation 7.3 could influence the training of a variational autoencoder in two ways. One is for the decoder to enforce local generalizations. The other is to make the encoder tend to preserve the smooth-variation dimensions, rather than the noise-variation ones.
\end{thm}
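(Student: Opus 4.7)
The plan is to observe that Theorem 16 is essentially a packaging statement: each of its two conclusions is exactly the content of a previously established lemma, and the only work is to verify that the hypotheses match and that the two mechanisms can coexist in the same training process. So I would not introduce new machinery; instead I would argue that the random disturbance of the encoding vector in equation 7.3 simultaneously triggers two distinct optimization pressures, one acting on the decoder and one acting on the encoder, and then cite the appropriate lemma for each.

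First I would handle the decoder conclusion. Here the relevant hypothesis is Assumption 1 (local generalization yields smaller reconstruction error than nonlocal generalization). Rewriting equation 7.3 in the additive form $\boldsymbol{x}_e = \boldsymbol{\mu}_e + \boldsymbol{\varepsilon}$ with $\boldsymbol{\varepsilon} \sim \mathcal{N}(\boldsymbol{0}, \boldsymbol{\Sigma}_{\boldsymbol{\sigma}_e^2})$, the encoding input to the decoder fluctuates randomly in a neighborhood of $\boldsymbol{\mu}_e$; minimizing the reconstruction loss therefore pushes the hyperplane configuration of the decoder's first hidden layer so that this neighborhood lies inside a single divided region. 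This is precisely the content of Lemma 12, so the first conclusion follows by direct citation.

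Second I would handle the encoder conclusion. The hypotheses here are Assumptions 2 and 3: that a perturbation which fails to become a minor feature in any layer of the encoder will be passed through to the output, and that the noise-type error $\|\boldsymbol{\varepsilon}\|$ exceeds the smooth-variation error $\|\Delta \boldsymbol{x}\|$. Under the same disturbance $\boldsymbol{x}_e = \boldsymbol{\mu}_e + \boldsymbol{\varepsilon}$, whichever dimensions the encoder has chosen to preserve will be the dimensions along which the decoder's output varies. If the preserved dimensions correspond to noise, Assumption 2 forces the output to carry the noise through and Assumption 3 then makes the reconstruction loss larger than if smooth-variation dimensions had been preserved instead; gradient descent on the loss therefore favors the latter configuration. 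This is exactly Lemma 14, so the second conclusion follows.

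The only real subtlety, and the part I would treat most carefully, is the compatibility of the two mechanisms: the first argument treats $\boldsymbol{\Sigma}_{\boldsymbol{\sigma}_e^2}$ as fixed while tuning the decoder's hyperplanes, whereas the second treats the decoder as fixed while tuning which dimensions the encoder preserves. I would point out that during actual training both sets of parameters are updated jointly, so the two effects are complementary rather than contradictory: the decoder reshapes its divided regions so the disturbance stays local, and in parallel the encoder selects the dimensions along which that local disturbance is most tolerable in the loss, which by Assumption 3 are the smooth-variation dimensions. Combining Lemmas 12 and 14 under this joint interpretation yields the theorem; the qualifier ``regardless of equation 7.6'' simply reminds the reader that the argument depends only on the encoder-side random disturbance of equation 7.3, not on the decoder-side sampling.
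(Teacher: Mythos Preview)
Your proposal is correct and matches the paper's own proof, which is literally the single sentence ``This theorem is the combination of lemmas 12 and 14.'' Your additional paragraph on the compatibility of the two mechanisms is a nice elaboration but is not required by the statement (which only asserts that the disturbance \emph{could} influence training in each of the two ways, not that the two effects interact in any particular fashion), so you may safely omit it.
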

\begin{proof}
This theorem is the combination of lemmas 12 and 14.
\end{proof}

\subsection{Decoder-Output Disturbance}
\begin{lem}
Under assumption 1, regardless of equation 7.3, the training of a variational autoencoder tends to make the variation of the output of its decoder $\mathcal{D}$, which is due to the disturbance of equation 7.6, to form a local generalization.
\end{lem}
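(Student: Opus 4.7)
The plan is to mirror the argument of Lemma 12, transferring it from the decoder's input side to its output side. First I would rewrite equation 7.6 via the reparametrization trick as $\boldsymbol{x} = \boldsymbol{\mu} + \boldsymbol{\varepsilon}'$ with $\boldsymbol{\varepsilon}' \sim \mathcal{N}(\boldsymbol{0}, \boldsymbol{\Sigma}_{\boldsymbol{\sigma}^2})$, so that the final output of the decoder $\mathcal{D}$ is a random perturbation of the mean $\boldsymbol{\mu}$ whose scale is controlled by $\boldsymbol{\sigma}^2$. Since the statement tells us to disregard equation 7.3, I may fix the encoding $\boldsymbol{x}_e$, which freezes $\boldsymbol{\mu}$ and $\boldsymbol{\sigma}^2$; only $\boldsymbol{\varepsilon}' = \boldsymbol{x} - \boldsymbol{\mu}$ varies, and it is precisely this variation whose nature (local vs.\ nonlocal generalization) we need to control.

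Next I would identify the piecewise-linear structure at stake. Let $H$ be the set of hyperplanes from the hidden layer of $\mathcal{D}$, partitioning the encoding space into divided regions; within the region containing $\boldsymbol{x}_e$, the map $\boldsymbol{x}_e \mapsto \boldsymbol{\mu}$ acts as a single affine transformation $A \boldsymbol{x}_e + \boldsymbol{c}$. The output perturbation $\boldsymbol{\varepsilon}'$ can therefore be pulled back to an effective input perturbation $\boldsymbol{\delta}$ of the encoding satisfying $A(\boldsymbol{x}_e + \boldsymbol{\delta}) + \boldsymbol{c} = \boldsymbol{\mu} + \boldsymbol{\varepsilon}'$. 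This reduces the present situation to the one already handled in Lemma 12: the question of whether the output variation is a local generalization becomes the question of whether the induced $\boldsymbol{\delta}$ keeps $\boldsymbol{x}_e + \boldsymbol{\delta}$ inside the same divided region of $H$ as $\boldsymbol{x}_e$.

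Then, invoking Assumption 1 exactly as in Lemma 12, the reconstruction error is smaller when this variation forms a local generalization. Because the training of the variational autoencoder optimizes the reconstruction loss with respect to the randomly disturbed output, it will update the parameters of the hyperplanes in $H$ so that the divided regions are enlarged or repositioned around $\boldsymbol{x}_e$, making it unlikely that the induced $\boldsymbol{\delta}$ crosses a boundary for typical realizations of $\boldsymbol{\varepsilon}'$. This yields the conclusion of the lemma and is the natural dual counterpart to Lemma 12 that I expect to be paired with it in the next theorem.

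The main obstacle is conceptual rather than computational: Definition 13 formulates local generalization in the input space of a network, whereas here the disturbance sits on the output side. The cleanest way around this is to stay inside one affine piece of $\mathcal{D}$, where $A$ has a well-defined range and the pull-back of $\boldsymbol{\varepsilon}'$ to the encoding space makes sense. If $A$ is rank-deficient, some output directions are unreachable by any input perturbation and must be treated as extrinsic noise outside the generalization framework; in that case the argument would be restricted to the range of $A$, which suffices for the intended conclusion without damaging it.
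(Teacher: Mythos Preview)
Your proposal is correct and follows essentially the same route as the paper: the paper's proof simply says ``similar to lemma 12,'' notes that the output of $\mathcal{D}$ is now perturbed directly, and observes that if the corresponding point of the decoder's input space lies in the same divided region of $H$ then Assumption~1 gives a smaller reconstruction error. Your explicit pull-back of $\boldsymbol{\varepsilon}'$ to an encoding perturbation $\boldsymbol{\delta}$ via the local affine piece $A$ is just a more detailed rendering of the paper's phrase ``the corresponding point of its input space,'' and your caveat about rank-deficient $A$ is a reasonable technical refinement the paper leaves implicit.
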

\begin{proof}
The proof is similar to that of lemma 12. In this case, the output of $\mathcal{D}$ is directly perturbed, instead of being caused by the disturbance of encoding vectors. To a variation of the output of $\mathcal{D}$, if the corresponding point of its input space is within a divided region of the hyperplanes of the hidden layer, it is a local generalization and the associated reconstruction error would be smaller, according to assumption 1.
\end{proof}

\begin{lem}
Under assumptions 2 and 3, regardless of equation 7.3, the disturbance of the decoder output of equation 7.6 could make the variations of the reconstructed data correspond to the smooth changes of the input $\boldsymbol{x}$.
\end{lem}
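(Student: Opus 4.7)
The plan is to mirror the proof of lemma 14 with the random disturbance transferred from the encoding side to the reconstruction side. Equation 7.6 is equivalent to writing the final output as $\boldsymbol{x}_{r} = \boldsymbol{\mu} + \boldsymbol{\varepsilon}''$, where $\boldsymbol{\varepsilon}'' \sim \mathcal{N}(\boldsymbol{0}, \boldsymbol{\Sigma}_{\boldsymbol{\sigma}^2})$ is a random perturbation of the deterministic decoder output $\boldsymbol{\mu}$. I would then perform a case split on what the dimensions excited by $\boldsymbol{\varepsilon}''$ correspond to relative to the target input $\boldsymbol{x}$, exactly paralleling the encoder-side case split in lemma 14.

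First, suppose the trained decoder is in a state where $\boldsymbol{\varepsilon}''$ is concentrated along output directions that, relative to the target $\boldsymbol{x}$, look like noise-type variations; then the sampled reconstruction has the form $\boldsymbol{x} + \boldsymbol{\varepsilon}'$ for a noise-type perturbation, and by assumption 3 the quadratic discrepancy to $\boldsymbol{x}$ carries the larger norm $\|\boldsymbol{\varepsilon}'\|$. Second, consider the complementary configuration in which the output-noise directions correspond to smooth-variation directions $\Delta \boldsymbol{x}$; now the samples are of the form $\boldsymbol{x} + \Delta \boldsymbol{x}$ with the strictly smaller norm $\|\Delta \boldsymbol{x}\|$ of assumption 3. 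Comparing the two configurations, loss minimization during training would prefer the second, pushing the parameters $\boldsymbol{w}_5', \boldsymbol{b}_5'$ that govern $\boldsymbol{\Sigma}_{\boldsymbol{\sigma}^2}$ (and the upstream decoder weights producing $\boldsymbol{\mu}$) toward routing the disturbance into smooth-variation dimensions.

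The role of assumption 2 is to rule out a degenerate escape route: without it, the training could in principle make $\boldsymbol{\varepsilon}''$ vanish from the loss by arranging the decoder so that the perturbation is immediately absorbed as a minor feature. Assumption 2 asserts that a non-minor-feature perturbation genuinely survives to produce a distinct reconstruction, so that the comparison between the noise-type and smooth-type configurations above actually governs the loss landscape. Combining the two subcases gives the claim.

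The main obstacle is making the phrase ``correspond to smooth changes of the input'' mathematically precise when the decoder is a nonlinear piecewise-linear map without a canonical directional pullback. My intended workaround is to localize: restrict attention to a single divided region of the decoder's hyperplane arrangement, where the decoder acts as an affine map and the identification between output-space and input-space directions becomes well defined. This local reduction is exactly the implicit move that lemma 14 relies on, and it keeps the present proof a direct analogue rather than requiring a new mechanism.
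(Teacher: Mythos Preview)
Your core argument matches the paper's proof: a case split between noise-type and smooth-type output perturbations, with assumption 3 supplying the norm comparison and loss minimization driving the training toward the smooth configuration. The paper's own proof is a single sentence that invokes assumptions 2 and 3 together without separating their roles, so your write-up is considerably more detailed than what the paper actually presents.

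One point to tighten: your proposed role for assumption 2 does not fit here. The disturbance $\boldsymbol{\varepsilon}''$ in equation 7.6 is added \emph{after} the final decoder layer, so there is no subsequent network layer in which it could be ``absorbed as a minor feature''; the minor-feature mechanism of definition 16 and theorem 12 concerns variations that are collapsed by an intermediate layer, which cannot happen to a perturbation applied at the output. The paper itself simply cites assumptions 2 and 3 jointly without articulating a separate job for assumption 2 in this lemma, so you are not missing a needed ingredient---you are just over-explaining one that the paper leaves vague. Likewise, your final paragraph on localizing to a single divided region is reasonable scaffolding, but the paper does not include or require it; its proof operates at the same informal level as lemma 14 without making the direction correspondence precise.
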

\begin{proof}
If the disturbance of equation 7.6 results in noisy reconstruction, by assumptions 2 and 3, the loss function would have a larger value and the training would force the variations changed to be smooth ones.
\end{proof}

\begin{thm}
Under assumptions 1, 2, and 3, regardless of equation 7.3, there are two ways for the disturbance of equation 7.6 to influence the training of a variational autoencoder. The first is to improve the optimization via local generalizations. The second is to output smooth-variation reconstructed data.
\end{thm}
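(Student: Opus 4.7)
The plan is to mirror the structure of Theorem 15, which combined Lemmas 12 and 14 to handle the two parallel effects of the encoder-output disturbance. Here the disturbance in question is the one imposed on the decoder output via equation 7.6 rather than on the encoding vector via equation 7.3, but Lemmas 15 and 16 have already isolated exactly the two effects the theorem claims, so the proof should be essentially a one-line appeal to those two lemmas.

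More concretely, I would first invoke Lemma 15 to obtain the first conclusion: under Assumption 1, the training of the variational autoencoder tends to place the perturbation of the decoder output (caused by sampling from $\mathcal{N}(\boldsymbol{\mu},\boldsymbol{\Sigma}_{\boldsymbol{\sigma}^2})$ in equation 7.6) inside a single divided region of the hidden-layer hyperplanes of $\mathcal{D}$, which is precisely the local-generalization effect. I would then invoke Lemma 16 to obtain the second conclusion: under Assumptions 2 and 3, if the decoder-output disturbance drove the reconstruction in a noise-like direction, the quadratic loss would be larger, so the optimization pressure forces these disturbances to correspond to smooth variations of $\boldsymbol{x}$. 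The ``regardless of equation 7.3'' qualifier in the theorem matches the hypotheses of both lemmas, which were set up precisely to isolate the effect of equation 7.6 from that of equation 7.3.

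Since both ingredients are already in place and were deliberately stated in the form that the theorem assembles, there is no real obstacle and no new calculation is required; the only subtlety is being explicit that Assumptions 1, 2, and 3 are used in exactly the same places as in the proofs of Lemmas 15 and 16, and that the ``two ways'' stated in the theorem correspond one-to-one with the conclusions of those two lemmas. The proof therefore reduces to: \emph{apply Lemma 15 for the local-generalization statement and Lemma 16 for the smooth-variation statement, and the theorem follows}.
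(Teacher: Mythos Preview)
Your proposal is correct and matches the paper's own proof essentially verbatim: the paper simply states ``This is the summary of lemmas 15 and 16.'' Your observation that Theorem 16 parallels Theorem 15 (which combined Lemmas 12 and 14) is exactly the intended structure, and no additional argument is needed.
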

\begin{proof}
This is the summary of lemmas 15 and 16.
\end{proof}

\subsection{Mechanism of Image Restoration}
During the investigation of denosing autoencoders and variational encoders, the mechanism of image restoration via autoencoders is in fact revealed.

\begin{prp}
Suppose that an autoencoder $\mathcal{A}$ of equation 2.1 has been trained to reconstruct an image $\mathcal{I}$. If the input to $\mathcal{A}$ is a degraded version $\mathcal{I}'$ and the original $\mathcal{I}$ can be restored by $\mathcal{A}$, one reason is that the degradation is neglected by the encoder in terms of minor features.
\end{prp}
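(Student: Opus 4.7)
The plan is to recognize that this proposition is, at its heart, a direct specialization of theorem 13 to the image-restoration setting, where the "perturbation" is precisely the image degradation. Concretely, I would model the degraded image $\mathcal{I}'$ as a displacement of the original in the $m$-dimensional input space by writing $\mathcal{I}' = \mathcal{I} + \boldsymbol{\varepsilon}$, where $\boldsymbol{\varepsilon}$ is the vector encoding the degradation (additive noise, blur residual, occlusion pattern, compression artifact, etc.). With the identification $\boldsymbol{x}_0 = \mathcal{I}$ and $\boldsymbol{x} = \mathcal{I}'$, the setup matches the hypotheses of theorem 13 verbatim.

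Next I would trace through the mechanism so that the invocation of theorem 13 is not opaque. By hypothesis $\mathcal{A}$ has been trained to reconstruct $\mathcal{I}$, so feeding $\mathcal{I}$ through the encoder produces some encoding vector $\boldsymbol{x}_e$ from which the decoder outputs $\mathcal{I}$. If, in some layer $k$ of the encoder, the difference $\boldsymbol{x}^{(k)} - \boldsymbol{x}_0^{(k)}$ between the layer-$k$ images of $\mathcal{I}'$ and $\mathcal{I}$ lies in the minor-feature space of the layer-$(k{+}1)$ map, then by lemma 11 this variation is annihilated by the subsequent map; by theorem 12 the two inputs share the same encoding $\boldsymbol{x}_e$; and by theorem 13 the decoder therefore outputs the same reconstruction, namely $\mathcal{I}$. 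This is exactly the claim that the degradation has been neglected in terms of minor features.

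Finally, I would stress that the proposition asserts only sufficiency ("\emph{one reason} is that\ldots"), so I do not need to rule out other restoration mechanisms. Hence no converse is required, and the proof reduces to the identification of the degradation vector with the perturbation $\boldsymbol{\varepsilon}$ of theorem 13. The main obstacle is expository rather than technical: I would want to make explicit that the minor-feature collapse provides a \emph{sufficient} explanation consistent with the denoising-autoencoder mechanism of theorem 14 and with the encoder-disturbance analysis of lemma 14, where training is seen to push nuisance variations (noise, irrelevant corruptions) into the minor-feature subspace while preserving physically meaningful directions. Once that conceptual alignment is in place, the formal proof is a one-line citation of theorem 13.
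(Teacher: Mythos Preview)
Your proposal is correct and matches the paper's own proof, which is literally a one-line citation of theorem 13 (``This conclusion is a special case of theorem 13''). Your additional unpacking via lemma 11 and theorem 12 is sound and merely elaborates the same mechanism the paper leaves implicit.
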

\begin{proof}
This conclusion is a special case of theorem 13.
\end{proof}

\begin{rmk}
There are many other image restoration methods, including deconvolution \citep*{Banham1997}, iteration \citep*{Biemond1990}, partial differential equations \citep*{Rudin1992}, and so on \citep*{Gonzalez(2008)}. The autoencoder way is task dependent and not applicable to general images. However, the main application of autoencoders is not image restoration, but the lower-dimensional representation of an image for further uses \citep*{Bengio2006}.
\end{rmk}

\subsection{Summary}
Since it has been known for a long time that a three-layer network composed of sigmoid units is a universal approximator {\citep*{Hornik1989,Cybenko1989,Hecht-Nielsen1989}}, it's natural to use it to construct an autoencoder, which had been done by \citet*{Kramer1991}.

However, it's nontrivial that the introduction of a probabilistic model could result in the benefits as the discussion of variational autoencoders and denoising autoencoders above, despite the preceding work of \citet*{Bishop1995}. A comment on variational autoencoders by \citet*{Goodfellow2016} is ``elegant, theoretically pleasing, and simple to implement'' .

For experiments of variational autoencoders, Figure 4 of \citet*{Kingma2014} is a good example. In Figure 4a, the two-dimensional encoding space grasped two smooth variations of a face, corresponding to the rotation and emotion, respectively, which have clear physical meaning. Figure 4b provided another example, which also obtained a meaningful lower-dimensional representation of some handwritten characters. They both demonstrated the excellent performance of variational autoencoders in lower-dimensional representation of data.

\section{Discussion}
The solutions of an autoencoder constructed by this paper may only be some special cases of the solution space; however, the underlying principles may be the rules that govern applications, such as the properties of bijective maps and data disentangling of the encoder. More solutions should be explored under this framework, until the one that the training method reaches is found or verified.

The most interesting or useful property of autoencoders is the lower-dimensional representation of higher-dimensional data, which needs further study on more specific problems. Our explanation of denoising autoencoders and variational autoencoders is only a general framework.

A convolutional neural network could be interpreted from the perspective of autoencoders to some extent, which may enrich our knowledge on this type of architectures. The comparison to decision trees and PCA suggests the advantage of deep learning over some other machine learning methods. We hope that this paper would pave the way for the understanding of neural networks associated with autoencoders.

\bibliographystyle{APA}

\end{document}